\documentclass{siamart0516}



\usepackage{algorithm}
\usepackage{algorithmic}

\usepackage{subfig}
\usepackage{subfloat}
\usepackage{graphicx}
\usepackage{amsfonts,amsmath,yfonts}
\usepackage{amssymb}
\usepackage{wrapfig}
\usepackage{color}
\usepackage[bordercolor=white,backgroundcolor=gray!30,linecolor=black,colorinlistoftodos]{todonotes}
\usepackage{hyperref}
\usepackage{MnSymbol}

\newcommand{\field}[1]{\mathbb{#1}}

\makeatletter

\makeatother
\newcounter{parentnumber}

\newtheorem{thm}{Theorem}[section]
\newtheorem{rem}[thm]{Remark}

\usepackage{epsfig}

  \newcommand{\bi}{\begin{itemize}}
  \newcommand{\ei}{\end{itemize}}
  \newcommand{\be}{\begin{enumerate}}
  \newcommand{\ee}{\end{enumerate}}
  \newcommand{\beq}{\begin{equation}}
  \newcommand{\eeq}{\end{equation}}
  \newcommand{\beqa}{\begin{eqnarray}}
  \newcommand{\eeqa}{\end{eqnarray}}

\numberwithin{equation}{section}



\newcommand{\TheTitle}{Guidefill:  GPU Accelerated, Artist Guided Geometric Inpainting for 3D Conversion of Film} 
\newcommand{\TheAbbreviatedTitle}{Guidefill} 
\newcommand{\TheAuthors}{L. Robert Hocking, Russell MacKenzie, and Carola-Bibiane Sch\"onlieb}

\headers{\TheAbbreviatedTitle}{\TheAuthors}

\title{{\TheTitle}\thanks{Submitted to the editors \today.
\funding{This work was supported by the Cambridge Commonwealth Trust and the Cambridge Center for Analysis.}}}

\author{
  L. Robert Hocking\thanks{Department of Applied Mathematics and Theoretical Physics, University of Cambridge
    (\email{lrh30@cam.ac.uk}).}
    \and
  Russell MacKenzie\thanks{Gener8 Media corp, (\email{russell@gener8.com}).}
  \and
  Carola-Bibiane Sch\"onlieb\thanks{Department of Applied Mathematics and Theoretical Physics, University of Cambridge, (\email{cbs31@cam.ac.uk}).}
}

\usepackage{amsopn}


\externaldocument{SIAM_Revised_supplement}
\ifpdf
\hypersetup{
  pdftitle={\TheTitle},
  pdfauthor={\TheAuthors}
}
\fi



\begin{document}

\maketitle

\begin{abstract}
The conversion of traditional film into stereo 3D has become an important problem in the past decade.  One of the main bottlenecks is a disocclusion step, which in commercial 3D conversion is usually done by teams of artists armed with a toolbox of inpainting algorithms.  A current difficulty in this is that most available algorithms are either too slow for interactive use, or provide no intuitive means for users to tweak the output.  

In this paper we present a new fast inpainting algorithm based on transporting along automatically detected splines, which the user may edit.  Our algorithm is implemented on the GPU and fills the inpainting domain in successive shells that adapt their shape on the fly.  In order to allocate GPU resources as efficiently as possible, we propose a parallel algorithm to track the inpainting interface as it evolves, ensuring that no resources are wasted on pixels that are not currently being worked on.  Theoretical analysis of the time and processor complexity of our algorithm without and with tracking (as well as numerous numerical experiments) demonstrate the merits of the latter.

Our transport mechanism is similar to the one used in coherence transport \cite{Marz2007,Marz2011}, but improves upon it by correcting a ``kinking'' phenomenon whereby extrapolated isophotes may bend at the boundary of the inpainting domain.  Theoretical results explaining this phenomena and its resolution are presented.

Although our method ignores texture, in many cases this is not a problem due to the thin inpainting domains in 3D conversion.  Experimental results show that our method can achieve a visual quality that is competitive with the state-of-the-art while maintaining interactive speeds and providing the user with an intuitive interface to tweak the results.
\end{abstract}

\begin{keywords}
  image processing, image inpainting, 3D conversion, PDEs, parallel algorithms, GPU
\end{keywords}

\begin{AMS}
  68U10, 68W10, 65M15 
\end{AMS}

\section{Introduction}

\noindent The increase in demand over the past decade for 3D content has resulted in the emergence of a multi-million dollar industry devoted to the conversion of 2D films into stereo 3D.  This is partly driven by the demand for 3D versions of old films, but additionally many current filmmakers are choosing to shoot in mono and convert in post production \cite{artOfSteroConv}.  Examples of recent films converted in whole or in part include Maleficent, Thor, and Guardians of the Galaxy \cite{Gener8Projects}.

Mathematically, 3D conversion amounts to constructing the image or video shot by a camera at the perturbed position $p+\delta p$ and orientation $O+\delta O$, given the footage at $(p,O)$.  

\vskip 2mm

\noindent {\bf Two Primary Conversion Pipelines. } There are essentially two pipelines for achieving this.  The first pipeline assumes that each frame of video is accompanied by a depth map (and hence is more applicable to footage from RGB-D cameras).  The new viewpoint is generated by ``warping'' the original footage based on the given depth map and known or estimated camera parameters - see \cite{OtherPipelineDoneWell} for an excellent recent overview.  This pipeline has applications including 3D TV and free-viewpoint rendering \cite{Compass,Compass2}.  However, it is not typically used in the movie industry - this is for a number of reasons (including, for example, the fact that much of what is being converted are old movies predating RGB-D cameras) - see \cite{CaseStudy3DConv, artOfSteroConv} for more details and discussion.

We focus in this paper on a second pipeline, which is of greater interest in film.  This pipeline does not assume that a depth map is given.  Instead it is based on teams of artists generating a plausible 3D model of the scene, reprojecting the original footage onto that model from a known or estimated camera position, and then rerendering the scene from a novel viewpoint.  Unlike the previous pipeline this one involves a step whereby teams of artists create masks for every relevant object in the original scene.  Crucially, these masks include occluded parts of objects - see Figure \ref{fig:pipeline}(c).  We go over this pipeline in detail in Section \ref{sec:pipeline}.

One thing both pipelines have in common is a hole-filling or disocclusion step whereby missing information in the form of geometry visible from $(p+\delta p, O+\delta O)$ but not from $(p,O)$ is ``inpainted''.  This step is considered one of the most technical and time consuming pieces of the pipeline \cite{artOfSteroConv}.  However, while the disocclusion step arising in the first pipeline has received a lot of attention in the literature, see for example \cite{OtherPipelineDoneWell,YoonEtAl,CriminisiVarient1,Compass2,Compass,Clamp,ModTelea,DepthAidedInpainingForNovelViewSynthesis,DepthGuidedInpaintingAlgorithmForFreeViewPointVideo,DepthImageBasedRenderingWithAdvancedTextureSynthesisfor3DVideo} to name a few, the disocclusion step arising in the second pipeline relevant to film has received far less attention.  We are, in fact, to the best of our knowledge the first paper to address it directly.  While related, these two disocclusion problems have important differences.  Most significantly, the fact that our pipeline comes with an explicit mask for every scene object - even occluded parts - and the fact that we have a full 3D model instead of just a single depth map from a single viewpoint, has two major consequences.  Firstly, while the methods above need to inpaint both the color information at the new view and the corresponding new depth map - we get the depth map at the new viewpoint for free.  This is important because most of the methods in the literature either devote quite a bit of effort to inpainting the depth map \cite{OtherPipelineDoneWell}, or else do so based on rough heuristics \cite{YoonEtAl,CriminisiVarient1,Compass2,Compass,Clamp,ModTelea,DepthAidedInpainingForNovelViewSynthesis,DepthGuidedInpaintingAlgorithmForFreeViewPointVideo,DepthImageBasedRenderingWithAdvancedTextureSynthesisfor3DVideo}, which, as noted in \cite[Sec. II.C.]{OtherPipelineDoneWell}, have a tendency to fail.  Secondly, these masks give an explicit segmentation of the scene into relevant objects both in the old viewpoint and the new one.  The methods in the other pipeline, by contrast, have access to neither.  This means that we unlike the above approaches always know which pixels to use for inpainting and do not have have to worry about (for example) inpainting a piece of the foreground into the background.  By contrast, all of the above methods have to rely on imperfect heuristics to guess based on the depth map which pixels belong to which object - see \cite[Sec. II.B.]{OtherPipelineDoneWell}.

Additionally, in our pipeline the inpainting is done by teams of artists armed a ``toolbox'' of inpainting algorithms.  These algorithms provide a starting point which artists may then touch up by hand.  Hence interactive speeds and the ability for the user to influence the results of inpainting, which may not have been a priority in the other pipeline, are important in ours.

\vskip 1mm

\noindent {\bf Image and Video Inpainting.}  Image inpainting refers to the filling in of a region in an image called the inpainting domain in such a way that the result looks plausible to the human eye.  Image inpainting methods can loosely be categorized as {\em exemplar-based} and {\em geometric}.  The former generally operate based on some procedure for copying patches of the undamaged portion of the image into the inpainting domain, either in a single pass from the boundary inwards as in Criminisi et al. \cite{Criminisi04regionfilling}, or iteratively as in Wexler et al. \cite{Spacetime} and Arias et al. \cite{Arias2011}.  The choice of which patch or patches to copy into a given area of the inpainting domain is decided using a nearest neighbor search based on a patch similarity metric.  Originally prohibitively expensive, a breakthrough was made in the PatchMatch algorithm \cite{Barnes2009}, which provides a fast {\em approximate} nearest neighbor search.  PatchMatch is used behind the scenes in the Photoshop's famous {\em Content-Aware Fill} tool.  On the other hand, geometric inpainting methods aim to smoothly extend image structure into the inpainting domain, typically using partial differential equations or variational principles.  Continuation may be achieved by either {\em interpolation} or {\em extrapolation}.  Examples of methods based on interpolation include the seminal work of Bertalmio et al. \cite{bertalmio2000image}, TV, TV-H$^{-1}$, Mumford-Shah, Cahn-Hilliard inpainting \cite{chan2005variational,Burger2009}, Euler's Elastica \cite{masnou1998level,chan2002euler}, as well as the joint interpolation of image values and a guiding vector field in Ballester et al. \cite{JointGreyVector}.  These approaches are typically iterative and convergence is often slow, implying that such methods are usually not suitable for real-time applications.  Telea's algorithm \cite{Telea2004} and coherence transport \cite{Marz2007,Marz2011} (which can be thought of as an improvement of the former) are based on {\em extrapolation} and visit each pixel only once, filling them in order according to their distance from the boundary of the inpainting domain. Unlike their iterative counterparts, these two methods are both very fast, but while possibly creating ``shocks'' - see Section \ref{sec:shocks}.  See also \cite{CarolaBook} for a comprehensive survey of geometric inpainting methods,  as well as \cite{ImageInpaintingOverview} for a recent survey of the field as a whole.

Geometric methods are designed to propagate structure, but fail to reproduce texture.  Similarly, exemplar-based approaches excel at reproducing texture, but are limited in terms of their ability to propagate structure.  A few attempts have been made at combining geometric and exemplar-based methods, such as Cao et al. \cite{Cao2011}, which gives impressive results but is relatively expensive.

Video inpainting adds an additional layer of complexity, because now temporal information is available, which is exploited by different algorithms in different ways.  For example, when inpainting a moving object in the foreground, one can expect to find the missing information in nearby frames - this type of strategy is utilized in for example \cite{RigRemoval}.  Another, more general strategy is to generalize exemplar-based image inpainting methods to video by replacing 2D image patches with 3D spacetime cubes.  This approach is taken in \cite{FastGeneric,Newson2014}, which also present a generalized patchmatch algorithm for video.  While producing impressive results, this method is also very expensive, both in terms of time and space complexity (see Section \ref{sec:numerical}).  Finally, the authors of \cite{PixMix} present a strategy for video inpainting of planar or almost-planar surfaces, based on inpainting a single frame and then propagating the result to neighboring frames using an estimated homography.
\vskip 2mm

\noindent {\bf Related Work.}  Our method is a (1st order) transport-based inpainting method for the disocclusion step in 3D conversion.  Here we review the related work on both aspects:

{\em I. Disocclusion Inpainting for 3D Conversion:}  Over the past decade there has been a considerable attention given in the literature to the design of algorithms for automatic or semi-automatic 3D conversion - at least for the first pipeline based on depth maps.  As we have already stated, the pipeline used in film, on which we focus in this work, has received little to no attention.  Nevertheless we review here briefly the work on this related problem.  In regards to the hole filling step, there is great variability in how it is handled.  At one extreme are cheap methods that inpaint each frame independently using very basic rules such as clamping to the color of the nearest useable pixel \cite{Clamp}, or taking a weighted average of closest useable pixels along a small number ($8-12$) of fixed directions \cite{Compass,Compass2}.  Slightly more sophisticated is the approach in \cite{ModTelea} which applies a depth-adapted variant of Telea's algorithm \cite{Telea2004}.  These methods are so basic that they do not appear to inpaint the depth map.  In the midrange you have a variety of methods based on first inpainting the depth map, and then applying a depth aided variant to Criminisi's method - examples include \cite{CriminisiVarient1, YoonEtAl,DepthAidedInpainingForNovelViewSynthesis,DepthGuidedInpaintingAlgorithmForFreeViewPointVideo,DepthImageBasedRenderingWithAdvancedTextureSynthesisfor3DVideo, OtherPipelineDoneWell}, see also \cite{OtherPipelineDoneWell} for an overview of the state of the art.  Unfortunately, until recently most of these approaches have been limited in the sense that too little attention has been given to the depth inpainting step, which is done based on crude heuristics, while the most of the attention is given to the subsequent color inpainting step using a depth-aided variant of Criminisi.  To our knowledge, \cite{OtherPipelineDoneWell} is the first paper to acknowledge this gap in the literature and address it with a sophisticated approach to depth inpainting.

Finally, at the most expensive extreme you have methods taking temporal information explicitly into account, such as \cite{spacetime3D} which copies spacetime patches into the inpainting domain via a process similar to Criminisi et al.

{\em II. Inpainting based on 1st-order Transport:}  There are a small number of inpainting strategies in the literature based on the idea of {\em 1st-order} transport of image values along a vector field, which is either predetermined or else calculated concurrently with inpainting (the seminal work of Bertalmio et al. \cite{bertalmio2000image} was also based on transport, but in their case the equation was third order).  While generally lower quality than their higher order counterparts, these methods have the potential to be extremely fast.  The earliest of these to our knowledge is Ballester et al. \cite{JointGreyVector}, which considers both the joint interpolation of image values and a guiding vector field, as well as the propagation of image values along a known vector field.  In the latter case, they note that their approach is equivalent to a 1st-order transport equation.  This was the first time to our knowledge that first order transport was proposed as a strategy for inpainting.  However, none of the approaches suggested in this paper are sufficiently efficient for our application.

Next, Telea et al. \cite{Telea2004} proposed filling the inpainting domain in successive shells from the boundary inwards, visiting each pixel only once and assigning it a color equal to a weighted average of its already filled neighbors, resulting in a very fast algorithm.  The connection to transport was not known until Bornemann and M\"arz showed that both Telea's algorithm and their improvement thereof, which they called coherence transport \cite{Marz2007,Marz2011}, both become 1st-order transport equations under a high-resolution and vanishing viscosity limit.  In Telea's algorithm, the user has no control over the transport direction - it was shown in \cite{Marz2007} to simply be equal to the local normal vector to the boundary of the inpainting domain.  Coherence transport attempts to improve on this by either allowing the user to supply the desired transport direction manually as a vector ${\bf g}$, or to find ``good'' values for ${\bf g}({\bf x})$ concurrently with inpainting.  However, as we will see, the algorithm actually has challenges in both respects (see below).  M\"arz went on in \cite{Marz2011} to suggest improvements to coherence tranpsort based on a carefully selected fill order, and then went on in \cite{Marz2015} to explore in depth an issue first raised in \cite{JointGreyVector} - how to make sense of the well-posedness of a 1st-order transport equation on a bounded domain with Dirichlet boundary conditions, where integral curves of the transport field may terminate at distinct points with incompatible image values. 

\vskip 2mm

\noindent {\bf Our Contribution.}  Our contributions are multiple.  Firstly, while any of the disocclusion algorithms from the previous section could be adapted to our pipeline, none of them are designed to take advantage of its particular characteristics.  In particular, none of them are designed to take advantage of the scene segmentation available in our pipeline, and with the possible exception of the recent high quality approach \cite{OtherPipelineDoneWell}, this is likely to lead to needless ``bleeding'' artifacts when pixels from ``the wrong object'' are used for inpainting.  See Figure \ref{fig:careless2}(c) as well as the discussion in \cite[Sec. II.C]{OtherPipelineDoneWell}.  Our first contribution is to define an inpainting algorithm designed to take advantage of this extra information explicitly, which we do by making use of a set of ``bystander pixels'' not be used for inpainting (Section \ref{sec:bystander}).

Secondly, even if the above methods were to be adapted to our pipeline, what appears to be missing is an algorithm suitable for the ``middle ground'' of cases where Telea's algorithm and coherence transport are inadequate, but exemplar-based approaches are needlessly expensive.  In particular, because the inpainting domains in 3D conversion tend to be thin ``cracks'' (see Figure \ref{fig:pipeline}), there are many situations in which one can safely ignore texture.  

Thirdly, we acknowledge that in the movie industry inpainting is typically done by teams of artists who are happier if they have the ability to influence the results of inpainting, and have designed our algorithm with this in mind.

Fourth, our method is a transport based algorithm inspired by the coherence transport algorithm \cite{Marz2007,Marz2011}, but improving upon it by correcting some of its shortcomings.  Both methods proceed by measuring the orientation of image isophotes in the undamaged region near the inpainting domain and then extrapolating based on a transport mechanism.  However, in the case of coherence transport both of these steps have problems.  Firstly, the procedure for measuring the orientation ${\bf g}$ of isophotes in the undamaged region is inaccurate and leads to ``kinking'' in the extrapolation.  See Figure \ref{fig:modStructureTensor} as well as Sections \ref{sec:guidefield}, \ref{sec:2ndKink} for a discussion of this problem and our resolution.  Second, once fed a desired transport direction ${\bf g}$ (which may or may not be accurate based on the last point), coherence transport instead transports along a direction ${\bf g}^*$ such that ${\bf g}^* \neq {\bf g}$ unless ${\bf g}$ points in one of a small number of special directions.  The result is a secondary ``kinking'' effect of extrapolated isophotes (see Figures \ref{fig:connectLine}, \ref{fig:ghostReal}, \ref{fig:coherenceTheory}).  This behaviour, which the authors of \cite{Marz2007,Marz2011} appear unaware of (the theory in \cite{Marz2007} does not account for it), is explored in Section \ref{sec:refraction} and rigorously analyzed in Section \ref{sec:continuum}.  We present an improved transport mechanism overcoming this problem, as well as a theoretical explanation of its origin and resolution - see Theorem \ref{thm:teaser}.  However, our ability to transport along these additional directions come at a price in the sense that our method introduces some blurring into extrapolated edges.  This blurring can be significant for low resolution images and wide inpainting domains, but otherwise is appears to be minimal - see Section \ref{sec:signalDegredation}.  Additional details on the similarities and differences between our method and coherence transport \cite{Marz2007,Marz2011} are presented in Section \ref{sec:approach}.

In this paper we present a fast, geometric, user guided inpainting algorithm intended for use by artists for the hole-filling step of 3D conversion of film.  We have designed our algorithm with two goals in mind:
\begin{itemize}
\item The method retains interactive speeds even when applied to the HD footage used in film.
\item Although the method is automatic, the artist is kept ``in the loop'' with a means of possibly adjusting the result of inpainting that is {\em intuitive} (that is, they are not simply adjusting parameters).
\end{itemize}

The first of these goals is accomplished via an efficient GPU implementation based on a novel algorithm for tracking the boundary of the inpainting domain as it evolves.  Since our method only operates on the boundary of the inpainting domain on any given step, knowing where the boundary is means that we can assign GPU processors only to boundary pixels, rather than all pixels in the image.  For very large images ($\sqrt{N} \gg p$, where $N$ denotes the number of pixels in the inpainting domain, and $p$ denotes the number of available processors), our tracking algorithm leads to a time and processor complexity of $T(N,M)=O(N\log N)$, $P(N,M)=O(\sqrt{N+M})$ respectively (where $N+M$ is the total number of pixels in the image), versus $T(N,M)=O((N+M)\sqrt{N})$, $P(N,M)=O(N+M)$ without tracking - see Theorem \ref{thm:complexity} and Theorem \ref{thm:complexity2}.  Moreover, for moderately large problems ($\sqrt{N} \lessapprox p$ and $N+M \gg p$) the gains are larger - $T(N,M)=O(\sqrt{N}\log N)$ with tracking in this case.

The second goal is accomplished by providing the user with automatically computed splines showing how key image isophotes are to be extended.  These splines may be edited if necessary.  In this regard our algorithm is not unlike Sun et al. \cite{Sun2005} and Barnes et al. \cite{Barnes2009}, both of which allow the user to similarly promote the extension of important structures by drawing them onto the image directly.  However, both of these approaches are exemplar-based, the former of is relatively expensive and the latter, while less expensive, is limited to linear edges.  As far as we know our method is the first {\em geometric} method to give the user this type of control over the results of inpainting.

Our method - which we call Guidefill - is intended as a practical tool that is fast and flexible, and applicable to many, but not all, situations.  It is not intended as a black box capable of providing the correct result in any situation given enough time.  Our method was originally designed for the 3D conversion company Gener8 and a version of it is in use by their stereo artists.

Similarly to many state of the art 3D conversion approaches we treat the problem frame by frame.  While an extension that uses temporal information would be interesting (and is a direction we would like to explore in the future), it is outside of the scope of this paper.

\vskip 2mm

\noindent {\bf Organization:}  In Section \ref{sec:pipeline} we go over a 3D conversion pipeline commonly used in film.  Section \ref{sec:alternativePipelines} also goes over the alternative pipeline commonly appearing in the literature, highlighting some of its potential drawbacks.  Next, in Section \ref{sec:approach} we present our proposed method as part of a broader class of shell-based algorithms, highlighting issues with earlier methods and how ours is able to overcome them, as well as issues with the class of methods as a whole.  Sections \ref{sec:2ndKink} and \ref{sec:refraction} in particular focus on two kinking issues associated with coherence transport and how Guidefill overcomes them, in the latter case through the introduction of what we call ``ghost pixels''.  Pixel ordering strategies for Guidefill are compared and constrasted with other strategies in the literature in Section \ref{sec:ordering}.  Two separate GPU implementations are sketched in section \ref{sec:GPU}.  Section \ref{sec:continuum} is devoted to a continuum analysis of our algorithm and others like it.  It enables us to rigourously explain some of the strengths and shortcomings of both Guidefill and coherence transport.  Our analysis is different from the analysis by M\"arz in \cite{Marz2007} - we consider a different limit and uncover new behaviour.  In Section \ref{sec:complexity} we analyze the time complexity and processor complexity of our method as a parallel algorithm.  In Section \ref{sec:numerical} we show the results of our method applied to a series of 3D conversion examples.  Results are compared with competing methods both in terms of runtime and visual quality.  At the same time, we also validate the complexity analysis of Section \ref{sec:complexity}.  Finally, in Section \ref{sec:conclusion} we draw some conclusions.

\vskip 2mm

\noindent {\bf Notation.}

\begin{itemize}
\item $h = \mbox{ the width of one pixel}$.
\item $\field{Z}^2_h := \{ (nh,mh) : (n,m) \in \field{Z}^2 \}.$
\item Given ${\bf x} \in \field{R}^2$, we denote by $\theta({\bf x}) \in [0,2\pi)$ the counter-clockwise angle ${\bf x}$ makes with the x-axis.
\item $\Omega = [a,b] \times [c,d]$ and $\Omega_h = \Omega \cap \field{Z}^2_h$ are the continuous and discrete image domains.
\item $D_h = D^{(0)}_h \subset \Omega_h$ is the (initial) discrete inpainting domain.
\item $D^{(k)}_h \subseteq D^{(0)}_h$ is the inpainting discrete inpainting domain on step $k$ of the algorithm. 
\item $B_h \subset \Omega_h \backslash D_h$ is the set of ``bystander pixels'' (defined in Section \ref{sec:bystander}) that are neither inpainted nor used for inpainting.
\item $u_h : \Omega_h \backslash (D_h \cup B_h) \rightarrow \field{R}^d$ is the given image (video frame).
\item $D \subset \Omega :=  \{ {\bf x} \in \Omega : \exists {\bf y} \in D_h \mbox{ s.t. } \|{\bf y}-{\bf x}\|_{\infty} < h \}$ is the continuous inpainting domain.
\item $D^{(k)}$ is the continuous inpainting domain on step $k$ of the algorithm, defined in the same way as $D$.
\item $B \subset \Omega$ is the continuous bystander set, defined in terms of $B_h$ in the same way as $D$.
\item ${\bf g} : D_h \rightarrow \field{R}^2$ is the guide field used to guide the inpainting.
\item $A_{\epsilon,h}({\bf x})$ denotes a generic discrete (but not necessarily lattice aligned) neighborhood of radius $\epsilon$ surrounding the pixel ${\bf x}$ and used for inpainting.\footnote{That is,  $A_{\epsilon,h}({\bf x}) \subset \Omega$ is a finite set and $\|{\bf y}-{\bf x}\| \leq \epsilon$ for all ${\bf y} \in A_{\epsilon,h}({\bf x})$} 
\item $B_{\epsilon,h}({\bf x}) = \{ {\bf y} \in \Omega_h : \|{\bf x}-{\bf y}\| \leq \epsilon \}$, the choice of $A_{\epsilon,h}({\bf x})$ used by coherence transport.
\item $\tilde{B}_{\epsilon,h}({\bf x}) = R(B_{\epsilon,h}({\bf x}))$, where $R$ is the rotation matrix taking $(0,1)$ to ${\bf g}({\bf x})$, the choice of $A_{\epsilon,h}({\bf x})$ used by Guidefill.
\item $\mathcal N({\bf x})=\{ {\bf x}+{\bf y} : {\bf y} \in \{-h,0,h\} \times \{-h,0,h\}, {\bf y} \neq {\bf 0}\}$ is the eight-point neighborhood of ${\bf x}$.
\item Given $A_h \subset \field{Z}^2_h$, we define the discrete (inner) boundary of $A_h$ by $$\partial A_h := \{ {\bf x} \in A_h : \mathcal N({\bf x}) \cap \field{Z}^2_h \backslash A_h \neq \emptyset \}.$$  For convenience we typically drop the word ``inner'' and refer to $\partial A_h$ as just the boundary of $A_h$.
\item Given $A_h \subset \field{Z}^2_h$, we define the discrete {\em outer} boundary of $A_h$ by $$\partial_{\mbox{outer}} A_h := \{ {\bf x} \in \field{Z}^2_h \backslash A_h : \mathcal N({\bf x}) \cap A_h \neq \emptyset \}.$$
\item $\partial_{\mbox{active}} D^{(k)}_h \subseteq \partial D^{(k)}_h$ is the active portion of the boundary of the inpainting domain on step $k$ of the algorithm.  That is
$$\partial_{\mbox{active}} D^{(k)}_h = \{ {\bf x} \in \partial D^{(k)}_h : \mathcal N({\bf x}) \cap (\Omega_h \backslash (D^{(k)}_h \cup B_h )) \neq \emptyset \}.$$
In other words, $\partial_{\mbox{active}} D^{(k)}_h$ excludes those pixels in $\partial D^{(k)}_h$ with no readable neighboring pixels.
\end{itemize}

\section{A 3D Conversion Pipeline for Film} \label{sec:pipeline}

\begin{figure}
\centering
\begin{tabular}{ccc}
\subfloat[]{\includegraphics[width=.3\linewidth]{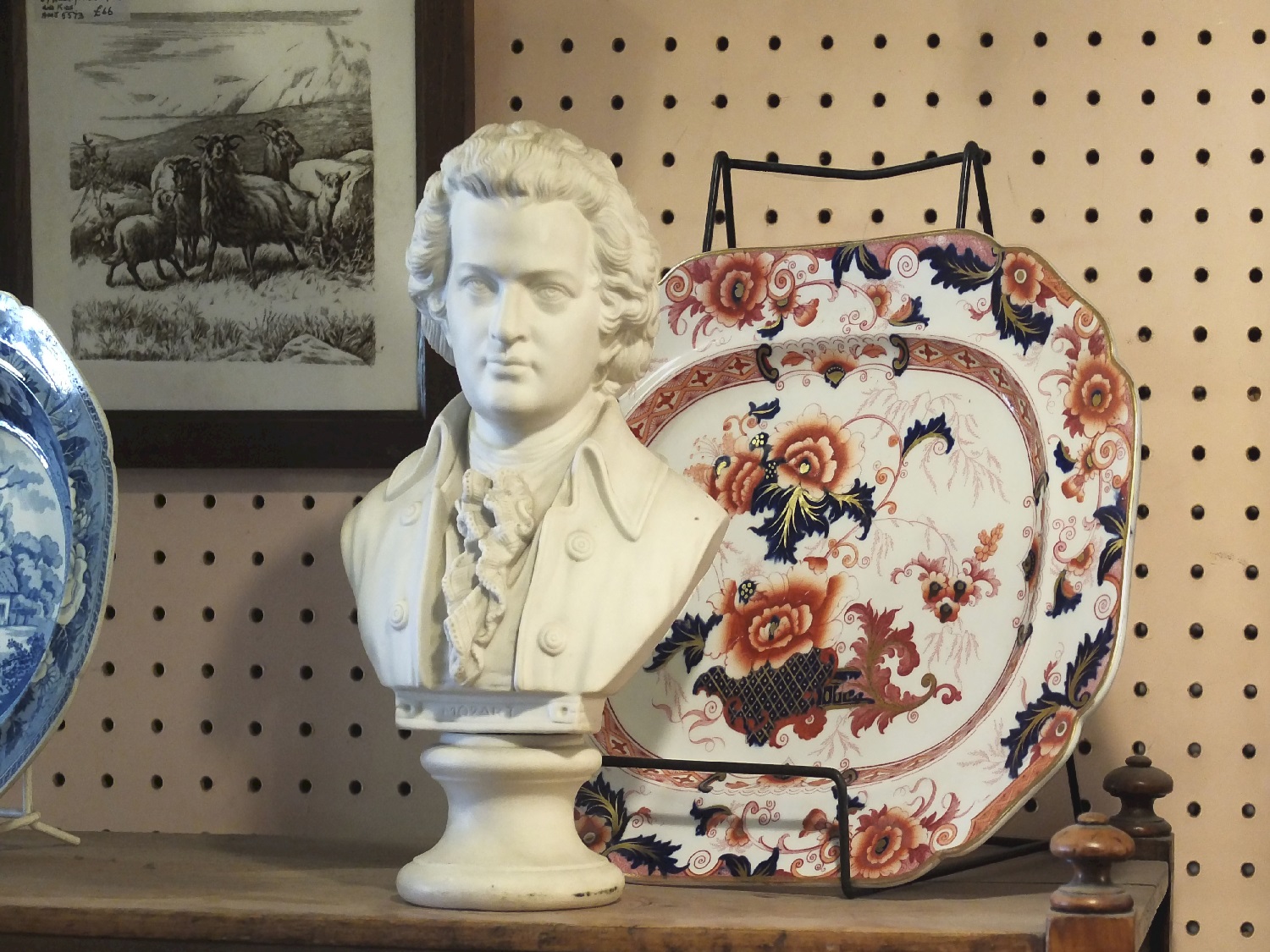}} & 
\subfloat[]{\includegraphics[width=.3\linewidth]{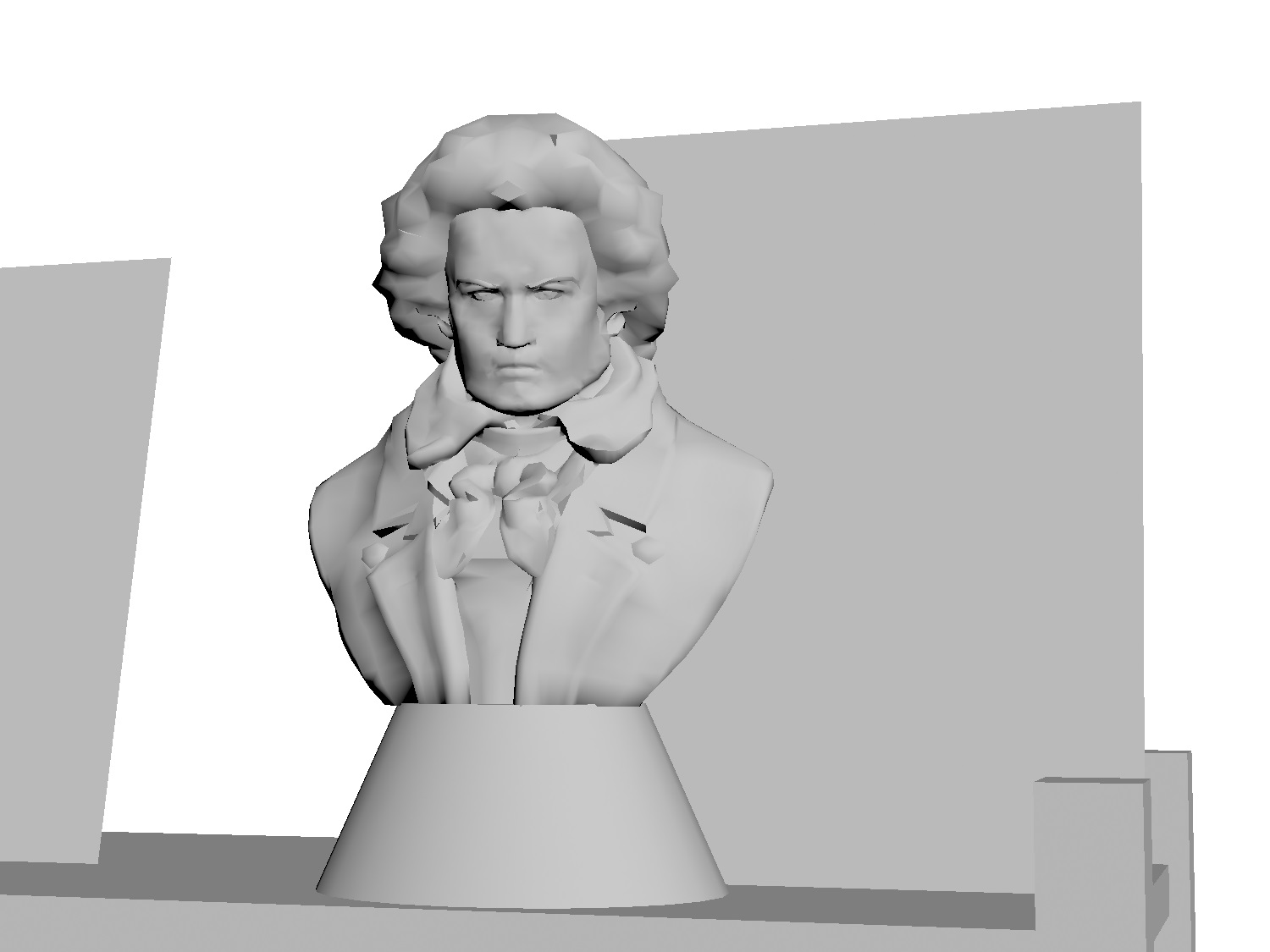}} &
\subfloat[]{\includegraphics[width=.3\linewidth]{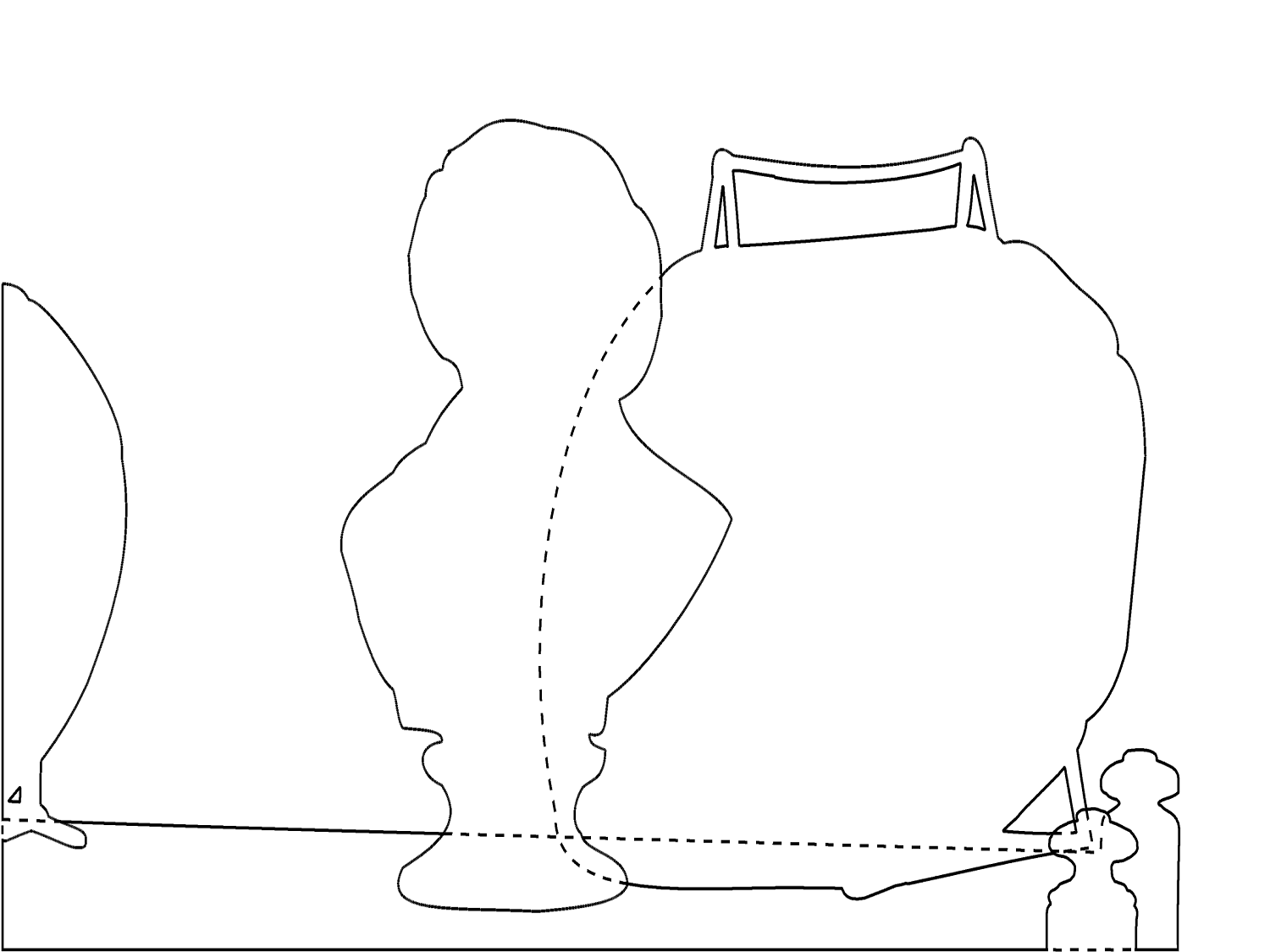}} \\
\subfloat[]{\includegraphics[width=.3\linewidth]{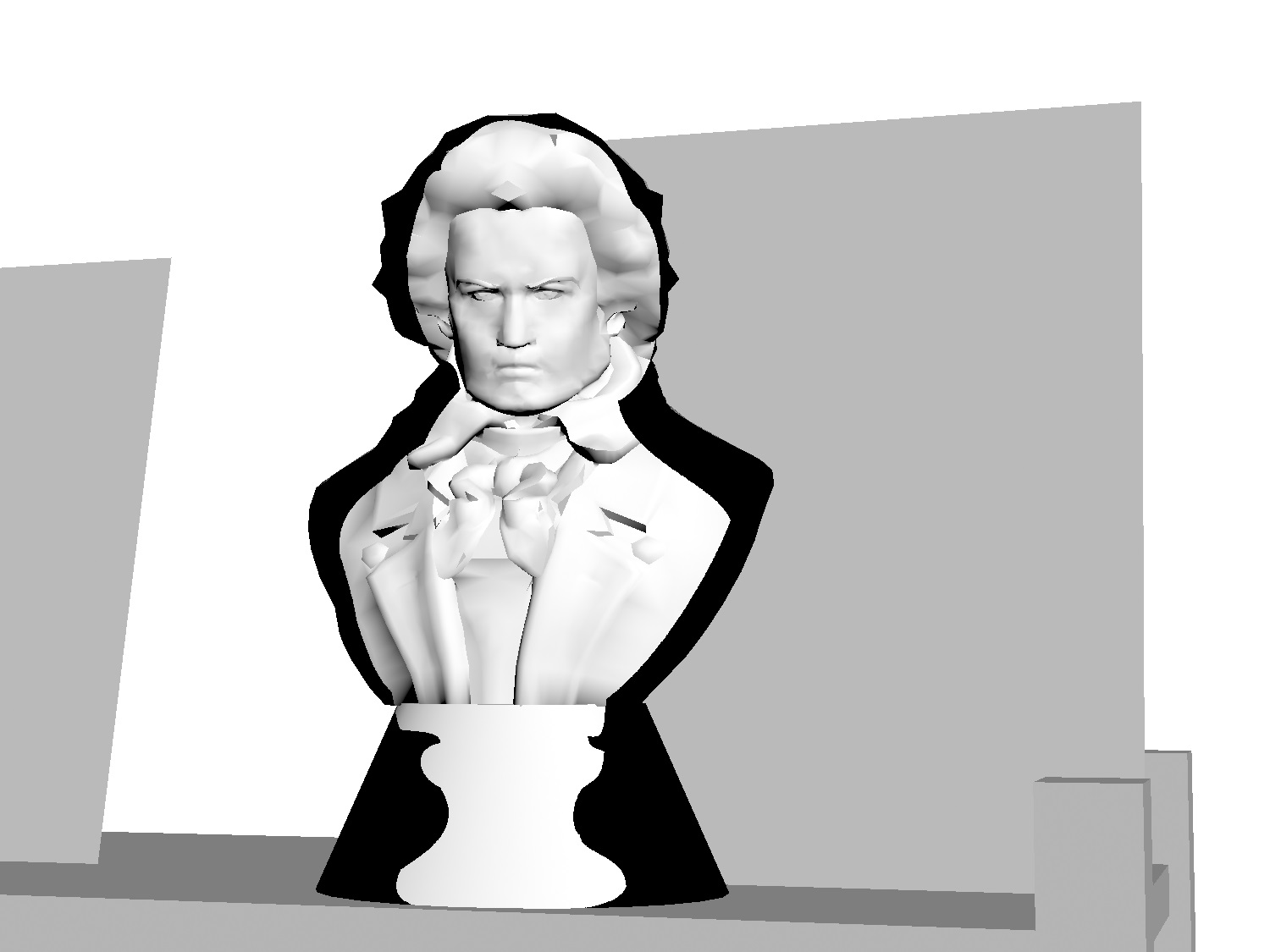}} &
\subfloat[]{\includegraphics[width=.3\linewidth]{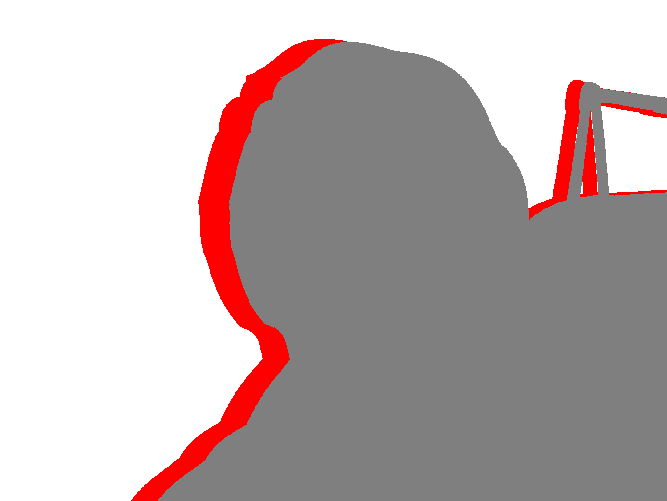}} &
\subfloat[]{\includegraphics[width=.3\linewidth]{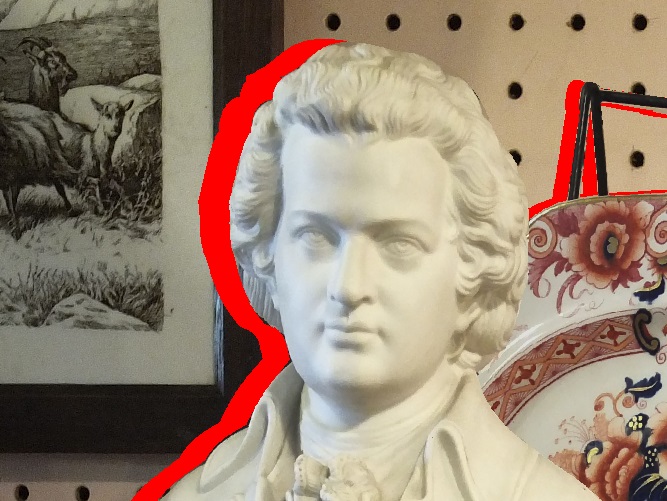}} \\
\end{tabular}
\caption{Intermediate data generated in a 3D conversion pipeline prior to inpainting: (a) original image, (b) rough 3D geometry, (c) object masks including occluded areas, (d) projection of an object mask onto the corresponding object geometry, (e)  example labeling of pixels in the new view according to object and visibility (in this case the object in question is the wall, white pixels are visible from both viewpoints, red are visible from the new viewpoint but occluded in the original view, grey are occluded in both views), (f) the generated new view with red ``cracks'' requiring inpainting.}
\label{fig:pipeline}
\end{figure}

Here we briefly review a 3D conversion pipeline commonly used in film - see for example \cite{CaseStudy3DConv} for a more detailed description. The pipeline relevant to us involves three main steps (typically done by separate teams of specialized artists) which must be completed before inpainting can proceed:
\begin{enumerate}
\item If camera data (including position, orientation and field of view) is not known, it must be estimated.  This process is often called ``match-move'' and is typically done with the aid of semi-automatic algorithms based on point tracking \cite{MatchMove,MatchMove2}.  
\item Accurate masks must be generated for all objects and for every frame, including occluded areas (See Figure \ref{fig:pipeline}(c)) .  This is typically done to a subpixel accuracy using editable B\'ezier splines called ``roto''.  These masks play three important roles:
\begin{enumerate} 
\item generating the depth discontinuities visible from the new viewpoint(s).
\item generating the scene segmentation in the old viewpoint.
\item generating the scene segmentation in the new viewpoint(s).
\end{enumerate}
These masks need to be as accurate as possible \cite{artOfSteroConv}.
\item A plausible 3D model of the scene must be generated (see Figure \ref{fig:pipeline}(b) for an example).  This will effectively be used to generate the ``smooth'' component of the depth map as viewed from the new viewpoint(s) and does not have to be perfect.  It is however very important that each object's mask generated in the previous step fit entirely onto its geometry when projected from the assumed camera position, as in Figure \ref{fig:pipeline}(d).  For this reason 3D geometry is typically designed to be slightly larger than it would be in real life \cite{CaseStudy3DConv}.
\item For each object, a multi-label mask must be generated assigning a label to each pixel in the new view as either
\begin{itemize}
\item belonging to the object and visible from the original viewpoint, or
\item belonging to the object and occluded in the original viewpoint, but visible in the new viewpoint, or
\item belonging to the object and occluded in both the original and new viewpoints, or
\item belonging to another object.
\end{itemize}
See Figure \ref{fig:pipeline}(e) for an example where the four labels are colored white, red, grey, and black respectively, and the object in question is the background.
\end{enumerate}
Once these components are in place, the original footage, clipped using the provided masks, is projected onto the geometry from the assumed camera position and orientation.  The new view is then generated by rendering the 3D scene from the perspective of a new virtual camera.  This new view, however, contains disoccluded regions - formerly hidden by geometry in the old view - which must be inpainted (see Figure \ref{fig:pipeline}(f)).  Inpainting then proceeds on an object by object basis, with each object inpainted separately.

\subsection{Bystander Pixels} \label{sec:bystander}

In most image inpainting algorithms it is assumed that all pixels in $\Omega_h \backslash D_h$ may be used for inpainting.  However, for this application each object is inpainted separately, so some of the pixels in $\Omega_h \backslash D_h$ belong to other objects (according to the labelling in step 4) and should be excluded.  Failure to do so will result in ``bleeding'' artifacts, where, for example, a part of background is extended into what is supposed to be a revealed midground object - see Figure \ref{fig:careless2}(c).

\begin{figure}
\centering
\begin{tabular}{cccc}
\subfloat[Detail from ``Bust'':  A complex hole involving several objects at multiple depths.]{\includegraphics[width=.21\linewidth]{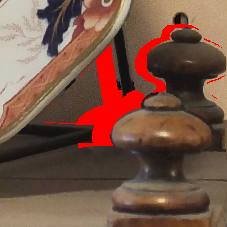}} &
\subfloat[Segmentation of the new view available to our pipeline.]{\includegraphics[width=.21\linewidth]{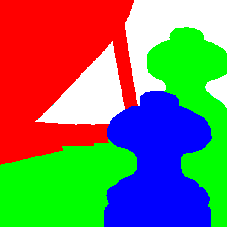}} &
\subfloat[midground structure cut off by ``bleeding'' of background into midground, when (b) is not taken into account.]{\includegraphics[width=.21\linewidth]{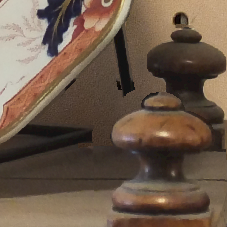}} &
\subfloat[Our result.]{\includegraphics[width=.21\linewidth]{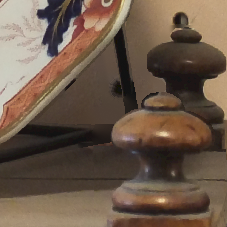}} \\
\end{tabular}
\caption{{\bf Importance of the pixel labeling step:}  Unlike our pipeline, which has an explicit scene segmentation (b) available to it from the new viewpoint, the depth map based pipeline does not have this information and must rely on heuristics.  As noted in \cite{OtherPipelineDoneWell}, these heuristics tend to fail for complex holes involving multiple objects at different depths, such as (a).  Most methods in the literature (especially those based on scanlines such as \cite{YoonEtAl,CriminisiVarient1,ModTelea}) with the exception of \cite{OtherPipelineDoneWell} itself (a very recent paper designed to cope with these situations) would struggle to correctly inpaint this hole and would likely produce artifacts similar to (c), where the midground structure is cut off by ``bleeding'' of background into the midground.  Our pipeline does not have this problem as it is able to take advantage of the segmentation in (b). }
\label{fig:careless2}
\end{figure}

Pixels which are neither inpainted nor used as inpainting data are called ``bystander pixels'', and the set of all such pixels is denoted by $B_h$.  Pixels in $\Omega_h \backslash (D_h \cup B_h)$ are called ``readable''.

\subsection{An Alternative Pipeline} \label{sec:alternativePipelines}

Here we briefly review the depth-map based pipeline that has so far received the most attention in the literature.  We will go over some of the heuristics employed and give a simple example to show how these heuristics can fail.  Please also see \cite{OtherPipelineDoneWell}, which covers the same issues we raise but in more detail, and aims at overcoming them.

The general setup is that we have an initial image/video frame $u_0$ with an accompanying depth map $d_0$ take from a known camera position, and we wish to know the image/video frame $u_0'$ from a new virtual camera position.  The key idea is that of a warping function $\mathcal W$, constructed from the known camera positions and parameters, that determines where a pixel ${\bf x}$ in $u_0$ at depth $d_0({\bf x})$ ``lands'' in $u'_0$.  $u'_0$ and $d'_0$ are then constructed by applying $\mathcal W$ to all pixels in $u_0$, $d_0$ (note that some care may be required as in general $\mathcal W({\bf x},d_0({\bf x}))$ may lie between pixel centers). It is typically assumed that the camera positions are related by a translation orthogonal to the optical axis and parallel to the horizon so that $\mathcal W$ is a simple horizontal translation.  The result is a new image $u_0'$ and depth map $d'_0$ with ``gaps'' due to disocclusion.  

The main disadvantage of this approach is that it has access to neither a depth map of the new view nor a segmentation thereof, whereas we have both.  When confronted with a complex hole as in Figure \ref{fig:careless2}(a), our pipeline also has access to Figure \ref{fig:careless2}(b), and hence while it may not know what RGB values are meant to go in the hole, it at least knows which object they are meant to belong to.  Without this information, algorithms in this pipeline instead have to make guesses based on heuristics.  On common approach is to first inpaint the depth map based on heuristics, then then use the inpainted depth map to guess which pixels belong to which objects.  For depth map inpainting, a very common heuristic, used in for example \cite{YoonEtAl,CriminisiVarient1}, is to divide the inpainting domain in horizontal scanlines.  Each scanline is then filled with a constant depth value that may be the endpoint with the greater depth \cite{CriminisiVarient1}, or the minimal extrema of depth patch statistics centered at the endpoints of the scanline as well as their inverse images under the warping function $\mathcal W$ \cite{YoonEtAl}.  In \cite{ModTelea}, the authors do not  inpaint the depth map, but divide the inpainting domain into horizontal scanlines as usual, declaring the endpoint with greater depth ``background'' and hence useable for inpainting, while discarding the other endpoint.  These approaches will work for most of the hole in Figure \ref{fig:careless2}(a), but all of them will incorrectly cut off the vertical plate leg as in Figure \ref{fig:careless2}(c).   Another approach, used in for example \cite{DepthGuidedInpaintingAlgorithmForFreeViewPointVideo}, is to inpaint using a modified variant of Criminisi that assigns higher priority to pixels with greater depth.  This approach is also likely to fail to extend either leg of the plate, since as an object lying in the midground it will be given a lower priority than background pixels.

In fact, of the approaches currently in the literature, the only one likely to give the correct result in this case is \cite{OtherPipelineDoneWell}, which was designed to address this gap in the literature by incorporating an explicit structure propagation step.  By contrast, our algorithm, taking advantage of the information in Figure \ref{fig:careless2}(b), produces the result in Figure \ref{fig:careless2}(d).

\section{Proposed Approach} \label{sec:approach}

Guidefill is a member of an extremely simple class of inpainting algorithms which also contains coherence transport \cite{Marz2007,Marz2011} and Telea's algorithm \cite{Telea2004}.  These methods fill the inpainting domain in successive shells from the boundary inwards, with the color of a given pixel due to be filled computed as a weighted average of its already filled neighbors.  The averaging weights $w_{\epsilon}$ are chosen to scale with proportionally with the size of the neighborhood $A_{\epsilon,h}({\bf x})$.  That is 
\begin{equation} \label{eqn:scalinglaw}
w_{\epsilon}({\bf x},{\bf y})=\hat{w}\left(\frac{{\bf x}}{\epsilon},\frac{{\bf y}}{\epsilon}\right)
\end{equation}
for some function $\hat{w}(\cdot,\cdot)$ with both arguments in the unit ball.  See \eqref{eqn:weight} for the weights used by coherence transport and Guidefill, which are of the form \eqref{eqn:scalinglaw}.  Note that we will sometimes write $w_r$ or $w_1$ in place of $w_{\epsilon}$ - in this case we mean \eqref{eqn:scalinglaw} with $\epsilon$ replaced by $r$ or $1$ on the left hand side.    As the algorithm proceeds, the inpainting domain shrinks, generating a sequence of inpainting domains $D_h = D^{(0)}_h \supset D^{(1)}_h \supset \ldots \supset D^{(K)}_h = \emptyset$.  At iteration $k$, only pixels belonging to the current active boundary $\partial_{\mbox{active}} D^{(k)}_h$ are filled, however, $\partial_{\mbox{active}} D^{(k)}_h$ need not be filled in its entirety - certain pixels may be made to wait until certain conditions are satisfied before they are ``ready'' to be filled (see Section \ref{sec:ordering} for discussion and \eqref{eqn:ready} for a definition of ``ready'').  Algorithm 1 illustrates this with pseudo code.
\begin{algorithm}
\caption{Shell Based Geometric Inpainting}
\begin{algorithmic} \label{alg:general}
\STATE $u_h$ = image
\STATE $D^{(0)}_h$ = initial inpainting domain
\STATE $\partial_{\mbox{active}} D^{(0)}_h $= initial active inpainting domain boundary
\STATE $B_h$ = bystander pixels
\FOR{$k=0,\ldots$}
\IF{$D^{(k)}_h = \emptyset$}
\STATE break
\ENDIF
\FOR{${\bf x} \in \partial_{\mbox{active}} D^{(k)}_h$}
\STATE compute $A_{\epsilon,h}({\bf x})$ = neighborhood of ${\bf x}$.
\STATE compute non-negative weights $w_{\epsilon}({\bf x},{\bf y}) \geq 0$ for $A_{\epsilon,h}({\bf x})$.
\IF{ready({\bf x})}
\STATE 
\vspace{-5mm}
\begin{equation*} 
u_h({\bf x})  = \frac{\sum_{{\bf y} \in A_{\epsilon,h}({\bf x}) \cap \Omega \backslash ( D^{(k)} \cup B_h) }w_{\epsilon}({\bf x},{\bf y})u_h({\bf y})  }{\sum_{{\bf y} \in A_{\epsilon,h}({\bf x}) \cap \Omega \backslash ( D^{(k)} \cup B_h) }w_{\epsilon}({\bf x},{\bf y})} \phantom{This text will be invisible invi} 
\end{equation*}
\vspace{-1mm}
\ENDIF
\ENDFOR
\STATE $F = \{ {\bf x} \in \partial_{\mbox{active}} D^{(k)}_h : \mbox{ready}({\bf x})\}$
\STATE $D^{(k+1)}_h = D^{(k)}_h \backslash F$
\STATE $\partial_{\mbox{active}} D^{(k+1)}_h = \{ {\bf x} \in \partial D^{(k+1)}_h : \mathcal N({\bf x}) \cap (\Omega_h \backslash (D^{(k+1)}_h \cup B_h )) \neq \emptyset \}.$
\ENDFOR
\end{algorithmic}
See \eqref{eqn:ready}  for a definition of the $\mbox{ready}$ function for Guidefill.  Coherence transport and Guidefill use the neighborhoods $A_{\epsilon,h}({\bf x}) = B_{\epsilon,h}({\bf x})$, $A_{\epsilon,h}({\bf x}) = \tilde{B}_{\epsilon,h}({\bf x})$ respectively - see Figure \ref{fig:ballRotate}.  They also both use the same weights \eqref{eqn:weight}.  Note that in coherence transport, $\partial_{\mbox{active}} D^{(k)}_h = \partial D^{(k)}_h$ as there are no bystander pixels.
\end{algorithm}
While basic, these methods have the advantage of being cheap and highly parallelizable.  When implemented on the GPU the entire active boundary of the inpainting domain can be filled in parallel.  If done carefully, this yields a very fast algorithm suitable for very large images - see Section \ref{sec:GPU}.

Guidefill is inspired in part by coherence transport \cite{Marz2007,Marz2011}.  Coherence transport operates by adapting its weights in order to extrapolate along isophotes in the undamaged portion of the image when they are detected, and applying a smooth blur when they are not.  While relatively fast and achieving good results in many cases,  it has a number of drawbacks:
\begin{enumerate}
\item Users may need to tune parameters in order to obtain a good result.
\item Extrapolated isophotes may ``kink'' due to inaccurate computation of the guidance direction ${\bf g}$ (see Figure \ref{fig:modStructureTensor} and Section \ref{sec:2ndKink}).
\item Even if ${\bf g}$ is computed correctly, extrapolated isophotes may still ``kink'' if ${\bf g}$ does not belong to a finite set of special directions (see Figures \ref{fig:connectLine}, \ref{fig:ghostReal}, \ref{fig:coherenceTheory} and Sections \ref{sec:refraction}, \ref{sec:continuum}).
\item The method is a black box with no artist control.
\item The quality of the result can be strongly influenced by the order in which pixels are filled - see Figure \ref{fig:goodbadOrder}.  This is partially addressed in \cite{Marz2011}, where several methods are proposed for precomputing improved pixel orderings based on non-euclidean distance functions.  However, these methods all either require manual intervention or else have other disadvantages - see Section \ref{sec:ordering}.
\end{enumerate}
Guidefill is aimed at overcoming these difficulties while providing an efficient GPU implementation (the implementation of coherence transport in \cite{Marz2007,Marz2011} was sequential, despite the inherent parallelizability of the method), in order to create a tool for 3D conversion providing intuitive artist control and improved results.

\subsection{Formation of shocks} \label{sec:shocks}

\begin{figure}
\centering
\begin{tabular}{ccc}
\subfloat[An inpainting problem with incompatible boundary conditions.  The inpainting domain $D_h$ is in grey, and the skeleton $\Sigma$ is drawn in black.]{\includegraphics[width=.3\linewidth]{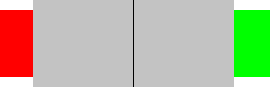}} &
\subfloat[Inpainting using Guidefill.  A shock is formed on the skeleton set $\Sigma$ shown in (a).]{\includegraphics[width=.3\linewidth]{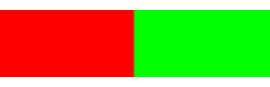}} &
\subfloat[Inpainting by solving the second order elliptic equation $-\epsilon \Delta u + u_x =0$ with $\epsilon = 10^{-7}$.  Shocks are prevented, but the solution (using GMRES) becomes much more expensive.]{\includegraphics[width=.3\linewidth]{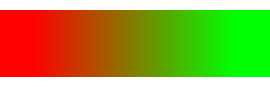}} \\
\end{tabular}
\caption{{\bf Creation of shocks by Algorithm 1:} When Algorithm 1 is used to inpaint problems with incompatible boundary conditions, such as the problem illustrated in (a) of inpainting a stripe that is red on one end and green on the other, the result may contain shocks as in (b).  These shocks can be understood by adopting the framework proposed in \cite{Marz2007,Marz2015}, where the output of Algorithm 1 under a high resolution and vanishing viscosity limit is shown to be equivalent to the solution of a first order transport equation on $D \backslash \Sigma$, where $\Sigma$ is a set of measure zero containing any potential shocks.  Ballester et al. \cite{JointGreyVector} suggested overcoming this problem by adding a diffusive term $-\epsilon \Delta u$ to the transport equation and taking $\epsilon \rightarrow 0$.  As can be seen in (c), in this case the formation of shocks is prevented, but the algorithm becomes much more expensive, in this case requiring several minutes for GMRES \cite{GMRES} to converge on a $200 \times 200$px inpainting domain (Guidefill by contrast took only $60$ms).}
\label{fig:shocks}
\end{figure}

A disadvantage of the shell based approach in Algorithm 1 is the potential to create a shock in the middle of $D_h$, where image values propagated from initially distant regions of $\partial D_h$ meet - see Figure \ref{fig:shocks}(a)-(b) for a simple example.  Since M\"arz \cite[Theorem 1]{Marz2007} showed that Algorithm 1 (with $B_h = \emptyset$ and under other assumptions) is related in a high resolution and vanishing viscosity limit to a first order transport equation
\begin{equation} \label{eqn:1storder}
{\bf c}({\bf x}) \cdot \nabla u = 0 \quad \mbox{ in } D, \quad u = \varphi \quad \mbox{ on } \partial D,
\end{equation}
this problem was arguably anticipated by Ballester et al. \cite{JointGreyVector}, who noted that above boundary value problem does not have an obvious solution.  Indeed, the integral curves of ${\bf c}({\bf x})$, each with a beginning and endpoint on $\partial D$, may have incompatible values of $\varphi$ at those endpoints.  To resolve this issue they suggested, among other things, adding a diffusive term $-\epsilon \Delta u$ to \eqref{eqn:1storder} to make it well posed, and then taking $\epsilon \rightarrow 0$.  See Figure \ref{fig:shocks}(c), where we solve the resulting nonsymmetric linear system with $\epsilon = 10^{-7}$ using GMRES (the Generalized Minimum RESidual method for nonsymmetric linear systems) \cite{GMRES}.  In a series of papers \cite{Marz2007,Marz2011,Marz2015} M\"arz took a different approach and instead showed that \eqref{eqn:1storder} is well posed on $D \backslash \Sigma$, where $\Sigma$ is a set of measure zero containing any potential shocks, and related to a distance map prescribing the order in which pixels are filled.

In our case this issue is less significant as we only specify boundary data on $\partial_{\mbox{active}} D_h \subset \partial D_h$.  Indeed, as long as the integral curves of ${\bf c}({\bf x})$ do not cross and always have one endpoint on $\partial_{\mbox{active}} D_h$ and the other on $\partial D_h \backslash \partial_{\mbox{active}} D_h$, we avoid the issue altogether.  However, there is nothing about our framework that explicitly prevents the formation of shocks, and indeed they do sometimes occur - see for example Figure \ref{fig:bust}(f).

\subsection{Overview} \label{sec:overview}

\begin{figure}
\centering
\begin{tabular}{ccc}
\subfloat[Automatically generated splines.]{\includegraphics[width=.3\linewidth]{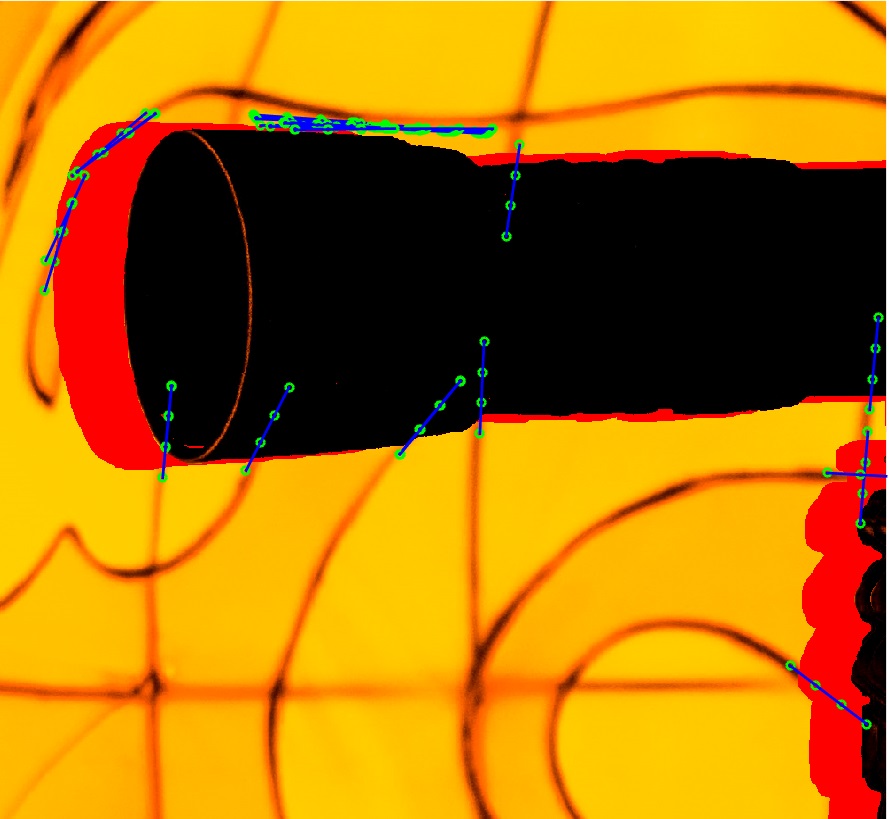}} & 
\subfloat[After user adjustment.]{\includegraphics[width=.3\linewidth]{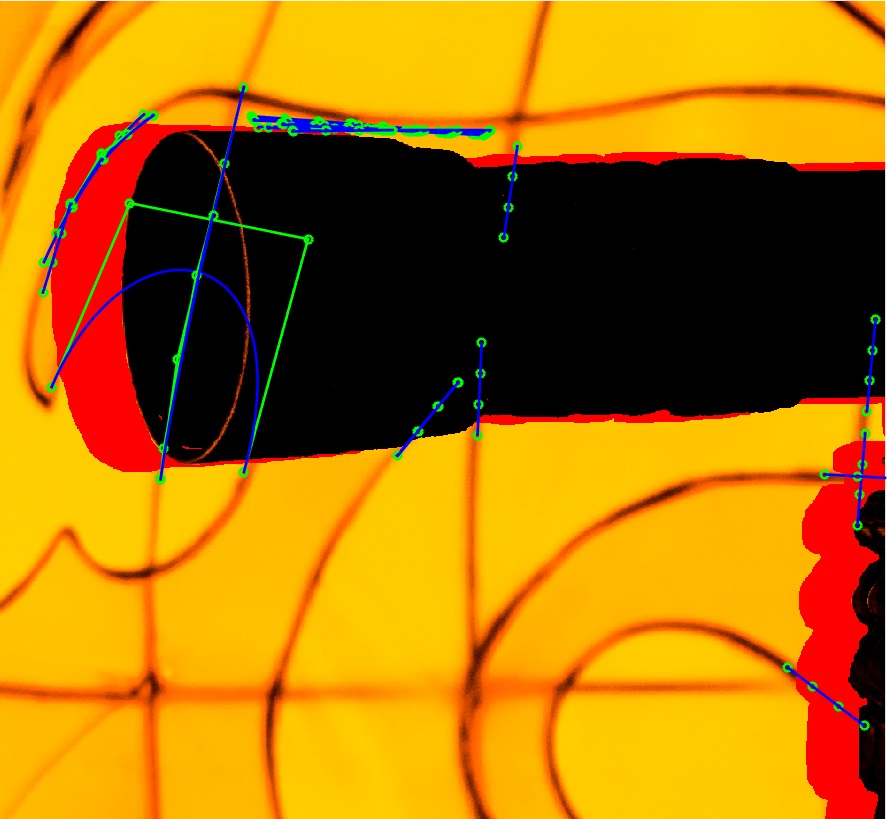}} &
\subfloat[Closeup of the resulting guide field.]{\includegraphics[width=.302\linewidth]{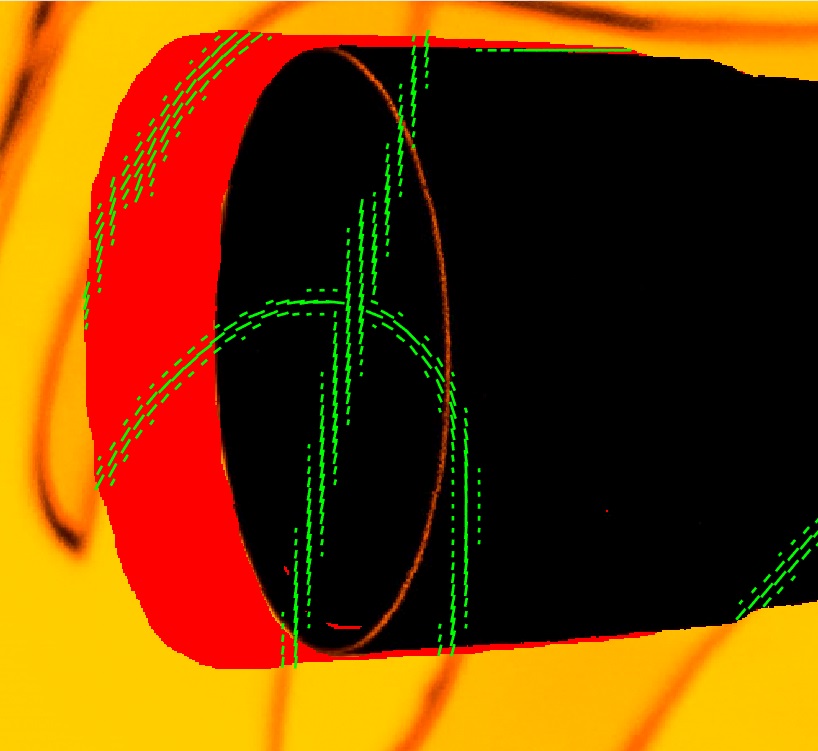}} \\
\end{tabular}
\caption{Generating the guide field ${\bf g}$ (c) based on splines automatically generated by Guidefill (a) and edited by the user (b).}
\label{fig:guidefield}
\end{figure}

The main idea behind Guidefill is to generate, possibly based on user input, a suitable vector field ${\bf g}: D_h \rightarrow \field{R}^2$ to guide the inpainting process, prior to inpainting.  The vector field ${\bf g}$, which we call the ``guide field'', is generated based on a small set of curves carrying information about how key image edges in $\Omega_h \backslash (D_h \cup B_h) $ should be continued into $D_h$.  These curves provide an intuitive mechanism by which the user can influence the results of inpainting (see Figure \ref{fig:guidefield}).

Coherence transport also utilizes a vector field ${\bf g}({\bf x})$, but it is calculated concurrently with inpainting.  Precomputing the guide field ahead of time is an advantage because the guide field contains information that can be used to automatically compute a good pixel ordering, avoiding artifacts such as Figure \ref{fig:goodbadOrder}.  At step $k$ of our algorithm, given any pixel ${\bf x} \in \partial_{\mbox{active}} D^{(k)}_h$ due to be filled, our algorithm decides based on ${\bf g}({\bf x})$ whether to allow ${\bf x}$ to be filled, or to wait for a better time.  Our test amounts to checking whether or not enough pixels have already been inpainted in the area pointed to by ${\bf g}({\bf x})$, and is discussed in greater detail in Section \ref{sec:ordering}.

The method begins with the user either drawing the desired edges directly onto the image as B\'ezier splines using a GUI, or else by having a set of splines automatically generated for them based on the output of a suitable edge detection algorithm run on $\Omega_h \backslash (D_h \cup B_h)$.  In the latter case the user may either accept the result or else use it as a starting point which they may improve upon by editing and/or removing existing splines as well as drawing new ones.  This is illustrated in Figure \ref{fig:guidefield}.

Next, the idea is to choose ${\bf g}({\bf x})$ to be ${\bf 0}$ when ${\bf x}$ is far away from any splines (e.g. more than a small number of pixels, around ten by default), and ``parallel'' to the splines when ${\bf x}$ is close.  Details are provided in Section \ref{sec:guidefield}.

The purpose of the guide field is to ensure that the inpainting will tend to follow the splines wherever they are present.  To accomplish this, at step $k$ of our algorithm a given pixel ${\bf x} \in \partial_{\mbox{active}} D^{(k)}_h$ due to be inpainted is ``filled'' by assigning it a color equal to a weighted average of its already filled neighbors, with weights biased in favor of neighboring pixels ${\bf y}$ such that ${\bf y}-{\bf x}$ is parallel to ${\bf g}({\bf x})$.  This is accomplished using the weight function
\begin{equation} \label{eqn:weight}
w_{\epsilon}({\bf x},{\bf y}) = \frac{1}{\|{\bf y}-{\bf x}\|}\exp\left(-\frac{\mu^2}{2\epsilon^2}({\bf g}^{\perp}({\bf x}) \cdot ({\bf y}-{\bf x}))^2\right),
\end{equation}
(introduced in coherence transport \cite{Marz2007}) where $\mu > 0$ is a positive parameter and $\epsilon > 0$ is the radius of the neighborhood $A_{\epsilon,h}({\bf x})$.  However, whereas the sum in coherence transport is taken over the filled portion of the discrete ball $A_{\epsilon,h}({\bf x})=B_{\epsilon,h}({\bf x})$ aligned with the image lattice, we sum over the available ``pixels'' within a {\em rotated} ball $A_{\epsilon,h}({\bf x})=\tilde{B}_{\epsilon,h}({\bf x})$ aligned with the local guide direction ${\bf g}({\bf x})$ - see Figure \ref{fig:ballRotate} for an illustration.  The color $u_h({\bf x})$ is then computed using the formula in Algorithm 1, taking $A_{\epsilon,h}({\bf x})=\tilde{B}_{\epsilon,h}({\bf x})$ and using weights \eqref{eqn:weight}.  Coherence transport ``fills'' a pixel using exactly the same formula, except that now $A_{\epsilon,h}({\bf x})=B_{\epsilon,h}({\bf x})$.

\begin{figure}
\centering
\begin{tabular}{ccc}
\subfloat[$A_{\epsilon,h}({\bf x})=B_{\epsilon,h}({\bf x})$]{\includegraphics[width=.28\linewidth]{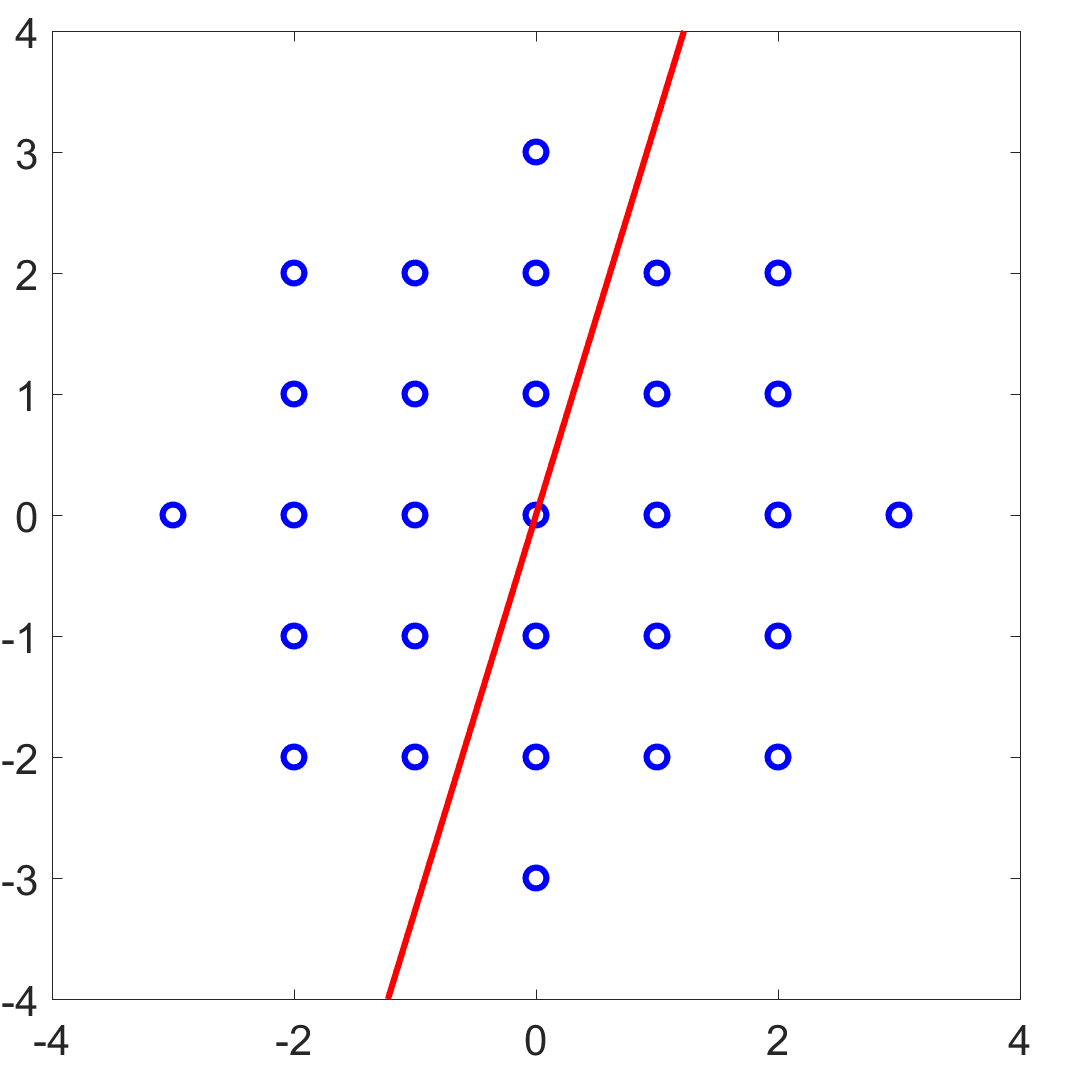}} & 
\subfloat[$A_{\epsilon,h}({\bf x})= \tilde{B}_{\epsilon,h}({\bf x})$]{\includegraphics[width=.28\linewidth]{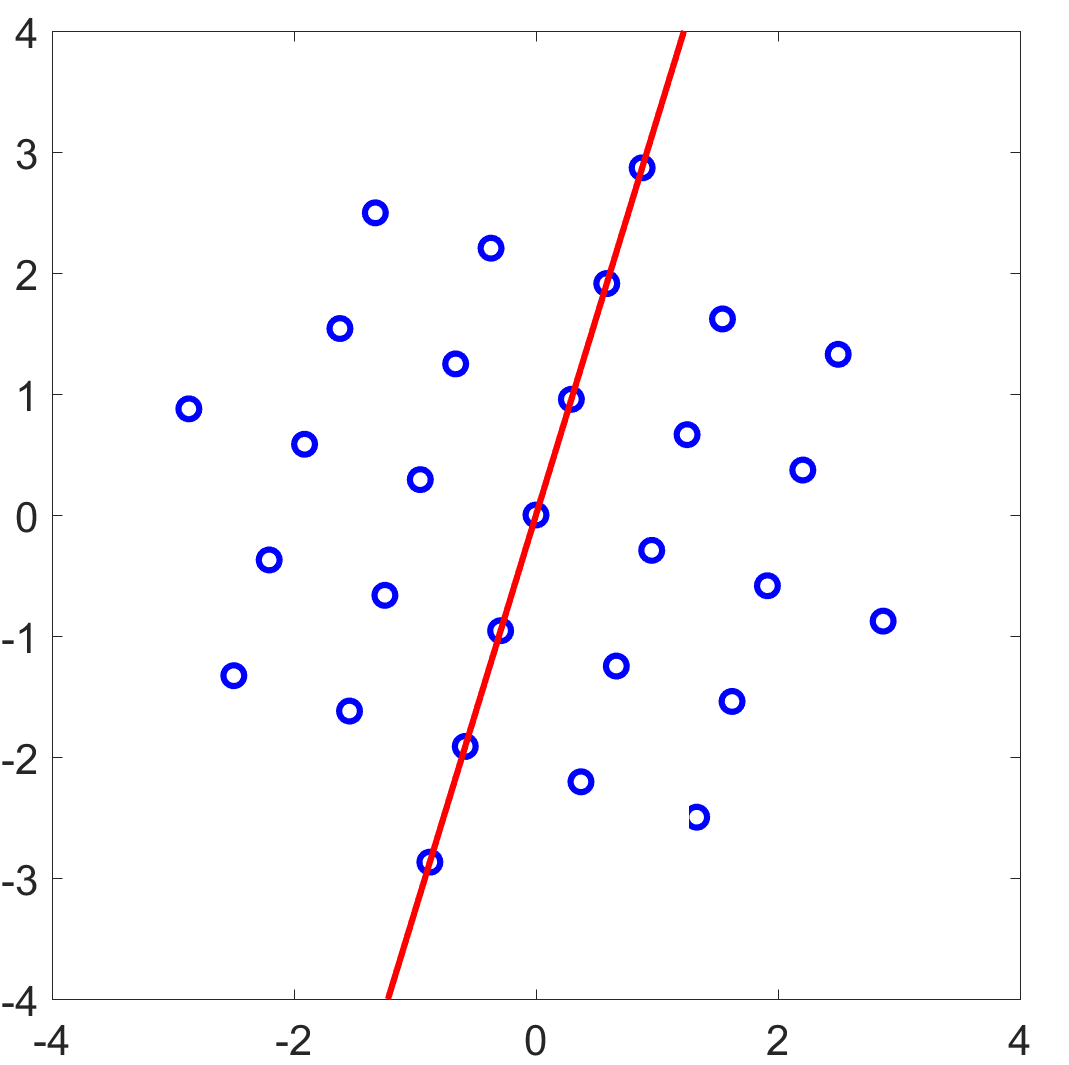}} &
\subfloat[Illustration of the (normalized) weights \eqref{eqn:weight} for $\mu = 10$.]{\includegraphics[width=.33\linewidth]{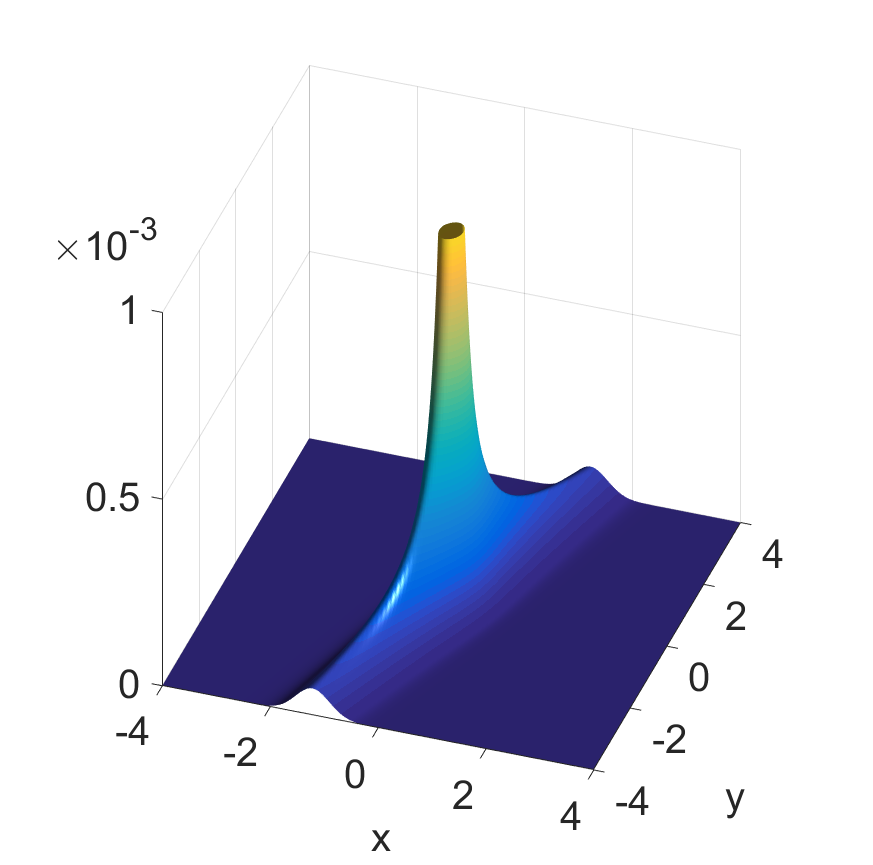}}\\
\end{tabular}
\caption{Illustration of the neighborhoods $A_{\epsilon,h}({\bf x})$ and weights \eqref{eqn:weight} used by coherence transport and Guidefill.  In each case $\epsilon = 3$px and ${\bf g}({\bf x})=(\cos73^{\circ},\sin73^{\circ})$.  Coherence transport (a) uses the lattice-aligned discrete ball $A_{\epsilon,h}({\bf x})=B_{\epsilon,h}({\bf x})$, while Guidefill (b) uses the rotated discrete ball $A_{\epsilon,h}({\bf x})=\tilde{B}_{\epsilon,h}({\bf x})$.  The ball $\tilde{B}_{\epsilon,h}({\bf x})$ is rotated so that it is aligned with the line $L$ (shown in red) passing through ${\bf x}$ parallel to ${\bf g}({\bf x})$.  In general $\tilde{B}_{\epsilon,h}({\bf x})$ contains ``ghost pixels'' lying between pixel centers, which are defined using bilinear interpolation of their ``real'' pixel neighbors.  Both use the same weights \eqref{eqn:weight} illustrated in (c).  The parameter $\mu$ controls the extent to which the weights are biased in favor of points lying on or close to the line $L$.}
\label{fig:ballRotate}
\end{figure}

Unlike in coherence transport, however, our neighbourhood $A_{\epsilon,h}({\bf x})=\tilde{B}_{\epsilon,h}({\bf x})$ is not axis aligned (unless ${\bf g}({\bf x})$ is parallel to $e_1$ or $e_2$), and this means that in general we have to evaluate $u_h$ {\em between} pixel centers, which we accomplish by extending the domain of $u_h$ at step $k$ from $\Omega_h \backslash (D^{(k)}_h \cup B_h)$ to $\Omega \backslash (D^{(k)} \cup B)$ using bilinear interpolation.  That is, we define
\begin{equation} \label{eqn:bilinearDef}
u_h({\bf x}) = \sum_{{y} \in \Omega_h } \Lambda_{{\bf y},h}({\bf x})u_h({\bf y}) \quad \mbox{ for all } {\bf x} \in \Omega \backslash (D^{(k)} \cup B),
\end{equation}
where $\{\Lambda_{{\bf y},h}\}_{{\bf y} \in \Omega_h}$ denotes the basis functions of bilinear interpolation.  Note that the continuous sets $B$ and $D^{(k)}$ have been defined so that they include a one pixel wide buffer zone around their discrete counterparts, ensuring that bilinear interpolation is well defined outside $D^{(k)} \cup B$.  The reason for the introduction of $\tilde{B}_{\epsilon,h}({\bf x})$ is to avoid a ``kinking'' phenomena whereby isophotes given a guidance direction ${\bf g}({\bf x})$ instead extrapolate along ${\bf g}^*({\bf x}) \neq {\bf g}({\bf x})$.  This is discussed in detail in Section \ref{sec:refraction} and Section \ref{sec:continuum}.  But first we describe our process of spline detection and the generation of the guide field, and how this is done in such a way as to avoid a second ``kinking'' phenomena in the computation of ${\bf g}({\bf x})$ itself. 

\subsection{Automatic Spline Detection and Creation of the Guide Field} \label{sec:guidefield}

The goal of the automatic spline detection is to position splines as straight lines in areas near the active boundary of the inpainting domain where we have detected a strong edge.  These splines are lengthened so that they extend into the inpainting domain, and may be edited by the user before being used to construct the guide field.

A one pixel wide ring $R$ is computed a small distance from $\partial_{\mbox{active}} D_h$ in the undamaged area $\Omega_h \backslash (D_h \cup B_h)$.  (As we will see in the next subsection, this dilation of $R$ from $\partial_{\mbox{active}} D_h$ is crucial for obtaining an accurate orientation of extrapolated isophotes).

We then run a version of Canny edge detection \cite{Canny86acomputational} on an annulus of pixels containing the ring, and check to see which pixels on the ring intersect a detected edge.  Portions of the annulus not labeled as belonging to the current object are ignored.  For those pixels which do intersect a detected edge, we draw a spline in the direction of the edge beginning at that pixel and extending linearly into the inpainting domain.

The direction of the edge is calculated based on the {\em structure tensor} \cite{Wei98}
\begin{equation} \label{eqn:structureTensor}
\mathcal J_{\sigma,\rho} := g_{\rho} * ( \nabla u_{\sigma} \otimes \nabla u_{\sigma} ) \qquad \mbox{ where } u_{\sigma}:=g_{\sigma} * u,
\end{equation}
(and where $g_{\sigma}$ is a Gaussian centered at ${\bf 0}$ with variance $\sigma^2$ and $*$ denotes convolution) evaluated at the point ${\bf x}_{\mbox{base}} \in R$.  In practice these convolutions are truncated to windows of size $(4\sigma+1)^2$, $(4\rho+1)^2$ respectively, so in order to ensure that $J_{\sigma,\rho}({\bf x}_{\mbox{base}})$ is computed accurately we have to ensure $R$ is far enough away from $D_h \cup B_h$ that neither patch overlaps it.  Note that our approach is different from that of coherence transport \cite{Marz2007,Marz2011} (and later adopted by Cao et al. \cite{Cao2011}) which proposes calculating a modified structure tensor directly on $\partial_{\mbox{active}} D_h$.  As we will show shortly, the modified structure tensor introduces a kinking effect and so we do not use it.  Once $J_{\sigma,\rho}({\bf x}_{\mbox{base}})$ has been calculated for a given spline $\Gamma$, we assign $\Gamma$ a direction based on the vector ${\bf g}_{\Gamma}$
$${\bf g}_{\Gamma} := \pm \tanh\left( \frac{\lambda^+-\lambda^-}{\Lambda}\right){\bf v}^-,$$ 
where $(\lambda^{\pm},{\bf v}^{\pm})$ are the maximal and minimal eigenpairs of $J_{\sigma,\rho}({\bf x}_{\mbox{base}})$ respectively, $\Lambda$ is a constant that we fix at $\Lambda = 10^{-5}$ by default, and the sign of ${\bf g}_{\Gamma}$ is chosen in order to point into $D_h$.  Then, the guide field ${\bf g}$ at a point ${\bf x} \in D_h$ is computed by first finding the spline $\Gamma_{\bf x}$ closest to ${\bf x}$, and then applying the formula
$$g({\bf x}) = {\bf g}_{\Gamma_{\bf x}} e^{- \frac{d({\bf x},\Gamma_{\bf x})^2}{2\eta^2}}$$
where $d({\bf x},\Gamma_{\bf x})$ is the distance from ${\bf x}$ to $\Gamma_{\bf x}$ and $\eta > 0$ is a constant that we set at $\eta = 3$px by default.  In practice, if $d({\bf x},\Gamma_{\bf x}) > 3\eta$ we set ${\bf g}({\bf x}) = {\bf 0}$. 

\subsubsection{Kinking Artifacts Created by the Modified Structure Tensor and their Resolution} \label{sec:2ndKink}

Coherence transport operates by computing for each ${\bf x} \in \partial D_h$ a local ``coherence direction'' ${\bf g}({\bf x})$ representing the orientation of isophotes in $\Omega_h \backslash (D_h \cup B_h)$ near ${\bf x}$.  Inspired by the success of the {\em structure tensor} \eqref{eqn:structureTensor} as a robust descriptor of the local orientation of {\em complete} images, but also noting that $\mathcal \mathcal J_{\sigma,\rho}({\bf x})$ is undefined when ${\bf x} \in \partial D_h$, the authors proposed the following {\em modified structure tensor}
\begin{equation} \label{eqn:modStructureTensor}
\hat{\mathcal J}_{\sigma,\rho}({\bf x}) := \frac{\left(g_{\rho} * \left(
1_{\Omega_h \backslash (D^{(k)}_h \cup B_h)} \nabla v_{\sigma} \otimes \nabla v_{\sigma} \right)\right)({\bf x})}{\left(g_{\rho} * 1_{\Omega_h \backslash (D^{(k)}_h \cup B_h)} \right)({\bf x})} \hskip 2mm \mbox{ where } v_{\sigma}:=\frac{g_{\sigma} * \left(  1_{\Omega_h \backslash (D^{(k)}_h \cup B_h)} u\right)}{g_{\sigma} * 1_{\Omega_h \backslash (D^{(k)}_h \cup B_h)}},
\end{equation}
which has the advantage that it is defined even for ${\bf x} \in \partial D_h$ (note the use of $v_{\sigma}$ as opposed to $u_{\sigma}$ in \eqref{eqn:modStructureTensor}.  This notation was introduced in \cite{Marz2007} because $u_{\sigma}$ is already defined in \eqref{eqn:structureTensor}).  The authors provide no theoretical justification for $\hat{\mathcal J}_{\sigma,\rho}({\bf x})$ but instead argue that it solves the problem ``experimentally''.  However, closer inspection shows that the modified structure tensor is an inaccurate description of the orientation of undamaged isophotes near ${\bf x}$ when the latter is on or near $\partial D_h$.  We illustrate this using the simple example of inpainting the lower half plane given data in the upper half plane consisting of white below the line $y = x$ and grey above ($B_h = \emptyset$ in this case).  We take $\sigma = 2$, $\rho = 4$.  This is illustrated in Figure \ref{fig:modStructureTensor}(a), where the inpainting domain is shown in red and where we also show two square neighborhoods of size $(4\sigma+1)^2$, both centered on the line $y=x$, but one centered at point $A$ on $\partial D_h$, and one at point $B \in \Omega_h \backslash D_h$ far away enough from $D_h$ that neither it nor the larger neighborhood of size $(2\rho+1)^2$ (not shown) overlap with the latter.  The core problem lies in the ``smoothed'' version $v_{\sigma}$ of $u$, which for pixel $A$ is computed based on an weighted average of pixel values {\em only in the top half of the box above $y=0$}.  Ideally, $v_{\sigma}$  sitting on the line $y=x$ should be half way between grey and white.  However, as the weights are radially symmetric and the ``angular wedge'' of the partial box centered at $A$ contains far more grey pixels than it does white, at pt $A$ we end up with a color much closer to grey.  This results in a curvature of the level curves of $v_{\sigma}$ that can be seen in Figure \ref{fig:modStructureTensor}(b).  The result is that the modified structure tensor at point $A$ has an orientation of $57 ^{\circ}$ (off by $12^{\circ}$), whereas the regular structure tensor, which is defined at point $B$ since point $B$ is far enough away from $D_h$ to be computed directly, predicts the correct orientation of $45^{\circ}$.  Figure \ref{fig:modStructureTensor}(c)-(d) show the results of inpainting using respectively the minimal eigenvalue of modified structure tensor at point $A$ and the structure tensor at point $B$ as the guidance direction.  This is why we in Section \ref{sec:guidefield} we backed our splines up from the inpainting domain and computed their orientation using the structure tensor rather than the modified structure tensor.

\begin{figure}
\centering
\begin{tabular}{cccc}
\subfloat[Points $A$, $B$ and their respective neighborhoods of size $(4\sigma+1)^2$.]{\includegraphics[width=.22\linewidth]{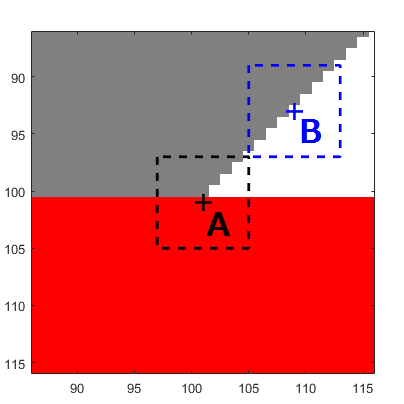}} & 
\subfloat[The isocontours of $v_{\sigma}$ used to compute the modified structure tensor \eqref{eqn:modStructureTensor} bend near point $A$.]{\includegraphics[width=.22\linewidth]{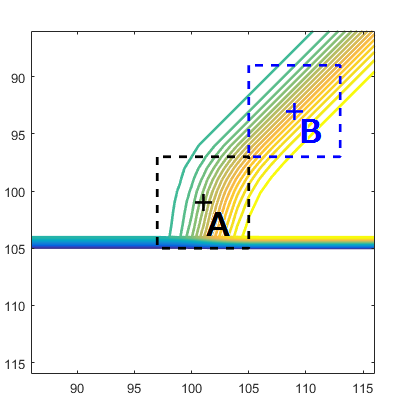}} &
\subfloat[Inpainting using Guidefill with ${\bf g}$ calculated at pt $A$ using the modified structure tensor \eqref{eqn:modStructureTensor}.]{\includegraphics[width=.20\linewidth]{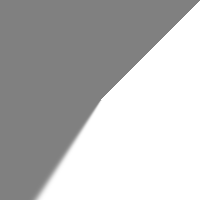}} & 
\subfloat[Inpainting using Guidefill with ${\bf g}$ calculated at pt $B$ using the ordinary structure tensor \eqref{eqn:structureTensor}.]{\includegraphics[width=.20\linewidth]{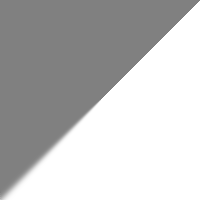}} \\
\end{tabular}
\caption{{\bf Kinking induced by the modified structure tensor.}  Consider the simple problem shown in (a) of extending a $45^{\circ}$ line into the inpainting domain (red).  A first step is to measure the orientation of this line, which coherence transport proposes to do directly on $\partial D_h$, at for example point A, using the modified structure tensor $\hat{\mathcal J}_{\sigma,\rho}$ \eqref{eqn:modStructureTensor} (with $\sigma = 2, \rho = 4$).  However, as can be seen in (b), the level lines of $v_{\sigma}$, a smoothed version of $u$ computed as an intermediate in \eqref{eqn:modStructureTensor} (as noted in the text, the notation $v_{\sigma}$ was introduced in \cite{Marz2007} because the ordinary structure tensor \eqref{eqn:structureTensor} already defines a $u_{\sigma}$), bend in the vicinity of $\partial D_h$.  The resulting guidance direction ${\bf g}_A$ (computed at $A$ using the modified structure tensor) makes an angle of $57^{\circ}$ with the horizontal, off by $12^{\circ}$ from the correct value of $45^{\circ}$ obtained by evaluating the ordinary structure tensor \eqref{eqn:structureTensor} at $B$. (c)-(d) show the results of inpainting using Guidefill with guidance directions ${\bf g}_A$ and ${\bf g}_B$ respectively.}
\label{fig:modStructureTensor}
\end{figure}

\begin{rem}

In some ways our spline-based approach resembles the earlier work by Masnou and Morel \cite{masnou1998level} and later Cao et al. \cite{Cao2011} in which level lines are interpolated across the inpainting domain by joining pairs of ``compatible T-junctions'' (level lines with the same grey value intersecting the boundary with opposite orientations).  This is done first as straight lines \cite{masnou1998level} , and later as Euler spirals \cite{Cao2011}.  An $O(N^3)$ algorithm is proposed in \cite{Cao2011} for joining compatible $T$-junctions, where $N$ is the number of such junctions.  This could be beneficial in situations such as Figure \ref{fig:guidefield}(a)-(b), where a similar process was done by hand in the editing step.  

However, our situation is different because we no longer have a simple interpolation problem - in particular, instead of an inpainting domain surrounded on both sides by useable pixels, we now typically have $D_h$ with usable pixels on one side, and bystander pixels on the other (for example, pixels belonging to some foreground object as in Figure \ref{fig:pipeline}(f)).  In some cases we might get around this by searching the perimeter of $D_h \cup B_h$, as opposed to just the perimeter of $D_h$, for compatible T-junctions.  However, this will not always work.  For example, consider the problem of inpainting a compact object in the midground partially occluded by something in the foreground.  In this case the usable pixels $\Omega_h \backslash (B_h \cup D_h)$ may be a small island entirely surrounded by $B_h \cup D_h$.  In such cases our problem is clearly no longer interpolation but extrapolation, and it doesn't make sense to talk about joining compatible T-junctions.

Nevertheless, one way of incorporating this idea would be to declare two linear splines $S_1$, $S_2$ ``compatible'' if their corresponding displacement vectors ${\bf x}_1$, ${\bf x}_2$ obey ${\bf x}_1 \cdot {\bf x}_2 \leq 0$.  Compatible splines could then be further tested by comparing patches around the base of each, with the patches rotated according to the orientation of the spline.  Those with a high match score could be joined, while the rest could remain linearly extended as before.  However, this is beyond the scope of the present work.

\end{rem}

\subsection{Resolving Additional Kinking Artifacts using Ghost Pixels} \label{sec:refraction}

\begin{figure}
\centering
\begin{tabular}{cc}
\subfloat[coherence transport ($\theta=90^{\circ}$)]{\includegraphics[width=.43\linewidth]{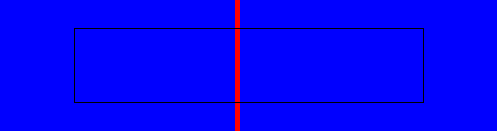}} & 
\subfloat[Guidefill ($\theta=90^{\circ}$)]{\includegraphics[width=.43\linewidth]{noKink.png}} \\
\subfloat[coherence transport ($\theta=73^{\circ}$)]{\includegraphics[width=.43\linewidth]{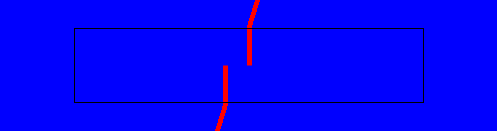}} & 
\subfloat[Guidefill ($\theta=73^{\circ}$)]{\includegraphics[width=.43\linewidth]{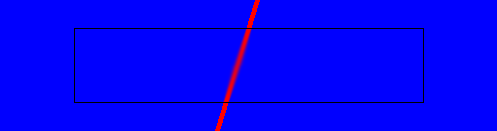}} \\
\end{tabular}
\caption{Connecting a broken lines using coherence transport (left column) and Guidefill (right column).  When the line to be extrapolated is vertical ($\theta = 90^{\circ}$), both methods are successful.  However, when the line is rotated slightly ($\theta = 73^{\circ}$) coherence transport causes the extrapolated line to ``kink'', whereas Guidefill continues to produce a successful connection.  A theoretical explanation for this phenomena is provided in Theorem \ref{thm:teaser} and illustrated in Figure \ref{fig:coherenceTheory}.}
\label{fig:connectLine}
\end{figure}

The last section showed how coherence transport can cause extrapolated isophotes to ``kink'' due to a incorrect measurement of the guidance direction ${\bf g}$, and how this is overcome in Guidefill.  In this section we briefly go over a second kinking effect that can occur even when ${\bf g}$ is known exactly, and how Guidefill overcomes this as well.  More details and a theoretical explanation are provided by our continuum analysis in Section \ref{sec:continuum}.

Figure \ref{fig:connectLine} illustrates the use of coherence transport and Guidefill - each with $\epsilon = 3$px and $\mu = 50$ - for connecting a pair of broken lines.  In each case both methods are provided the correct value of ${\bf g}$.  When the line to be extrapolated is vertical ($\theta = 90^{\circ}$), both methods are successful.  However, when the line is rotated slightly ($\theta = 73^{\circ}$) coherence transport causes the extrapolated line to ``kink'', whereas Guidefill makes a successful connection.  This happens because coherence transport is trying to bias inpainting in favor of those pixels ${\bf y}$ in the partial ball  $B_{\epsilon,h}({\bf x}) \cap (\Omega_h \backslash (D^{(k)} \cup B))$ sitting on the line $L$ passing through ${\bf x}$ in the direction ${\bf g}({\bf x})$, but in this case the {\em whole ball} $B_{\epsilon,h}({\bf x})$ contains no such pixels (other than ${\bf x}$ itself, which is excluded as it hasn't been inpainted yet) - see Figure \ref{fig:ballRotate}(a).  Instead coherence transport favors the pixel(s) {\em closest} to $L$, which in this case happens to be ${\bf y}={\bf x}+(0,h)$.  Since ${\bf y}-{\bf x}$ is in this case parallel to $(0,1)$, isophotes are extrapolated along ${\bf g}^*({\bf x}) = (0,1)$ instead of along ${\bf g}({\bf x})$ as desired.  This implies that inpainting can only be expected to succeed when ${\bf g}({\bf x})$ is of the form ${\bf g}({\bf x})=(\lambda n, \lambda m)$ for $\lambda \in \field{R}$, $n,m \in \field{Z}$ and $n^2+m^2 \leq 9$ (only a finite number of directions).

We resolve this problem by replacing $B_{\epsilon,h}({\bf x})$ with the rotated ball of ghost pixels $\tilde{B}_{\epsilon,h}({\bf x})$, which is constructed in order to contain at least one ``pixel'' on $L$ besides ${\bf x}$, as illustrated in Figure \ref{fig:ballRotate}(b).

In Figure \ref{fig:ghostReal} we also illustrate the importance of ghost pixels on a the non-synthetic example with a smoothly varying guide field shown in Figure \ref{fig:guidefield}.  When ghost pixels are not used, the extrapolated isophotes are unable to smoothly curve as only finitely many transport directions are possible.  The result is a break in the extrapolated isophote.  On the other hand, when ghost pixels are turned on we get a smoothly curving isophote with no break.

\begin{figure}
\centering
\begin{tabular}{cc}
\subfloat[Ghost pixels disabled.]{\includegraphics[width=.43\linewidth]{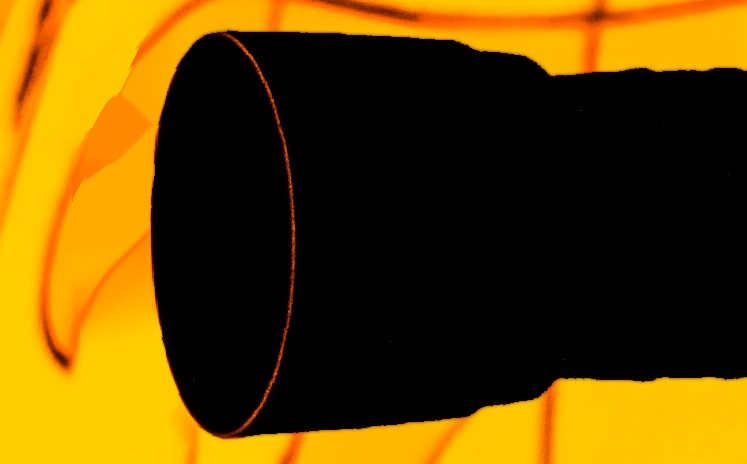}} & 
\subfloat[Ghost pixels turned on.]{\includegraphics[width=.43\linewidth]{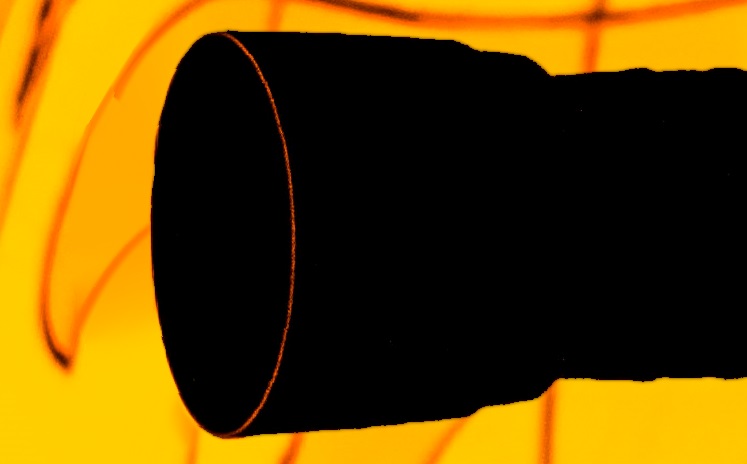}} \\
\end{tabular}
\caption{The effect of ghost pixels on a non-synthetic example ($\epsilon = 3$px, $\mu=50$).  When ghost pixels are disabled, the extrapolated isophotes are unable to smoothly curve as only finitely many transport directions are possible.}
\label{fig:ghostReal}
\end{figure}

\subsection{Automatic Determination of a Good Pixel Order (Smart Order)} \label{sec:ordering}

\begin{figure}
\centering
\begin{tabular}{cccc}
\subfloat[Onion shell order (synthetic).]{\includegraphics[width=.21\linewidth]{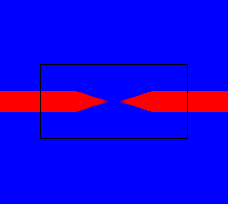}} &
\subfloat[Smart order (synthetic).]{\includegraphics[width=.21\linewidth]{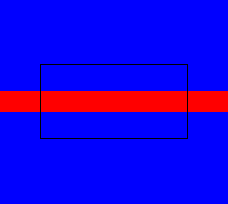}} &
\subfloat[Onion shell order (non-synthetic).]{\includegraphics[width=.21\linewidth]{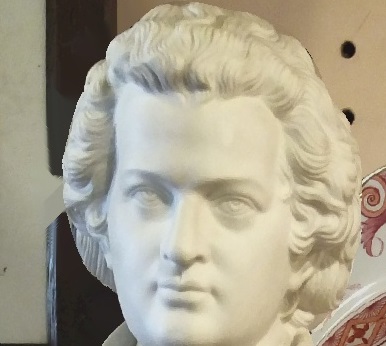}} &
\subfloat[Smart order (non-synthetic).]{\includegraphics[width=.21\linewidth]{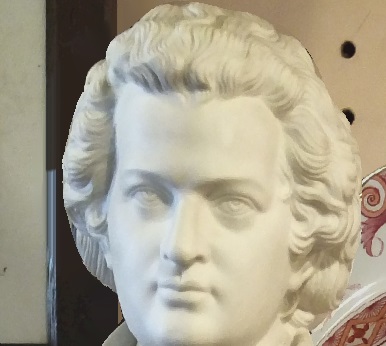}} \\
\end{tabular}
\caption{{\bf Importance of Pixel Order.} When pixels are filled in a simple ``onion shell'' order (i.e. filled as soon as they appear on the boundary of the inpainting domain), this creates artifacts including ``clipping'' of isophotes.  Our smart order \eqref{eqn:ready} avoids this by using information from the pre-computed guide field to automatically decide when pixels should be filled.}
\label{fig:goodbadOrder}
\end{figure}

Figure \ref{fig:goodbadOrder}(a) and (c) shows the result of inpainting using an ``onion shell'' fill order  (where pixels are filled as soon as they appear on the boundary of the inpainting domain), for a synthetic and non-synthetic example.  In these cases extrapolated lines are cut off due to certain pixels being filled too early.  Figure \ref{fig:goodbadOrder}(b) and (d) show the same examples using our improved fill order defined by the \mbox{ready} function \eqref{eqn:ready}.

\vskip 1mm

\noindent{\bf Review of pixel ordering strategies in the literature.}  There are at least three pixel ordering strategies for shell based inpainting methods currently in the literature.  Sun et al. \cite{Sun2005} proposed having the user draw critical curves over top of the image, and then filling patches centered on those curves first.  M\"arz \cite{Marz2011} suggested calculating non-standard distance from the boundary functions, and then filling pixels in an order based on those functions.  Finally, Criminisi et al. \cite{Criminisi04regionfilling} computes for each ${\bf p} \in \partial D_h$ a patch priority function $P({\bf p})$ as a product of a confidence term $C({\bf p})$ and a data term $D({\bf p})$, that is $P({\bf p}) = C({\bf p})D({\bf p})$ where
\begin{equation} \label{eqn:criminisiConfidence}
C({\bf p}) = \frac{\sum_{{\bf q} \in {\Psi_{\bf p} \cap (\mathcal I \backslash \Omega)}} C({\bf q})}{|\Psi_{\bf p}|} \quad \mbox{ and } D({\bf p})=\frac{|\nabla I^{\perp} _{\bf p} \cdot {\bf n}_{\bf p}|}{\alpha},
\end{equation}
where $\Psi_{\bf p}$ denotes the patch centered at ${\bf p}$, $\nabla^{\perp} I_{\bf p}$ is the orthogonal gradient to the image $I$ at ${\bf p}$, $\alpha =255$, and ${\bf n}_{\bf p}$ denotes the inward facing unit normal to the current boundary of the inpainting domain.

Patches are then filled sequentially, with the highest priority patch filled first (note that after a patch has been filled, the boundary has now changed, and certain patch priorities must be recomputed).  

\vskip 1mm

\noindent{\bf Our approach.}  The approach of M\"arz \cite{Marz2011} based on distance maps might seem the most natural - and indeed there are very simple ways one might imagine constructing a distance map given our already known splines and guidefield.  For example, distance could grow more ``slowly'' along or close to splines, while growing at the normal rate far away from splines where the guide field is zero.  However, we chose not to go to this route because we wanted to avoid the extra computational effort involved in computing such a map.  

Instead, our approach most closely resembles the approach in \cite{Criminisi04regionfilling}.  For each ${\bf x} \in \partial_{\mbox{active}} D_h$, we compute the ratio
\begin{equation} \label{eqn:confidence}
C({\bf x})=\frac{\sum_{{\bf y} \in \tilde{B}_{\epsilon,h}({\bf x}) \cap (\Omega \backslash (D^{(k)} \cup B) )}w_{\epsilon}({\bf x},{\bf y})}{\sum_{{\bf y} \in \tilde{B}_{\epsilon,h}({\bf x}) }w_{\epsilon}({\bf x},{\bf y})},
\end{equation}
where $w_{\epsilon}({\bf x},{\bf y})$ is the weight function \eqref{eqn:weight} depending implicitly on ${\bf g}$  We have suggestively suggestively named this ratio $C$ because it plays a role similar to the confidence term \eqref{eqn:criminisiConfidence}.  However, because our definition of $C({\bf x})$ also implicitly depends on the guide field {\bf g}({\bf x}), it will be small when not much information is available in the direction ${\bf g}({\bf x})$, even if the ball $\tilde{B}_{\epsilon,h}({\bf x})$ is relatively full.  In this sense it is also playing a role analogous to the data term \eqref{eqn:criminisiConfidence} which tries to ensure that the angle between $\nabla^{\perp} I_{\bf p}$ and ${\bf n}_{\bf p}$ is not too large.  Unlike Criminisi et al. \cite{Criminisi04regionfilling}, our algorithm is parallel and not sequential. Therefore, instead of every iteration filling the pixel ${\bf x} \in \partial D_h$ with the highest value of $C({\bf x})$, at every iteration we fill {\em all} pixels ${\bf x} \in \partial D_h$ for which $C({\bf x})$ is greater than a threshold.  That is, we define
\begin{equation} \label{eqn:ready}
\mbox{ ready}({\bf x}) = 1( C({\bf x}) > c)
\end{equation}
where $c > 0$ is some small user supplied constant ($c=0.05$) by default.

\vskip 1mm

\noindent{\bf Possible extensions.} Unlike \cite{Criminisi04regionfilling}, which assigns high priority to pixels with a large gradient, \eqref{eqn:ready} does not take into account the size of $\|{\bf g}({\bf x})\|$.  The result is that areas where ${\bf g}={\bf 0}$ fill concurrently with areas where $\|{\bf g}\| > 0$.  However, if one wanted to obtain an algorithm along the lines of Sun et al. \cite{Sun2005} where the region with $\|{\bf g}\| > 0$ filled first, one would only have to add a data term
$$D({\bf x}) = \|{\bf g}({\bf x})\|$$
and then modify \eqref{eqn:ready} as
\begin{equation} \label{eqn:ready2}
\mbox{ ready}({\bf x}) = 1( D({\bf x}) > c_2) 1( C({\bf x}) > c_1),
\end{equation}
where $c_1 = c = 0.05$ by default.  For $c_2$, one would take $c_2 = 0 $ initially, until it was detected that the entire region $\|{\bf g}\| > 0$ had been filled, after which point one could revert back to \eqref{eqn:ready}.

\subsection{GPU Implementation} \label{sec:GPU}

Here we sketch two GPU implementations of Guidefill, differing in how they assign GPU threads to pixels in $\Omega_h$.  In Section \ref{sec:complexity} we will analyze the time and processor complexity of these algorithms, and show that they belong to different complexity classes.  The motivation behind these algorithms is the observation that a typical HD image contains millions of pixels, but the maximum number of concurrent threads in a typical GPU is in the tens of thousands\footnote{For example, the GeForce GTX Titan X is a flagship NVIDIA GPU at the time of writing and has a total of 24 multiprocessors \cite{NvidiaHomepage} each with a maximum of 2048 resident threads \cite[Appendix G.1]{CUDAGuide}.}.  Hence, it can be advantageous to ensure that GPU threads are only assigned to the subset of pixels being currently worked on.
\begin{enumerate}
\item {\bf Guidefill without tracking.}  This implementation assigns one GPU thread per pixel in $\Omega_h$, regardless of whether or not that pixel is currently being worked on.  This implementation is simplest, but for the reason above does not scale well to very large images.
\item {\bf Guidefill with tracking.}  This implementation maintains a list of the coordinates of every pixel in $\partial_{\mbox{active}}D_h$, which it updates every iteration using a method that requires $O(|\partial_{\mbox{active}}D_h|)$ threads to do $O(\log|\partial_{\mbox{active}}D_h|)$ work each.  This extra overhead means a longer runtime for very small images, but leads to massive savings for large images as we can assign GPU threads only to pixels in $\partial_{\mbox{active}}D_h$.
\end{enumerate}
\noindent Implementation details of both methods are in the online supplementary material.

\section{Continuum Limit} \label{sec:continuum}

Here we present a special case of the analysis in our forthcoming paper \cite{Forthcoming}, which aims to provide a rigorous justification of the discussion in Section \ref{sec:refraction}.  This is accomplished by considering the continuum limit of $u_h$ as $h \rightarrow 0$ with $r := \epsilon/h \in \field{N}$, the radius of the neighborhood $A_{\epsilon,h}({\bf x})$ measured in pixels, fixed.  Note that this is different from the limit considered in \cite{Marz2007}, where first $h \rightarrow 0$ and then $\epsilon \rightarrow 0$ - see Remark \ref{remark:differences}.  Our objective is to assume enough complexity to explain the phenomena we have observed, but otherwise to keep our analysis as simple as possible.  To that end, we assume firstly a constant guide direction 
$${\bf g}({\bf x}) := {\bf g},$$
as this is all that is required to capture the phenomena in question.  More precisely, we aim to prove convergence of $u_h$, when computed by inpainting using coherence transport or Guidefill with constant guidance direction ${\bf g}$ to $u$ obeying a (weak) transport equation
\begin{equation} \label{eqn:transport}
\nabla u \cdot {\bf g}^* = 0,
\end{equation}
where ${\bf g}^* \neq {\bf g}$ in general (indeed this inequality is source of our observed ``kinking'').  We will define convergence relative to discrete $L^p$ norms defined shortly by \eqref{eqn:Lp}, and we will see that convergence is always guaranteed for $p < \infty$, but not necessarily when $p = \infty$.  We then connect this latter point back to a known issue of progressive blurring when bilinear interpolation is iterated \cite[Sec. 5]{VariationalGradient}.  

\vskip 1mm

\noindent{\bf Assumptions.}  We assume no bystander pixels ($B = \emptyset$), and that the image domain $\Omega$, inpainting domain $D$, and undamaged region $\mathcal U := \Omega \backslash D$ are all simple rectangles
$$\Omega=(0,1] \times (-\delta, 1] \qquad D = (0,1]^2 \qquad \mathcal U = (-\delta,0] \times (0,1]$$
equipped with periodic boundary conditions at $x=0$ and $x=1$.  We discretize $D = (0,1]^2$ as an $N \times N$ array of pixels $D_h=D \cap (h \cdot \field{Z}^2)$ with width $h:=1/N$.  We assume that $h < \delta/r$ so that $\epsilon < \delta$.  Given $f_h : D_h \rightarrow \field{R}$, we introduce the following discrete $L^p$ norm for $p \in [0,\infty]$
\begin{equation} \label{eqn:Lp}
\|f_h \|_p := \big(\sum_{{\bf x} \in D_h } |f_h({\bf x})|^p h^2\big)^{\frac{1}{p}}, \qquad \qquad \|f_h \|_{\infty} := \max_{{\bf x} \in D_h } |f_h({\bf x})|.
\end{equation}
We similarly define $\Omega_h = \Omega \cap (h \cdot \field{Z}^2)$, $\mathcal U_h = \mathcal U \cap (h \cdot \field{Z}^2)$,
and assume that the pixels are ordered using the default onion shell ordering, so that at each iteration $D^{(k)}_h = \{ (ih,jh) : j>k \}_{i=1}^{N}$.

\vskip 1mm

\noindent{\bf Regularity.}  In \cite{Forthcoming} we consider general boundary data $u_0 : \mathcal U \rightarrow \field{R}^d$ with low regularity assumptions, including but not limited to nowhere differentiable boundary data with finitely many jump discontinuities.  Here, we limit ourselves to piecewise $C^2$ boundary data because this is the case most relevant to image processing.  To be more precise, we assume that $u_0$ is $C^2$ everywhere on $\mathcal U$ except for on a (possibly empty) finite set of smooth curves $\{\mathcal C_i\}_{i=0}^N$ where $N \geq 0$.  We assume that the $\mathcal C_i$ intersect neither themselves nor each other, and moreover that within $0 < x \leq 1$, $-\delta < y \leq 0$ each $\mathcal C_i$ can be parametrized as a smooth monotonically increasing function $y=f_i(x)$ each of which intersects the line $y=0$ with a strictly positive angle.

\vskip 1mm

\noindent{\bf Weak Solution.}  As we have allowed discontinuous boundary data $u_0$, the solution to \eqref{eqn:transport} given boundary data $u_0$ must be defined in a weak sense.  Since we have assumed a constant guidance direction ${\bf g}({\bf x}):={\bf g}$ and due to the symmetry of the situation, the resulting transport direction ${\bf g}^*$ will also be constant (we will prove this), so this is simple.  So long as ${\bf g}^* \cdot e_2 \neq 0$, we simply define the solution to the transport problem \eqref{eqn:transport} with boundary conditions $u(x,0)=u_0(x,0)$, u(0,y)=u(1,y) to be 
\begin{equation} \label{eqn:weak}
u(x,y) = u_0(x-\cot(\theta^*)y \mbox{ mod } 1,0)
\end{equation}
where the $\mbox{mod}$ $1$ is due to our assumed periodic boundary conditions and where
$$\theta^* = \theta({\bf g}^*) \in (0,\pi)$$
is the counterclockwise angle between the x-axis and a line parallel to ${\bf g}^*$.

\begin{theorem} \label{thm:teaser}  Let the image domain $\Omega$, inpainting domain $D$, undamaged region $\mathcal U$, as well as their discrete counterparts, be defined as above.  Similarly, assume the boundary data $u_0 : \mathcal U \rightarrow \field{R}^d$ obeys the assumed regularity conditions above, in particular that it is $C^2$ except for on a finite, possibly empty set of smooth curves $\{ \mathcal C_i \}_{i=1}^N$, $N \geq 0$ with the assumed properties.

Assume $D_h$ is inpainted using Algorithm 1, with neighbourhood $$A_{\epsilon,h}({\bf x}) \in \{ B_{\epsilon,h}({\bf x}), \tilde{B}_{\epsilon,h}({\bf x}) \},$$ (that is, either the neighborhood used by coherence transport or the one used by Guidefill).  Let $w_{\epsilon}({\bf x},{\bf y})$ be given by \eqref{eqn:weight} with guidance direction ${\bf g}({\bf x}):={\bf g}$ constant.  Suppose we fix $r := \epsilon/h \in \field{N}$, assume $r \geq 2$ (that is, the radius of $A_{\epsilon,h}({\bf x})$ is at least two pixels) and consider $h \rightarrow 0$.  Define the transport direction ${\bf g}^*$ by
\begin{equation} \label{eqn:limitDirection}
{\bf g}^* = \frac{\sum_{{\bf y} \in A^-_r} w_r({\bf 0},{\bf y}){\bf y}}{\sum_{{\bf y} \in A^-_r} w_r({\bf 0},{\bf y})} \qquad A^-_r:=\{(y_1,y_2) \in \frac{1}{h}A_{\epsilon,h}(0) : y_2 \leq -1 \}.
\end{equation}
Note that $A^-_r$ depends only on $r$.  Also note that we have written $w_r$ to mean the weights \eqref{eqn:weight} with $\epsilon$ replaced by $r$.  Let $u : (0,1]^2 \rightarrow \field{R}^d$ denote the weak solution \eqref{eqn:weak} to the transport PDE \eqref{eqn:transport} with transport direction ${\bf g}^*$ and with boundary conditions $u(x,0) = u_0(x,0)$, u(0,y)=u(1,y).

Then $u$ exists and for any $p \in [1,\infty]$ and for each channel\footnote{Remember, $u$, $u_0$, and $u_h$ are all vector valued.  We could have made this more explicit by writing $u^{(i)}-u^{(i)}_h$ in \eqref{eqn:Lpbound}, \eqref{eqn:uniformBound} to emphasize that it holds channel-wise for each $i=1,\ldots,d$, but felt that this would lead to too much clutter.} of $u$, $u_h$ we have the bound
\begin{equation} \label{eqn:Lpbound}
\|u-u_h\|_p \leq K h^{\frac{1}{2p}}
\end{equation}
if $\{ \mathcal C_i \}$ is non-empty and 
\begin{equation} \label{eqn:uniformBound}
\|u-u_h\|_p \leq K h.
\end{equation}
independent of $p$ otherwise (that is, if $u_0$ is $C^2$ everywhere).  Here $K$ is a constant depending only on $u_0$ and $r$.
\end{theorem}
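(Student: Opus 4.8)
The plan is to exploit the two structural features of the set-up — the $x$-periodicity and the fact that the onion-shell order fills $D_h$ one horizontal row at a time — to reduce the scheme to a scalar linear recurrence in each Fourier mode, and then to read off the continuum behaviour from the roots of its characteristic polynomial. Concretely, writing $\hat u_j(\xi)$ for the discrete Fourier transform in $x$ of the filled values in row $j$ and setting $\omega = 2\pi\xi h$, the update rule in Algorithm~1 with the position-independent weights \eqref{eqn:weight} and constant ${\bf g}$ becomes, because every filled neighbour lies in a row strictly below the current one, a finite-order recurrence $\hat u_j(\xi)=\sum_{m=1}^{M} c_m(\omega)\,\hat u_{j-m}(\xi)$ whose coefficients $c_m(\omega)$ are explicit trigonometric polynomials built from $w_r$ restricted to $A^-_r$. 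For coherence transport ($A_{\epsilon,h}=B_{\epsilon,h}$) the recurrence is directly on lattice values; for Guidefill ($A_{\epsilon,h}=\tilde B_{\epsilon,h}$) each ghost pixel is first expanded by bilinear interpolation into its four lattice neighbours, which only changes the $c_m(\omega)$ and leaves the framework intact.

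Two facts then drive everything. First, for existence of the weak solution \eqref{eqn:weak} I would check that $\theta^*$ is genuinely interior: since every ${\bf y}\in A^-_r$ has $y_2\le -1$ and the weights are positive, the second component of ${\bf g}^*$ in \eqref{eqn:limitDirection} satisfies $g_2^*<0$, so ${\bf g}^*\cdot e_2\ne 0$, $\cot\theta^*=g_1^*/g_2^*$ is finite, and \eqref{eqn:weak} is well defined. Second, I analyse the characteristic polynomial $P(\lambda,\omega)=\lambda^{M}-\sum_m c_m(\omega)\lambda^{M-m}$. At $\omega=0$ the coefficients are non-negative and sum to $1$, so $\lambda=1$ is a root; because $r\ge 2$ forces the support of the row weights to contain the consecutive offsets $m=1,2$ (hence gcd $1$), $\lambda=1$ is the unique root on the unit circle and is simple, with all others strictly inside. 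The implicit function theorem then tracks the dominant root for small $\omega$, and a direct first-moment computation gives $\lambda_1'(0)=-\,i\,g_1^*/g_2^*=-i\cot\theta^*$, so that $\lambda_1(\omega)=1-i\cot\theta^*\,\omega+O(\omega^2)$. This is precisely the per-row phase $e^{-i\omega\cot\theta^*}$ needed to reproduce the shift in \eqref{eqn:weak}; note that since bilinear interpolation reproduces affine functions, the first moment of the Guidefill weights is unchanged, so the same ${\bf g}^*$ in \eqref{eqn:limitDirection} is obtained in both cases.

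With the roots in hand I would write the exact solution as $\hat u_j(\xi)=\sum_s A_s(\xi)\lambda_s(\xi)^{j}$, the $A_s$ fixed by the $M$ rows of boundary data, and compare $A_1\lambda_1^{j}$ with the continuum mode $e^{-2\pi i\xi\cot\theta^*\,y}\hat u_0(\xi)$ at $y=jh$. The error splits into a dominant-mode part, where over the $N=1/h$ rows the accumulated phase matches to $O(j\omega^2)=O(h)$ and the $O(\omega^2)$ drift of $|\lambda_1|$ acts as a numerical viscosity of size $O(h)$, and a subdominant part $\sum_{s\ge 2}A_s\lambda_s^{j}$ that decays geometrically. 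Summing over $\xi$ via the $\ell^2$ ($p=2$) estimate and interpolating then yields the two regimes: when $u_0\in C^2$ its Fourier coefficients decay like $|\xi|^{-2}$, the high modes are harmless, and one obtains the clean $O(h)$ bound \eqref{eqn:uniformBound} uniformly in $p$ (including $p=\infty$); when $u_0$ is only piecewise $C^2$ the transported jumps across the curves $\mathcal C_i$ are smeared by the viscosity into boundary layers of width $\sim\sqrt h$, so that $\|u-u_h\|_p^p$ is $O(\sqrt h)$ on the layer and $O(h^p)$ off it, giving exactly $\|u-u_h\|_p\le K h^{1/(2p)}$ as in \eqref{eqn:Lpbound} — and explaining why no uniform ($p=\infty$) rate survives.

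I expect two steps to carry the real difficulty. The first is obtaining uniform spectral control: the root bounds must hold for every $\omega\in(-\pi,\pi]$, not just near $0$, which needs a compactness argument showing $\max_{s}|\lambda_s(\omega)|\le 1$ with strict inequality away from $\omega=0$, together with a uniform spectral gap for the subdominant roots. The second, and in my view the genuine crux, is the sharp boundary-layer estimate in the piecewise-$C^2$ case: pinning the smearing width to the scale $\sqrt h$ — the same iterated-bilinear-interpolation blurring noted in \cite[Sec.~5]{VariationalGradient} — and showing the error is $O(1)$ only on a set of measure $O(\sqrt h)$ is what produces the exponent $1/(2p)$ and must be done with care near each $\mathcal C_i$.
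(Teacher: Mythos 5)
Your proposal takes a genuinely different route from the paper. The paper's own proof (which, note, only covers the case $u_0\in C^2$ with no ghost pixels, deferring low-regularity data and the rotated ball to \cite{Forthcoming}) is a physical-space argument: it exploits the positivity of the weights (a discrete maximum principle) together with a pointwise second-order Taylor expansion, showing the local truncation error is $O(h^2)$ because the first-moment term $\nabla u\cdot{\bf g}^*$ vanishes by the PDE, and then accumulates this over the $N=1/h$ rows by induction to get $|e_h(ih,jh)|\le jDr^2h^2\le Dr^2h$ directly in $L^\infty$ (all $p<\infty$ then follow for free since the discrete domain has unit mass). Your von Neumann analysis is more machinery but also more informative: it exposes the numerical viscosity $|\lambda_1(\omega)|\approx 1-c\omega^2$ that the paper's argument never sees, and your boundary-layer heuristic of width $\sqrt h$ around each $\mathcal C_i$ is exactly the right mechanism for the exponent $1/(2p)$ in \eqref{eqn:Lpbound}.

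There is, however, a concrete gap at precisely the case the paper does prove. Your Fourier route delivers \eqref{eqn:uniformBound} cleanly in $L^2$ (per-mode error $\lesssim j\omega^2|\hat u_0(\xi)|\lesssim h\,\xi^2|\hat u_0(\xi)|$, and $\sum_\xi\xi^4|\hat u_0(\xi)|^2=\|u_0''\|_{L^2}^2<\infty$), but to reach $p=\infty$ you must sum $\sum_\xi\min(h\xi^2,1)\,|\hat u_0(\xi)|$, which is $O(h)$ only if $\sum_\xi\xi^2|\hat u_0(\xi)|<\infty$, i.e.\ only if $u_0''$ lies in the Wiener algebra --- strictly stronger than $C^2$. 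For merely $C^2$ data the pointwise decay $|\hat u_0(\xi)|=o(\xi^{-2})$ yields only $o(\sqrt h)$ in $L^\infty$ by this argument. The fix is to not route the smooth case through Fourier at all: the monotonicity of the scheme plus the uniform Hessian bound gives the $L^\infty$ estimate in three lines, as in the paper. Two smaller points: (i) your claim that $r\ge2$ forces the occupied row-offsets to have gcd $1$ is immediate for $B_{\epsilon,h}$ (rows $-1$ and $-2$ both contain lattice points) but needs an argument for the rotated ball $\tilde B_{\epsilon,h}$ at small angles, where after bilinear spreading the occupied rows could conceivably all be even --- otherwise a second root sits on the unit circle at $\omega=0$ and your dominant-mode decomposition breaks; (ii) the uniform-in-$\omega$ spectral gap and the sharp $\sqrt h$-layer estimate that you flag as the crux are indeed the hard content, and they are exactly what the paper declines to prove here, so your sketch should be read as a program for the general case rather than a complete proof of it.
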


\begin{rem}
The transport direction ${\bf g}^*$ predicted by Theorem \ref{thm:teaser} has a simple geometric interpretation.  It is the average position vector {\em or center of mass} of the set $A^-_r$ with respect to the normalized weights $w_r$ \eqref{eqn:weight}.  This is true regardless of whether or not $A^-_r$ is axis aligned.  For coherence transport and Guidefill, we give the set $A^-_r$ the special names $b^-_r$ and $\tilde{b}^-_r$ respectively.  For ${\bf g} \neq {\bf 0}$ they are given by
\begin{eqnarray*} 
b^-_r  &:=& \{ (n,m) \in \field{Z}^2 : n^2+m^2 \leq r^2, m \leq -1 \}  \\
\tilde{b}^-_r &:=& \{ n\hat{\bf g}+m\hat{\bf g}^{\perp} : (n,m) \in \field{Z}^2, n^2+m^2 \leq r^2,  n\hat{\bf g}\cdot e_2+m\hat{\bf g}^{\perp} \cdot e_2 \leq -1 \}. 
\end{eqnarray*}
where $\hat{\bf g} := {\bf g}/\|{\bf g}\|$ (if ${\bf g} = {\bf 0}$ we set $\tilde{b}^-_r = b^-_r$).  These sets can be visualized by looking at the portion of the balls in Figure \ref{fig:ballRotate}(a)-(b) below the line $y=-1$.  The limiting transport directions for coherence transport and Guidefill - denoted by ${\bf g}^*_{\mbox{c.t.}}$ and ${\bf g}^*_{\mbox{g.f.}}$ respectively - are then given by
\begin{equation} \label{eqn:ctgf_limits}
{\bf g}^*_{\mbox{c.t.}} = \frac{\sum_{{\bf y} \in b^-_r} w_r({\bf 0},{\bf y}){\bf y}}{\sum_{{\bf y} \in b^-_r} w_r({\bf 0},{\bf y})} \qquad \mbox{and} \qquad {\bf g}^*_{\mbox{g.f.}} = \frac{\sum_{{\bf y} \in \tilde{b}^-_r} w_r({\bf 0},{\bf y}){\bf y}}{\sum_{{\bf y} \in \tilde{b}^-_r} w_r({\bf 0},{\bf y})}.
\end{equation}
Although these formulas differ only in the replacement of a sum over $b^-_r$ with a sum of over $\tilde{b}^-_r$, this difference is significant, as is explored in Figure \ref{fig:coherenceTheory}.
\end{rem}

\begin{proof}
Here we prove the easy case where $u_0$ is $C^2$ everywhere and $A_{\epsilon,h}({\bf x})$ contains no ghost pixels, that is $A_{\epsilon,h}({\bf x}) \subset \field{Z}^2_h$.  For the case where $A_{\epsilon,h}({\bf x})$ contains ghost pixels lying between pixel centers and for $u_0$ with lower regularity, we refer the reader to \cite{Forthcoming}.  We also only prove the case $p=\infty$, as $p < \infty$ follows trivially as the bound is independent of $p$ in this case.  We use the notation ${\bf x}:=(ih,jh)$ interchangeably throughout.

First note that the symmetry of the situation allows us to rewrite the formula in Algorithm 1 as
$$u_h({\bf x}) = \frac{\sum_{{\bf y} \in A^-_r }w_r({\bf 0},{\bf y})u_h({\bf x}+{\bf y}h)  }{\sum_{{\bf y} \in A^-_r }w_r({\bf 0},{\bf y})}.$$
Next we note that $A^-_r$ is nonempty, which follows from our assumption $A_{\epsilon,h}({\bf x}) \in \{ B_{\epsilon,h}({\bf x}), \tilde{B}_{\epsilon,h}({\bf x}) \}$ and $r \geq 2$ (we leave it as an exercise to the reader that no matter how we rotate $\tilde{B}_{\epsilon,h}({\bf x})$, this is always true).
Since $A^-_r \neq \emptyset$, it follows that ${\bf g}^*$ \eqref{eqn:limitDirection} is defined, and moreover ${\bf g}^* \cdot e_2 \neq 0$.  This was the condition we needed to ensure that $u$ is defined.

Now that we know $u$ exists, let us define $e_h := u_h - u$.  Then it suffices to prove 
\begin{equation} \label{eqn:linearBound}
|e_h({\bf x})| \leq K h
\end{equation}
for all ${\bf x} \in D_h$, where $K > 0$ is a constant independent of ${\bf x}$.  To prove this, we make use of the fact that since $u_0$ is $C^2$, $u$ is as well and so there is a $D > 0$ s.t. $\|Hu\|_2 \leq D$ uniformly on $(0,1]^2$, where $Hu$ denotes the Hessian of $u$ and $\| \cdot \|_2$ is usual operator norm induced by the vector 2-norm (moreover, this $D$ depends only on $u_0$).  We will use this to prove the stronger condition that for any $1\leq i,j\leq N$ we have
\begin{equation} \label{eqn:strongerCondition}
|e_h(ih,jh)| \leq j Dr^2h^2,
\end{equation}
from which \eqref{eqn:linearBound} follows with $K = Dr^2$ since $j \leq N = 1/h$.

We proceed by induction, supposing that \eqref{eqn:strongerCondition} holds for all $(i'h,j'h)$ with $1\leq i' \leq N$ and $j'<j$ (the base case $j=0$ is obvious).  Applying our inductive hypothesis and expanding $u$ to second order we obtain:
\begin{eqnarray*}
|e_h(ih,jh)| &\leq& \frac{\sum_{{\bf y} \in A^-_r }w_r({\bf 0},{\bf y})|e_h({\bf x}+{\bf y}h)|  }{\sum_{{\bf y} \in A^-_r }w_r({\bf 0},{\bf y})}+\left|\frac{\sum_{{\bf y} \in A^-_r }w_r({\bf 0},{\bf y})u({\bf x}+{\bf y}h)-u({\bf x})  }{\sum_{{\bf y} \in A^-_r }w_r({\bf 0},{\bf y})}\right| \\
& \leq & (j-1)Dr^2h^2 + \left| \nabla u({\bf x}) \cdot \frac{\sum_{{\bf y} \in A^-_r} w_r({\bf 0},{\bf y}){\bf y}h}{\sum_{{\bf y} \in A^-_r} w_r({\bf 0},{\bf y})}\right| + Dr^2h^2 \\
& = & jDr^2h^2 + |h\underbrace{\nabla u({\bf x}) \cdot {\bf g}^*}_{=0}|,
\end{eqnarray*}
where we have used the fact that when ${\bf x}=(ih,jh)$ and ${\bf y} \in A^-_r$, then $({\bf x}+{\bf y}h)$ is of necessary form $(i'h,j'h)$ with $1\leq i' \leq N$ and $j' < j$ needed for our inductive hypothesis to hold.
\end{proof}

\begin{figure}
\centering
\begin{tabular}{cc}
\subfloat[coherence transport, $r=3$.]{\includegraphics[width=.43\linewidth]{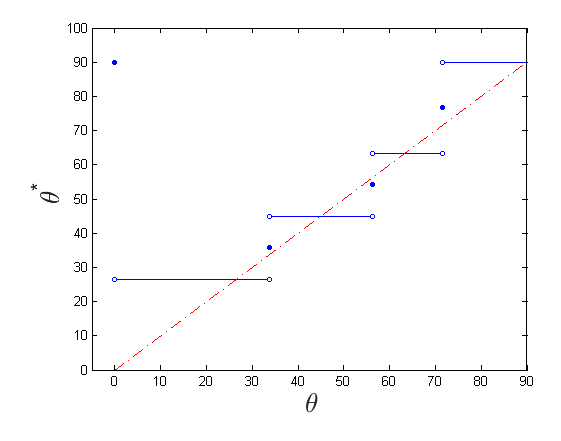}} & 
\subfloat[coherence transport, $r=5$.]{\includegraphics[width=.43\linewidth]{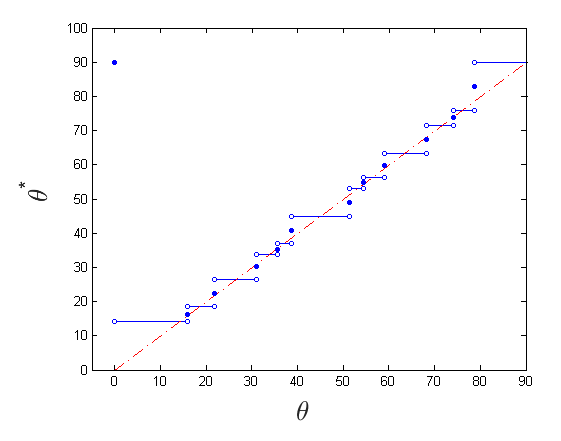}} \\
\subfloat[Guidefill, $r=3$.]{\includegraphics[width=.43\linewidth]{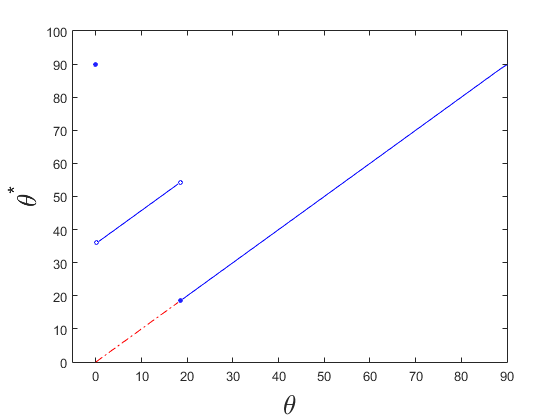}} & 
\subfloat[Guidefill, $r=5$.]{\includegraphics[width=.43\linewidth]{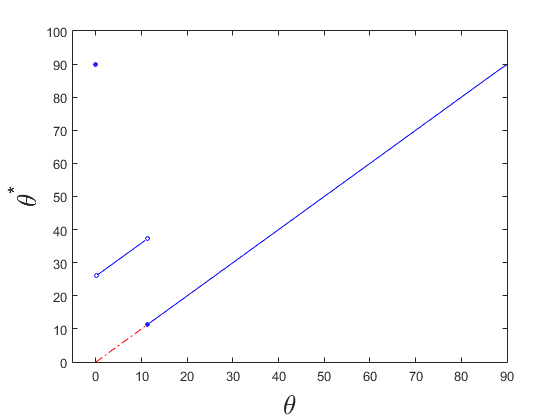}} \\
\end{tabular}
\caption{The theoretical limiting curves $\theta^* =\theta({\bf g}^*_{\mbox{c.t.}})$ (coherence transport (a)-(b)) and $\theta^* =\theta({\bf g}^*_{\mbox{g.f.}})$ (Guidefill (c)-(d)) as a function of $\theta = \theta({\bf g})$, with ${\bf g}^*_{\mbox{c.t.}}$ and ${\bf g}^*_{\mbox{g.f.}}$ given by \eqref{eqn:ctgf_limits}, and where ${\bf g}$ is the desired guidance direction fed into the weights \eqref{eqn:weight}.  We set $r:=\epsilon/h=3,5$ and consider $\mu \rightarrow \infty$.  The ideal curve $\theta^*=\theta$ is highlighted in red.  The limiting guide directions ${\bf g}^*_{\mbox{c.t.}}$ and ${\bf g}^*_{\mbox{g.f.}}$ are related by \eqref{eqn:limitDirection} to the weights \eqref{eqn:weight} as well as the distribution of sample points within $A_{\epsilon,h}({\bf x})$.   Coherence transport makes the choice $A_{\epsilon,h}({\bf x})=B_{\epsilon,h}({\bf x})$, leading to the ``kinking'' observed in (a)-(b), where $\theta^* \neq \theta$ for all but finitely many angles.  The choice $A_{\epsilon,h}({\bf x})=\tilde{B}_{\epsilon,h}({\bf x})$ made by Guidefill is largely able to avoid this and exhibits no kinking for all angles greater than a critical minimum - see Remark \ref{remark:smartOrder} as well as \cite{Forthcoming} for more details.}
\label{fig:coherenceTheory}
\end{figure}

\subsection{Consequences} \label{sec:consequences}

Theorem \ref{thm:teaser} helps us to understand two important features of the class of algorithms under study.  Firstly, it helps us to understand a kinking phenomena Guidefill aims to overcome.  Secondly, it will help us to understand a new phenomena Guidefill introduces (and indeed a limitation of the method) - the gradual degradation of signal due to repeated bilinear interpolation.

\subsubsection{Kinking}  Figure \ref{fig:coherenceTheory} illustrates the significance of Theorem \ref{thm:teaser} by plotting the phase $\theta({\bf g}^*_{\mbox{c.t.}})$ and $\theta({\bf g}^*_{\mbox{g.f.}})$ of the theoretical limiting transport directions of coherence transport and Guidefill respectively \eqref{eqn:ctgf_limits} as a function of the phase $\theta({\bf g})$ of the guidance direction ${\bf g}$.  The cases $\epsilon = 3h$ and $\epsilon = 5h$ are considered (coherence transport \cite{Marz2007} recommends $\epsilon = 5h$ by default) with $\mu \rightarrow \infty$.  For coherence transport we have $\theta({\bf g}^*) \neq \theta({\bf g})$ except for finitely many angles, explaining the kinking observed in practice.  On the other hand, for Guidefill we have $\theta({\bf g}^*) = \theta({\bf g})$ (in other words no kinking) for all angles greater than a minimum value.  We refer the reader to \cite{Forthcoming} for additional details.

\begin{rem} \label{remark:smartOrder}

In order to understand the kinking of Guidefill shown in Figure \ref{fig:coherenceTheory}(c)-(d) at small angles for ${\bf g} \neq {\bf 0}$ and $\mu \gg 1$, it is helpful to consider the decomposition
$$\tilde{B}_{\epsilon,h}({\bf x}) = \ell_{\epsilon,h}({\bf x}) \cup (\tilde{B}_{\epsilon,h}({\bf x}) \backslash \ell_{\epsilon,h}({\bf x}) ) \quad \mbox{ where } \quad \ell_{\epsilon,h}({\bf x}) := \left \{ \epsilon k \hat{\bf g} \right \}_{k=-r}^r,$$
where $\hat{\bf g}:= {\bf g}/\|{\bf g}\|$,  and where $r:=\epsilon/h \in \field{N}$ as usual.  The kinking observed for low angles in Figure \ref{fig:coherenceTheory}(c)-(d) occurs when $\ell_{\epsilon,h}({\bf x})$ contains no readable pixels, that is
\begin{equation} \label{eqn:emptyline}
\ell_{\epsilon,h}({\bf x}) \cap ( \Omega_h \backslash (D^{(k)}_h \cup B_h) ) = \emptyset.
\end{equation}
Since the weights \eqref{eqn:weight} concentrate most of their mass on $\ell_{\epsilon,h}({\bf x})$ when $\mu$ is large, in this case we can expect the confidence term \eqref{eqn:confidence} to be small and the smart order proposed in Section \ref{sec:ordering} will tell the algorithm to delay the filling of ${\bf x}$.  This is guaranteed so long as $\mu$ is sufficiently large.  If at a later iteration \eqref{eqn:emptyline} no longer holds, 
\eqref{eqn:ready} should be satisfied and inpainting can resume with no kinking.
\end{rem}

\begin{rem} \label{remark:differences}
The limiting transport direction ${\bf g}^*$ predicted by Theorem \ref{thm:teaser} is similar to the transport direction predicted by M\"arz in \cite{Marz2007} (Theorem 1).  The key difference is that while M\"arz considered the double limit where $h \rightarrow 0$ and then $\epsilon \rightarrow 0$, we consider the single limit $(h,\epsilon) \rightarrow (0,0)$ with $r=\epsilon/h$ fixed, which we argue in \cite{Forthcoming} is more relevant.  The result is that whereas \cite{Marz2007} obtains a formula for ${\bf g}^*$ as an integral over a (continuous) half-ball, our ${\bf g}^*$ is a finite sum over a discrete half-ball.  In particular, when $A_{\epsilon,h}({\bf x})=B_{\epsilon,h}({\bf x})$ as in coherence transport, the following predictions are obtained for the limiting transport direction (note that we write $w_r$ and $w_1$ to mean the weights \eqref{eqn:weight} with $\epsilon$ replaced by $r$ and $1$ respectively):
$${\bf g}^*_{\mbox{M\"arz}}=\frac{\int_{ {\bf y} \in B^-_1({\bf 0}) } w_1({\bf 0},{\bf y}){\bf y}d{\bf y}}{\int_{ {\bf y} \in B^-_1({\bf 0}) } w_1({\bf 0},{\bf y})} \qquad {\bf g}^*_{\mbox{ours}}= \frac{\sum_{{\bf y} \in b^-_r} w_r({\bf 0},{\bf y}) {\bf y}}{\sum_{{\bf y} \in b^-_r} w_r({\bf 0},{\bf y})},$$
where
\begin{eqnarray*}
B_1^-({\bf 0}) &:=& \{ (x,y) \in \field{R}^2 : x^2+y^2 \leq 1 \mbox{ and } y <0\} \\
 b^-_r &:=& \{ (n,m) \in \field{Z}^2 : n^2+m^2 \leq r^2 \mbox{ and } m \leq -1\}.
\end{eqnarray*}
Our discrete sum ${\bf g}^*_{\mbox{ours}}$ predicts the kinking observed by coherence transport in practice, whereas the integral $ {\bf g}^*_{\mbox{M\"arz}}$ does not.
\end{rem}

\vskip 1mm

\subsubsection{ Signal degradation}  \label{sec:signalDegredation}
 Theorem \ref{thm:teaser} says that when $u_0$ has jump discontinuities, we can expect convergence in $L^p$ for all $1 \leq p < \infty$, but potentially with a gradually deteriorating rate and with no guarantee of convergence when $p = \infty$.  This suggests that our method may have a tendency to gradually blur an initially sharp signal.  Indeed, our method is based on bilinear interpolation and a known property of the repeated application of bilinear interpolation is to do just that - see for example \cite[Sec. 5]{VariationalGradient}.  Moreover, this blurring is plainly visible in Figure \ref{fig:modStructureTensor}(c)-(d).
 
To explore this phenomena, we considered the continuum problem of inpainting the line
$$\tan(73^{\circ}) - 0.1 \leq y \leq \tan(73^{\circ}) + 0.1$$
over the image domain $\Omega = [-1,1] \times [-0.5,0.5]$ and inpainting domain $D = [-0.8,0.8] \times [-0.3,0.3]$.  We then used Guidefill to solve the discrete inpainting problem at four different image resolutions - $200\times 100$px, $400\times 200$px, $4000\times 2000$px, and $8000\times 4000$px.  In each case, we examined a horizontal cross section of the solution at three places - $y= 0.3$, the boundary of the inpainting domain where the signal is perfect, $y=0.25$, a short distance inside the inpainting domain, and $y=0$, the midpoint of the domain where we can expect maximal deterioration.  The results are given in Figure \ref{fig:degradation}.

There is indeed signal degradation, most significant for low resolution problems, and it does indeed get worse as we move further into the inpainting domain.  This is a limitation of our method, especially for thin edges to be extrapolated across long distances.  However, also note that as predicted by Theorem \ref{thm:teaser}, when we increase the image resolution the degree of degradation drops significantly, even though we are applying many more bilinear interpolation operations.

\begin{figure}
\centering
\begin{tabular}{cc}
\subfloat[]{\includegraphics[width=.43\linewidth]{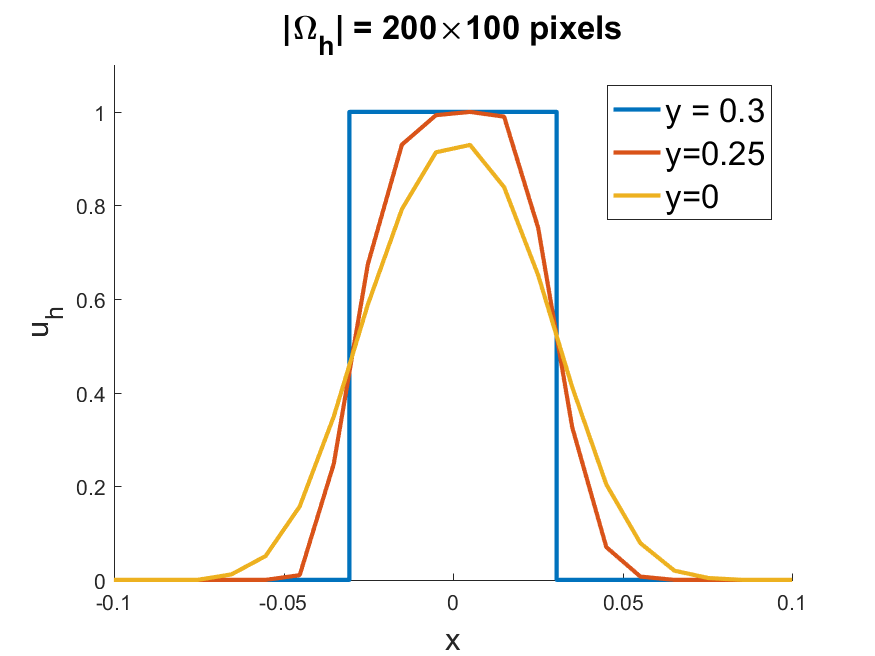}} & 
\subfloat[]{\includegraphics[width=.43\linewidth]{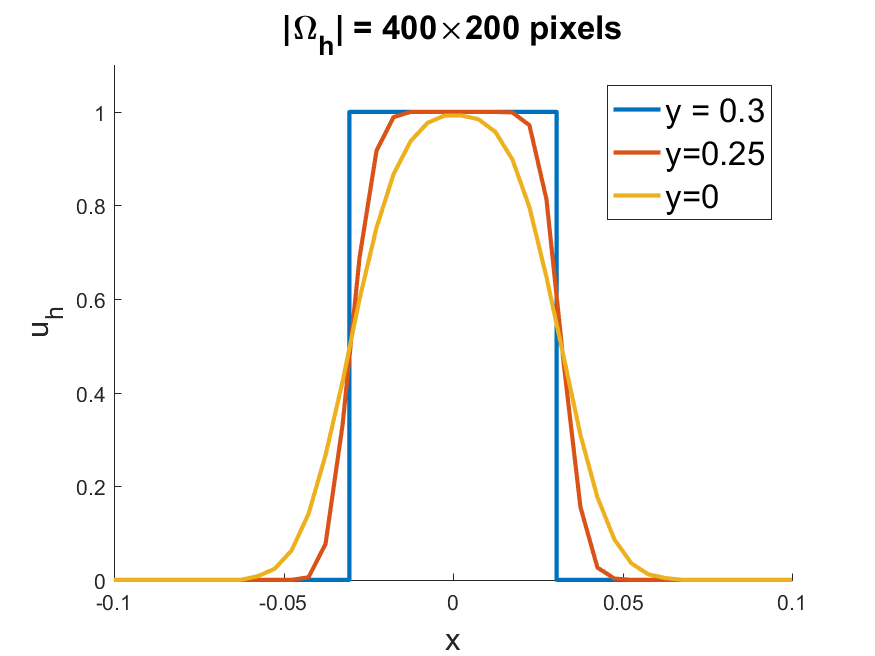}} \\
\subfloat[]{\includegraphics[width=.43\linewidth]{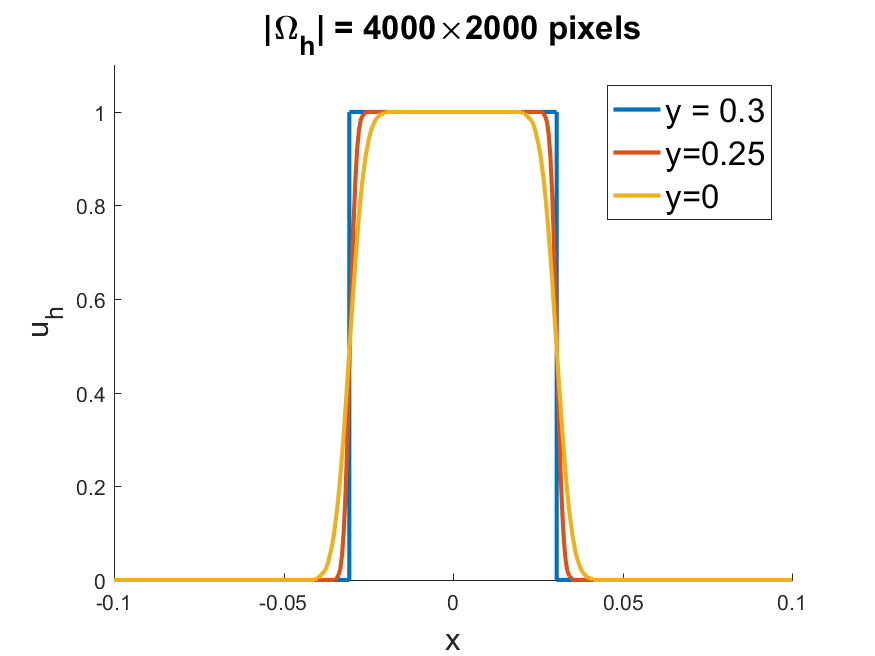}} & 
\subfloat[]{\includegraphics[width=.43\linewidth]{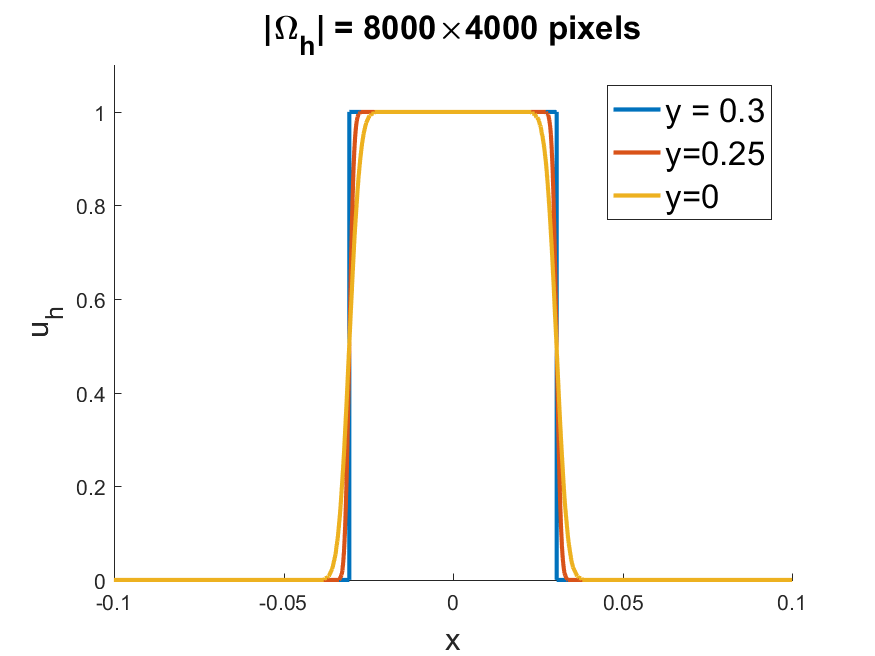}} \\
\end{tabular}
\caption{{\bf Signal degradation and $L^p$ convergence of Guidefill:}  The continuum problem of inpainting the line $\tan(73^{\circ}) - 0.1 \leq y \leq \tan(73^{\circ}) + 0.1$ with image domain $\Omega=[-1,1] \times [-0.5,0.5]$ and inpainting domain $D = [-0.8,0.8] \times [-0.3,0.3]$ is rendered at a variety of resolutions and inpainted each time using Guidefill.  Examining cross-sections of $u_h$ at $y=0.3$ (on the boundary of $D_h$), $y=0.25$ (just inside), and $y=0$ (in the middle of $D_h$) we notice a gradual deterioration of the initially sharp signal.  This deterioration is to be expected as our method is based on iterated bilinear interpolation, which is known to have this effect \cite[Sec. 5]{VariationalGradient}.  However, also note that in accordance with Theorem \ref{thm:teaser}, the signal is less degraded in higher resolution images, even though we have applied more bilinear interpolation operators.}
\label{fig:degradation}
\end{figure}

\section{Algorithmic Complexity} \label{sec:complexity}

In this section we analyze the complexity the two implementations of Guidefill sketched in Section \ref{sec:GPU} as parallel algorithms.  Specifically, we analyze how both the {\em time complexity} $T(N,M)$ and {\em processor complexity} $P(N,M)$ vary with $N = |D_h|$ and $M=|\Omega_h \backslash D_h |$, where a time complexity of $T(N,M)$ and processor complexity of $P(N,M)$ means that the algorithm can be completed by $O(P(N,M))$ processors in $O(T(N,M))$ time per processor.  See for example \cite[Ch. 5]{ParallelBook} for a more detailed discussion of the time and processor complexity formalism for parallel algorithms.

We assume that Guidefill is implemented on a parallel architecture consisting of $p$ processors working at the same time in parallel.  We further assume that when Guidefill attempts to run $P > p$ parallel threads such that there are not enough available processors to comply, the $P$ threads are run in $\lceil P/p \rceil$ sequential steps.  In reality, GPU architecture is not so simple  - see for example \cite[Ch. 4]{CUDAGuide} for a discussion of GPU architecture, and for example \cite{GPUAnalysis} for a more realistic theoretical model.  We do not consider these additional complexities here.

In Theorem \ref{thm:complexity} we derive a relationship between the time and processor complexities $T(N,M)$, $P(N,M)$ and the number of iterations $K(N)$ required for Guidefill to terminate.  This relationship is valid in general but does not allow us to say anything about $K(N)$ itself.  Next, in Theorem \ref{thm:complexity2} we establish bounds on $K(N)$ under a couple of simplifying assumptions.  Firstly, we assume that the inpainting domain is surrounded entirely by readable pixels - that is $(\partial_{\mbox{outer}} D_h) \cap B_h = \emptyset$.  In particular this means that we assume the inpainting domain does not include the edge of the image and is not directly adjacent to pixels belonging to another object (such as an object in the foreground).  Secondly, we assume that the smart ordering of Section \ref{sec:ordering} is turned off.  We also include a discussion in the online supplementary material of what to expect in the general case.  Our analysis considers only the filling step of Guidefill after the guide field has already been constructed.

\begin{theorem} \label{thm:complexity}

Let $N = |D_h|$, $M=|\Omega_h \backslash D_h |$ denote the problem size and let $T(N,M)$ and $P(N,M)$ denote the time complexity and processor complexity of the filling step of Guidefill implemented on a parallel architecture as described above with $p$ available processors.  Let $K(N)$ denote the number of iterations before Guidefill terminates.  Then the processor complexity of Guidefill with  and without  boundary tracking is given by
$$P(N,M) =  \begin{cases}   
O(N+M) & \mbox{ without tracking} \\ 
O(\sqrt{N+M}) & \mbox{ with tracking}
\end{cases}$$
while the time complexity is given by
$$ T(N,M) =  \begin{cases}   
O(K(N)) & \mbox{ if } P(N,M) \leq p \\ 
O((N+M)K(N)) & \mbox{ if } P(N,M) > p
\end{cases} \quad \mbox{ without tracking}$$
$$T(N) =  \begin{cases}   
O((\sqrt{N}+K(N))\log(N)) & \mbox{ if } P(N,M) \leq p \\ 
O((N+K(N))\log(N)) & \mbox{ if } P(N,M) > p
\end{cases} \quad \mbox{with tracking}.$$
\end{theorem}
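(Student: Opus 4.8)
The plan is to analyze each implementation by bookkeeping, at every iteration $k$, two quantities: the number of GPU threads $c_k$ that are launched and the amount of work $w_k$ that each thread performs. The processor complexity is then $P(N,M) = \max_{0 \le k < K(N)} c_k$, and by the batching rule of the model the total runtime is
\begin{equation*}
T(N,M) = T_{\mathrm{init}} + \sum_{k=0}^{K(N)-1} \left\lceil \frac{c_k}{p} \right\rceil w_k,
\end{equation*}
where $T_{\mathrm{init}}$ is a one-time setup cost. Treating the hardware count $p$ as a fixed constant, $\lceil c_k/p\rceil = 1$ whenever $c_k \le p$ and $\lceil c_k/p\rceil = \Theta(c_k)$ otherwise; the dichotomy $P \le p$ versus $P > p$ in the statement is exactly the dichotomy between these two regimes, so the whole proof reduces to estimating $c_k$, $w_k$, $T_{\mathrm{init}}$ and the incidence sum $\sum_k c_k$ in each case.

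For the untracked implementation the estimates are immediate. Every iteration assigns one thread to each pixel of $\Omega_h$, so $c_k = N+M$ for all $k$ and hence $P(N,M)=O(N+M)$; since $r=\epsilon/h$ is fixed, a thread only inspects its fixed-size neighbourhood and performs the weighted average of Algorithm 1, giving $w_k = O(1)$. When $P \le p$ every iteration costs $O(1)$ and the total is $O(K(N))$; when $P>p$ each iteration costs $\lceil (N+M)/p\rceil = O(N+M)$, and summing over the $K(N)$ iterations yields $O((N+M)K(N))$.

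For the tracked implementation we have $c_k = O(n_k)$, where $n_k := |\partial_{\mbox{active}} D^{(k)}_h|$, and $w_k = O(\log n_k)$ for the stream-compaction step that maintains the boundary list. The processor complexity therefore rests on the geometric bound $n_k = O(\sqrt{N+M})$, which I would obtain by regarding the active boundary as a curve inside the image rectangle and bounding its length by the image perimeter; this gives $P(N,M)=O(\sqrt{N+M})$ and $\log n_k = O(\log N)$. When $P \le p$ each ceiling equals $1$, so the iterations contribute $\sum_k O(\log N) = O(K(N)\log N)$, to which I add the one-time cost $T_{\mathrm{init}} = O(\sqrt N \log N)$ of building the initial list, producing $O((\sqrt N + K(N))\log N)$. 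When $P>p$ each ceiling is $O(1 + n_k/p)=O(1+n_k)$, so summing $\lceil n_k/p\rceil w_k$ gives $O\bigl((K(N) + \textstyle\sum_k n_k)\log N\bigr)$; using the incidence bound $\sum_k n_k = O(N)$ this is $O((N+K(N))\log N)$, and the analogously scaled $T_{\mathrm{init}}$ is absorbed.

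I expect the main obstacle to be the geometric bound $n_k = O(\sqrt{N+M})$: for a degenerate inpainting domain (for instance a fine comb or a space-filling shape) the active boundary can be far longer than the image perimeter, so a rigorous bound needs a regularity hypothesis on $D_h$ ensuring the advancing front behaves like a monotone shell, which is where the standing assumptions of the model must enter. The secondary technical point is controlling the accumulated compaction work through $\sum_k n_k = O(N)$; I would establish this by a charging argument that assigns each unit of list-maintenance work to the fill or exposure event of a single pixel, each pixel being charged only $O(1)$ times. Everything else is routine substitution into the per-iteration formula above.
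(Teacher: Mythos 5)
Your analysis of the untracked implementation matches the paper's. The gap is in the tracked case, and it is exactly the obstacle you flag yourself: the bound $|\partial_{\mbox{active}} D^{(k)}_h| = O(\sqrt{N+M})$ is simply false for general discrete inpainting domains (a one-pixel-wide serpentine domain has boundary of size $\Theta(N)$), and the theorem carries no regularity hypothesis on $D_h$ that could rescue a perimeter argument. The paper does not prove this bound geometrically at all; it makes $P(N,M)=O(\sqrt{N+M})$ true \emph{by construction}, stipulating that whenever more than $\sqrt{N+M}$ threads are requested the implementation runs them in $O\bigl(\lceil |\partial D^{(k)}_h|/\sqrt{N+M}\rceil\bigr)$ sequential batches of $\sqrt{N+M}$ processors each. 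So in your notation the ceiling factor in the $P\le p$ branch is not $1$ but $1+|\partial D^{(k)}_h|/\sqrt{N+M}$, and the time bound becomes
\begin{equation*}
T(N,M) \;\leq\; C\log(N)\sum_{k=1}^{K(N)}\Bigl(1+\tfrac{|\partial D^{(k)}_h|}{\sqrt{N+M}}\Bigr) \;\leq\; C\log(N)\Bigl(K(N)+\tfrac{N}{\sqrt{N+M}}\Bigr),
\end{equation*}
using that the shells $\{\partial D^{(k)}_h\}_{k=1}^{K(N)}$ partition $D_h$ so that $\sum_k |\partial D^{(k)}_h| = N$. This is where the $\sqrt{N}\log N$ term actually comes from: it is the accumulated serialization overhead, not a one-time list-building cost $T_{\mathrm{init}}$ as in your account. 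Your incidence bound $\sum_k n_k = O(N)$ is the right identity (and holds with equality, since each pixel lies in exactly one shell), but you deploy it only in the $P>p$ branch; the paper's point is that the same identity is needed in the $P\le p$ branch as well, precisely because the boundary can exceed the capped processor count there too. The $P>p$ case is then handled identically with $\sqrt{N+M}$ replaced by $p$. To repair your proof, drop the geometric perimeter claim and the $T_{\mathrm{init}}$ device, adopt the capped-processor convention, and run the partition argument in both branches.
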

\begin{proof}  For the case of no boundary tracking Guidefill allocates one thread per pixel in $\Omega_h$, hence $P(N,M)=O(|\Omega_h|)=O(N+M)$.  In this case if $|\Omega_h| := N+M < p$, then each thread fills only one pixel, and hence does $O(1)$ work.  On the other hand, if $N+M > p$, each thread must fill $\lceil \frac{N+M}{p}\rceil$ pixels.  It follows that 
$$T(N,M) \leq \sum_{k=1}^{K(N)} \left \lceil \frac{N+M}{p}\right \rceil  \leq \begin{cases} K(N) & \mbox{ if } N+M < p \\ \frac{2}{p}(N+M)K(N) & \mbox{ otherwise.} \end{cases}.$$
On the other hand, Guidefill with tracking allocates $O(|\partial D^{(k)}_h|)$ threads per iteration of Guidefill, each of which do $O(\log|\partial D^{(k)}_h|)$ work.  This is because, as stated in Section \ref{sec:GPU}, the boundary is updated over a series of $O(\log|\partial D^{(k)}_h|)$ parallel steps.  In order to keep the processor complexity at $O(\sqrt{N+M})$, we assume that in the unlikely event that more than $\sqrt{N+M}$ threads are requested, then Guidefill runs them in $O\left(\left \lceil \frac{|\partial D^{(k)}_h|}{\sqrt{N+M}}\right \rceil\right)$ sequential steps each involving $\sqrt{N+M}$ processors.  We therefore have, for $\sqrt{N+M} < p$
$$T(N,M) \leq  \sum_{k=1}^{K(N)}  \left( \frac{|\partial D^{(k)}_h|}{\sqrt{N+M}} +1 \right) C \log|\partial D^{(k)}_h| \leq C \log(N) \sum_{k=1}^{K(N)}  \left( 1+\frac{|\partial D^{(k)}_h|}{\sqrt{N+M}}\right)$$
where the factor $C > 0$ comes from the hidden constants in the Big O notation.  But we know that $\{ \partial D^{(k)}_h\}^{K(N)}_{k=1}$ form a partition of $D_h$, so that $ \sum_{k=1}^{K(N)} |\partial D^{(k)}_h| = N$.  Therefore
\begin{displaymath}
T(N,M) \leq C \log(N)\left( K(N) + \frac{N}{\sqrt{N+M}} \right) \leq C(\sqrt{N}+K(N))\log(N).
\end{displaymath}
An analogous argument with $\sqrt{N+M}$ in the denominator replaced by $p$ handles the case $P(N,M) > p$.
\end{proof}

\begin{theorem} \label{thm:complexity2}

If we make the same assumptions as in Theorem \ref{thm:complexity} and if we further suppose $(\partial_{\mbox{outer}} D_h ) \cap B_h = \emptyset$ and the smart order test from Section \ref{sec:ordering} is turned off, then we additionally have 
\begin{equation} \label{eqn:sqrtIterations}
K(N) = O(\sqrt{N})
\end{equation}
so that, in particular, we have $T(N,M) = O(\sqrt{N})$, $T(N,M) = O(\sqrt{N}\log(N))$ for Guidefill without and with tracking given sufficient processors, and \\$T(N,M) = O((N+M)\sqrt{N})$, $T(N,M) = O(N\log(N))$ respectively when there is a shortage of processors.
\end{theorem}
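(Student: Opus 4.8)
The plan is to prove the single new estimate $K(N) = O(\sqrt{N})$ of \eqref{eqn:sqrtIterations}; the four stated running times then follow immediately by substituting this bound into the formulas of Theorem \ref{thm:complexity}, using that $K(N)=O(\sqrt{N})$ is dominated by both $\sqrt{N}$-free and $N$-scale terms already appearing there.

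First I would show that, under the two extra hypotheses, every iteration peels off the \emph{entire} boundary $\partial D^{(k)}_h$, so that the filling step is exactly morphological erosion by the eight-point structuring element $\mathcal N$. Since $(\partial_{\mbox{outer}} D_h)\cap B_h = \emptyset$, no bystander pixel is adjacent to any pixel of $D_h$, and hence none is adjacent to $D^{(k)}_h \subseteq D_h$ for any $k$; consequently every neighbour of a boundary pixel that lies outside $D^{(k)}_h$ is readable (either an originally undamaged pixel or one filled on an earlier iteration). This gives $\partial_{\mbox{active}} D^{(k)}_h = \partial D^{(k)}_h$ for all $k$, and with the smart order switched off the $\mbox{ready}$ test is vacuous, so $D^{(k+1)}_h = D^{(k)}_h \setminus \partial D^{(k)}_h$.

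Next I would identify $K(N)$ with a distance. Because deleting $\partial D^{(k)}_h$ removes precisely those pixels having an eight-neighbour outside $D^{(k)}_h$, an easy induction shows that a pixel ${\bf x}$ is filled on iteration $d({\bf x})$, where $d({\bf x})$ is the graph distance (under eight-connectivity) from ${\bf x}$ to $\field{Z}^2_h \setminus D_h$, equivalently the Chebyshev distance in lattice units. Hence $K(N) = \max_{{\bf x}\in D_h} d({\bf x})$. The decisive step is then a packing bound: if $R = K(N)$ is attained at ${\bf x}^*$, no exterior pixel lies within graph distance $R-1$ of ${\bf x}^*$, so the axis-aligned $(2R-1)\times(2R-1)$ block of lattice points centred at ${\bf x}^*$ is entirely contained in $D_h$. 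Therefore $(2R-1)^2 \le |D_h| = N$, giving $R \le (\sqrt{N}+1)/2 = O(\sqrt{N})$, which is \eqref{eqn:sqrtIterations}.

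I expect the only genuine subtlety to lie in the first step: one must verify that the hypothesis $(\partial_{\mbox{outer}} D_h)\cap B_h = \emptyset$ continues to guarantee a full, active boundary as the domain shrinks, so that the erosion picture remains valid at \emph{every} iteration rather than merely at $k=0$. Once the erosion interpretation is secured, the distance characterisation and the packing estimate are routine, and the four complexity statements are obtained by plugging $K(N)=O(\sqrt{N})$ into Theorem \ref{thm:complexity}.
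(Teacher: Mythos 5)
Your proposal is correct and follows essentially the same route as the paper: the paper likewise observes that with $(\partial_{\mbox{outer}} D_h)\cap B_h=\emptyset$ and the ready test disabled every shell is removed in full, and then argues that if the algorithm has not terminated after $k$ iterations some pixel ${\bf x}^*$ must have its entire iterated neighborhood $\mathcal N^{(k)}({\bf x}^*)$ (a Chebyshev ball of $(2k+1)^2$ pixels) contained in $D_h$, which is impossible once $(2k+1)^2>N$. Your distance-function formulation and the $(2R-1)^2\le N$ packing bound are just a mild repackaging of that same counting argument, and the subtlety you flag (that the full-boundary erosion picture persists for all $k$, not just $k=0$) is handled correctly by your observation that pixels filled on earlier iterations are themselves readable.
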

\begin{proof}
Now assume $(\partial_{\mbox{outer}} D_h ) \cap B_h = \emptyset$ and \eqref{eqn:ready} is disabled.  Then after $k$ iterations all pixels ${\bf x}$ such that $\mathcal N^{(k)}({\bf x}) \cap \Omega_h \backslash D_h \neq \emptyset$ will have been filled, where 
$$\mathcal N^{(k)}({\bf x}) = \bigcup_{{\bf y} \in \mathcal N^{(k-1)} ({\bf x})} \mathcal N({\bf y}), \quad \mathcal N^{(1)}({\bf x})=\mathcal N({\bf x}).$$
Therefore, if $D_h$ has not be completely filled after $k$ iterations, there must exist a pixel ${\bf x}^* \in D_h$ such that $\mathcal N^{(k)}({\bf x}^*) \subseteq D_h$.  However, it is easy to see that $|\mathcal N^{(k)}({\bf x}^*)|=(2k+1)^2$.  But since $|D_h|=N$, after $k = \lceil \sqrt{N}/2 \rceil$ iterations $\mathcal N^{(k)}({\bf x}^*)$ will contain more pixels than $D_h$ itself, and cannot possibly be a subset of the latter.

This proves that Guidefill terminates in at most $\lceil \sqrt{N}/2 \rceil$ iterations, and hence $K(N) = O(\sqrt{N})$.  
\end{proof}

\section{Numerical Experiments} \label{sec:numerical}

In this section we aim to validate our method as a practical tool for 3D conversion, and also to validate the complexity analysis of Section \ref{sec:complexity}.  We have implemented Guidefill in CUDA C and interfaced with MATLAB.  Our experiments were run on a laptop with a $3.28$GHz Intel i$7-4710$ CPU with $20$GB of RAM, and a GeForce GTX 970M GPU\footnote{The experiments involving nl-means and nl-Poisson are an exception.  Because the implementation available online does not support Windows, these experiments had to be done on a separate Linux machine with a $3.40$GHz Intel i$5-4670$ CPU with 16GB of RAM.  As a comparison, we measured the time to solve a $500 \times 500$ Poisson problem to a tolerance of $10^{-6}$ using the conjugate gradient method in MATLAB, which took $8.6s$ on our Windows laptop, and $5.2$s on the Linux box.}.

\vskip 2mm

\subsection{3D Conversion Examples}  

We tested our method on a number of HD problems, including the video illustrated in Figure \ref{fig:movie} and the four photographs shown in Figure \ref{fig:examples}.  The photographs were converted into 3D by building rough 3D geometry and creating masks for each object, as outlined in Section \ref{sec:pipeline}.  For the movie, we used a computer generated model with existing 3D geometry and masks\footnote{Downloaded from \url{http://www.turbosquid.com/} in accordance with the Royalty Free License agreement.}, as generating these ourselves on a frame by frame basis would have been far too expensive (indeed, in industry this is done by teams of artists and is extremely time consuming).  One advantage of this approach is that it gave us a ground truth to compare against, as in Figure \ref{fig:movie}(k).  Please see also the supplementary material where our results can be viewed in video form, and in anaglyph 3d (anaglyph glasses required).  Timings for Guidefill are given both with and without the boundary tracking as described in Section \ref{sec:GPU}.

As has been noted in the related work, the literature abounds with depth-guided variants of Criminisi's method \cite{Criminisi04regionfilling} designed for the disocclusion step arising in 3D conversion using the depth-map based pipeline discussed in Section \ref{sec:alternativePipelines}, see for example \cite{CriminisiVarient1,YoonEtAl,DepthAidedInpainingForNovelViewSynthesis,DepthGuidedInpaintingAlgorithmForFreeViewPointVideo,DepthImageBasedRenderingWithAdvancedTextureSynthesisfor3DVideo, OtherPipelineDoneWell}.  As we have already noted in the introduction, none of these methods are designed to make explicit use the bystander set set $B_h$ available to us and instead rely on heuristics.  In Section \ref{sec:alternativePipelines} Figure \ref{fig:careless2}, we have shown a simple example where with the exception of \cite{OtherPipelineDoneWell} these heuristics are likely fail.  Moreover, adapting these methods to our pipeline where depth-map inpainting is unnecessary would require considerable effort and find tuning.  Therefore, rather than comparing with these methods, we considered it more natural to compare with our own ``bystander-aware'' variant of Criminisi, adapted in an extremely simple way to incorporate the set $B_h$.  We simply modify Criminisi's algorithm by setting the priority equal to $0$ on $\partial D^{(k)}_h \backslash \partial_{\mbox{active}} D^{(k)}_h$ and restricting the search space to patches that do not overlap $\Omega_h \backslash (D_h \cup B_h)$.  However, we acknowledge that many of these methods also make further optimizations to Criminisi et al. from the point of view of running time - for example \cite{OtherPipelineDoneWell} incorporates the running time improvements originally published in their earlier work \cite{ExemplarNewHeurestics}.  We as well could have based our ``bystander-aware'' Criminisi on the improvement in \cite{ExemplarNewHeurestics}, however, instead we note that the running time published in \cite{OtherPipelineDoneWell} is about $1500$px/s, which is still much slower than Guidefill, especially for high resolution problems (see Table \ref{table:times}).

\begin{figure}
\centering
\begin{tabular}{cc}
\subfloat[Wine]{\includegraphics[width=.25\linewidth]{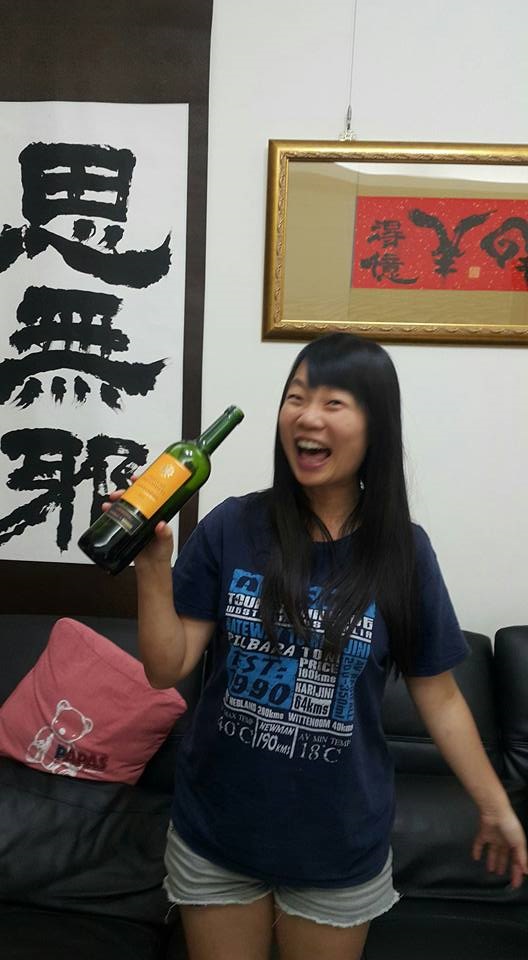}} &
\subfloat[Bust]{\includegraphics[width=.45\linewidth]{OriginalEye.jpg}} \\
\subfloat[Pumpkin]{\includegraphics[width=.45\linewidth]{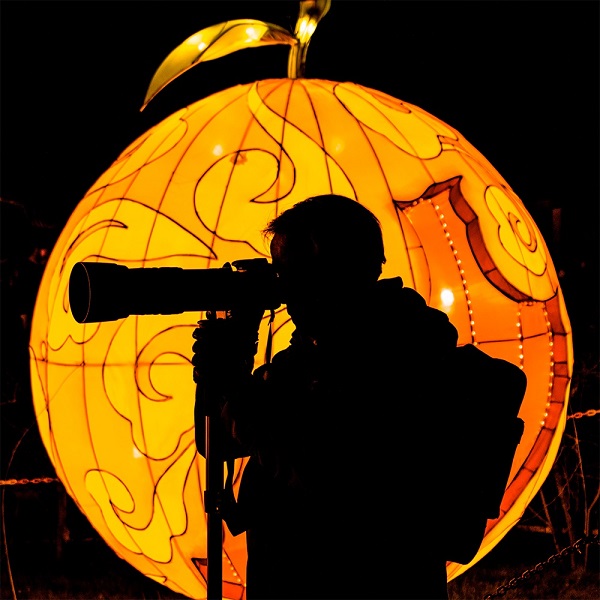}} &
\subfloat[Planet]{\includegraphics[width=.45\linewidth]{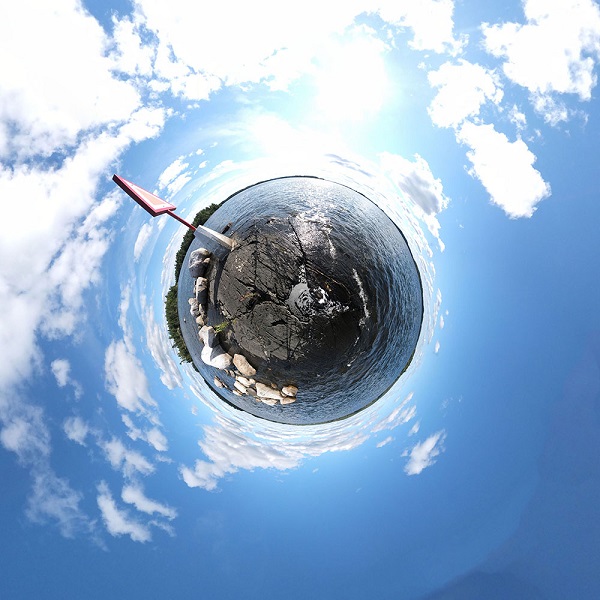}} \\
\end{tabular}
\caption{Example photographs used for 3D conversion, of different sizes  (a)  $528 \times 960$px  (b)  $1500 \times 1125$px (c)  $4000 \times 4000$px (d)  $5000 \times 5000$px.}
\label{fig:examples}
\end{figure}

For the photographs, we also compare the output of Guidefill with four other inpainting methods - coherence transport \cite{Marz2007,Marz2011}, the variational exemplar-based methods nl-means and nl-Poisson from Arias et al. \cite{Arias2011}, and Photoshop's Content-Aware fill.  For the movie, we compare with the exemplar-based video inpainting method of Newson et al. \cite{Newson2014,FastGeneric}.  However, generating ``bystander-aware'' versions of all of these methods would have been a significant undertaking, so we arrived at a compromise.  To avoid bleeding-artifacts like Figure \ref{fig:careless2}(c), we first ran each method using $D_h \cup B_h$ as the inpainting domain, giving the results shown.  However, as this led to an unfair running time due to the need to fill $B_h$, we then ran each method again using only $D_h$ as the inpainting domain, in order to obtain the given timings.  All methods are implemented in MATLAB + C (mex) and available for download online\footnote{Coherence transport: \url{http://www-m3.ma.tum.de/bornemann/InpaintingCodeAndData.zip}, Criminisi's method: \url{https://github.com/ikuwow/inpainting\_criminisi2004}, nl-means and nl-Poisson: \url{http://www.ipol.im/pub/art/2015/136/}, Newson's method: \url{http://perso.telecom-paristech.fr/~gousseau/video_inpainting/}.}.

Figure \ref{fig:movie} shows a few frames of a $1280\mbox{px} \times 960\mbox{px} \times 101 \mbox{fr} $ video, including the inpainting domain and the results of inpainting with both Guidefill and Newson's method.  With the exception of a few artifacts such as those visible in Figure \ref{fig:movie}(j), Newson's method produces excellent results.  However, it took $5$hr$37$min to run, and required more than $16$GB of RAM.  In comparison Guidefill produces a few artifacts, including the incorrectly completed window shown in Figure \ref{fig:movie}(e).  In this case the failure is because the one pixel wide ring described in Section \ref{sec:guidefield} fails to intersect certain edges we would like to extend.  However, Guidefill requires only $19$s (if boundary tracking is employed, $31$s if it is not) to inpaint the entire video and these artifacts can be corrected as in Figure \ref{fig:movie}(f).  However, due to the frame by frame nature of the computation, the results do exhibit some flickering when viewed temporally, an artifact which Newson's method avoids.  

Timings for the images are reported in Table \ref{table:times}, with the exception of Content-Aware fill which is difficult to time as we do not have access to the code.  We also do not provide timings for Bystander-Aware Criminisi, nl-means, and nl-Poisson for the ``Pumpkin'' and ``Planet'' examples as the former ran out of memory while nl-means and nl-Poisson did not finish within two hours.  However, for the ``Pumpkin'' example we do provide the result of nl-Poisson run on a small region of interest.  Results are given in Figures \ref{fig:wine}, \ref{fig:bust}, and \ref{fig:planet}.  We do not show the output of every method and have included only the most significant.

The first example, ``Wine'', is a $528 \times 960$px  photo.  Timings are reported only for the background object, which has an inpainting domain containing $15184$px.  Figure \ref{fig:splines} shows the detected splines for the background object and illustrates the editing process.  Results are then shown in Figure \ref{fig:wine} in two particularly challenging areas.  In this case the highest quality results are provided by nl-means and nl-Poisson, but both are relatively slow.  Bystander-Aware Criminisi and Content-Aware fill each produce noticeable artifacts.  Guidefill also has problems, most notably in the area behind the wine bottle, where the picture frame is extended incorrectly (this is due to a spline being too short) and where additional artifacts have been created next to the Chinese characters.  These problems however are mostly eliminated by lengthening the offending spline and editing some of the splines in the vicinity of Chinese characters as illustrated in Figure \ref{fig:splines}.  Guidefill is also the fastest method, although in this case the gains aren't as large as for bigger images.

The second example ``Bust'' is a $1500 \times 1125$px image.  Timings are reported only for inpainting the background object, which has an inpainting domain containing $111277$px, and results are shown in Figure \ref{fig:bust}(a)-(f).  In this case we chose to edit the automatically detected splines, in particular rotating one that was crooked.  Once again, the nicest result is probably nl-Poisson, but an extremely long computation time is required.  All other algorithms, including Bystander-Aware Criminisi and nl-means which are not shown, left noticeable artifacts.  The fully automatic version of Guidefill also leaves some artifacts, but these are largely eliminated by the adjustment of the splines.  The exception is a shock visible in the inpainted picture frame in Figure \ref {fig:bust}(f).  As we noted in Section \ref{sec:shocks}, shut artifacts are an unfortunate feature of the class of methods under consideration.

\begin{figure}
\centering
\begin{tabular}{ccc}
\subfloat[Frame 0 (pre inpainting).]{\includegraphics[width=.30\linewidth]{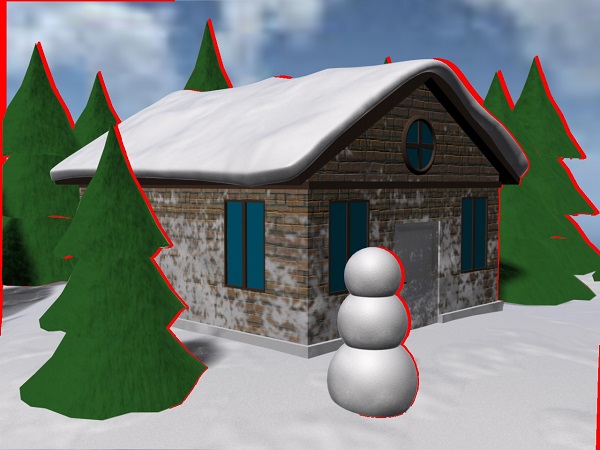}} &
\subfloat[Frame 26 (pre inpainting).]{\includegraphics[width=.30\linewidth]{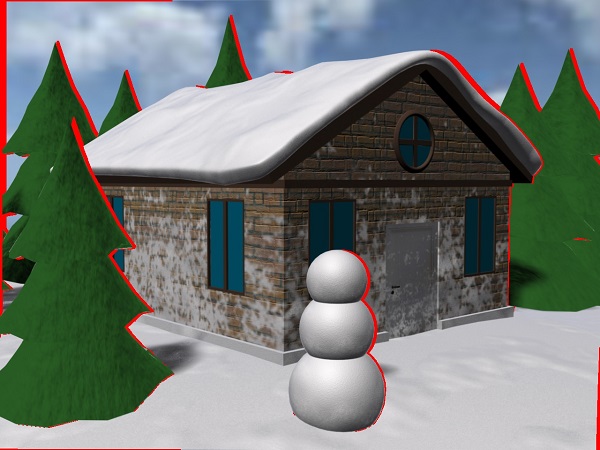}} &
\subfloat[Frame 100 (pre inpainting).]{\includegraphics[width=.30\linewidth]{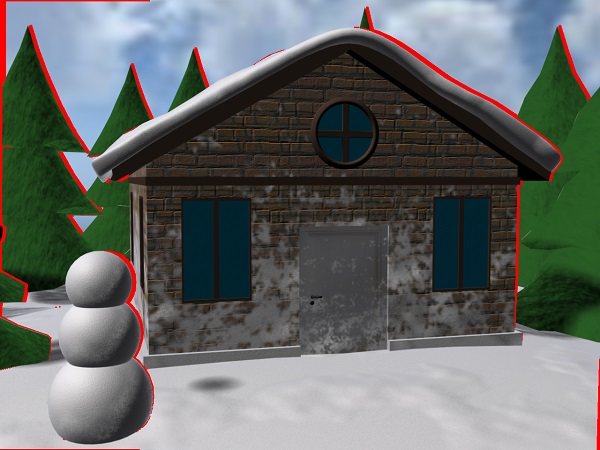}} \\
\end{tabular}
\begin{tabular}{cccc}
\subfloat[Frame 26 detail.]{\includegraphics[width=.216\linewidth]{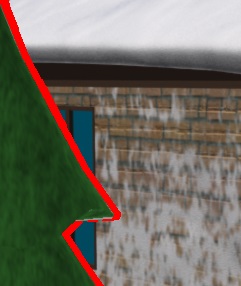}} &
\subfloat[Guidefill (pre edit).]{\includegraphics[width=.216\linewidth]{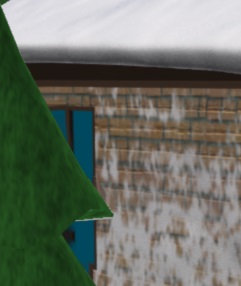}} &
\subfloat[Guidefill (post edit).]{\includegraphics[width=.216\linewidth]{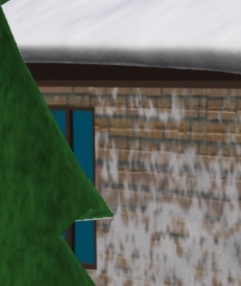}} &
\subfloat[Newson's Method.]{\includegraphics[width=.216\linewidth]{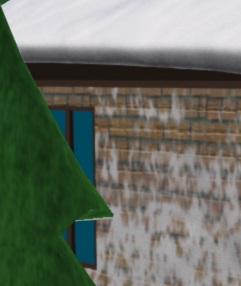}} \\
\subfloat[Frame 100 detai.l]{\includegraphics[width=.216\linewidth]{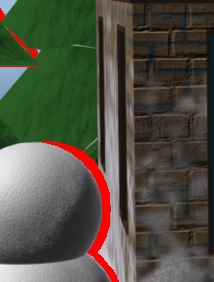}} &
\subfloat[Guidefill (no edit.)]{\includegraphics[width=.216\linewidth]{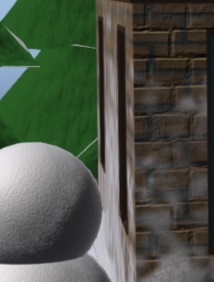}} &
\subfloat[Newson's Method.]{\includegraphics[width=.216\linewidth]{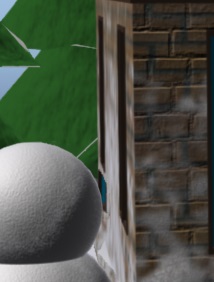}} &
\subfloat[Ground Truth.]{\includegraphics[width=.216\linewidth]{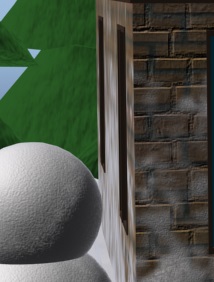}} \\
\end{tabular}
\caption{Comparison of Guidefill (19s with tracking, 31s without) and Newson's method (5hr37min) for inpainting the ``cracks'' (shown in red) arising in the 3d conversion of a HD video ($1280\mbox{px} \times 960\mbox{px} \times 101 \mbox{fr} $).  Guidefill produces artifacts such as the incorrectly extrapolated window in (e), but these can be corrected as in (f) and it is several orders of magnitude faster than Newson's method (which also required more than $16$GB of RAM in this case).  The latter produces produces very high quality results, but is prohibitively expensive and still produces a few artifacts as in (j), which the user has no recourse to correct.  A disadvantage of Guidefill is a flickering as the video is viewed through time due to the frames being inpainted independently.  The video is provided in the online supplementary material.}
\label{fig:movie}
\end{figure}

\begin{figure}
\centering
\begin{tabular}{ccc}
\subfloat[Detail one.]{\includegraphics[width=.3\linewidth]{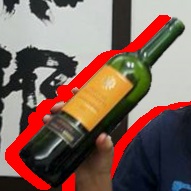}} &
\subfloat[Coherence transport - note the bending of the picture frame.]{\includegraphics[width=.3\linewidth]{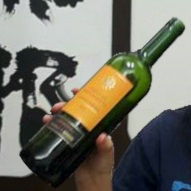}} &
\subfloat[nl-means - good result, but slow.]{\includegraphics[width=.3\linewidth]{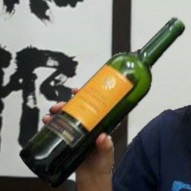}} \\
\subfloat[nl-Poisson - Chinese characters are a solid block.]{\includegraphics[width=.3\linewidth]{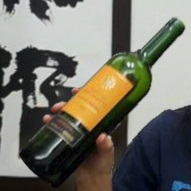}} &
\subfloat[Content-Aware Fill - distorted picture frame.]{\includegraphics[width=.3\linewidth]{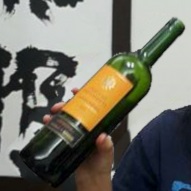}} &
\subfloat[Guidefill (before spline adjustment) - numerous issues.]{\includegraphics[width=.3\linewidth]{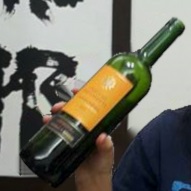}} \\
\subfloat[Guidefill (after adjustment) - issues are mostly resolved.]{\includegraphics[width=.3\linewidth]{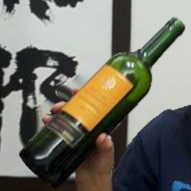}} &
\subfloat[Detail two.]{\includegraphics[width=.3\linewidth]{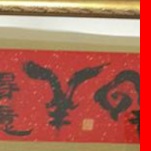}} &
\subfloat[Bystander-Aware Criminisi - a piece of the picture frame is used to extrapolate the drawing.]{\includegraphics[width=.3\linewidth]{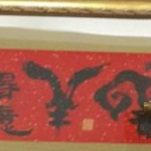}} \\
\subfloat[nl-Poisson - good result, but slow.]{\includegraphics[width=.3\linewidth]{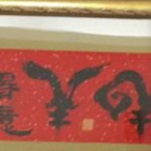}} &
\subfloat[Guidefill (before spline adjustment) - extension of drawing does not look natural.]{\includegraphics[width=.3\linewidth]{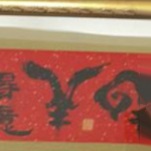}} &
\subfloat[Guidefill (after adjustment) - more believable extrapolation.]{\includegraphics[width=.3\linewidth]{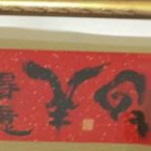}} \\
\end{tabular}
\caption{Comparison of different inpainting methods for the ``Wine'' example.  Two challenging areas are shown.}
\label{fig:wine}
\end{figure}

\begin{figure}
\centering
\begin{tabular}{ccc}
\subfloat[Detail of ``Bust''.]{\includegraphics[width=.3\linewidth]{GapsCrop.jpg}} &
\subfloat[Coherence transport.]{\includegraphics[width=.3\linewidth]{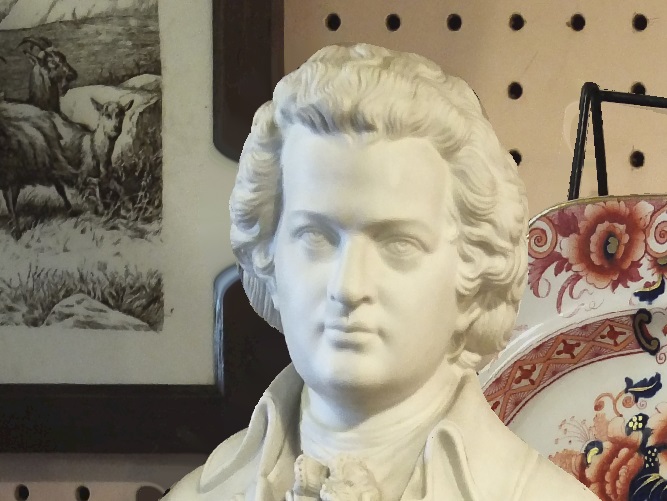}} &
\subfloat[Content-Aware Fill.]{\includegraphics[width=.3\linewidth]{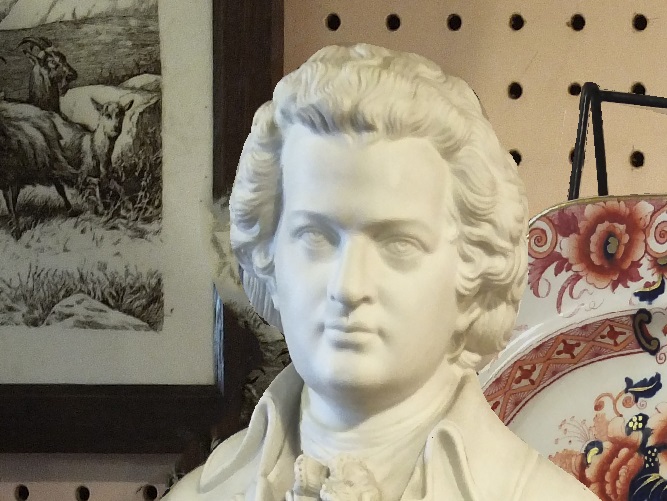}} \\
\subfloat[nl-Poisson]{\includegraphics[width=.3\linewidth]{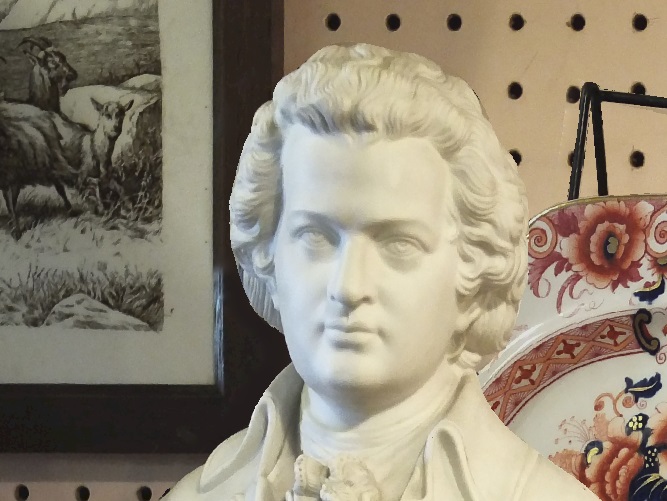}} &
\subfloat[Guidefill (before spline adjustment).]{\includegraphics[width=.3\linewidth]{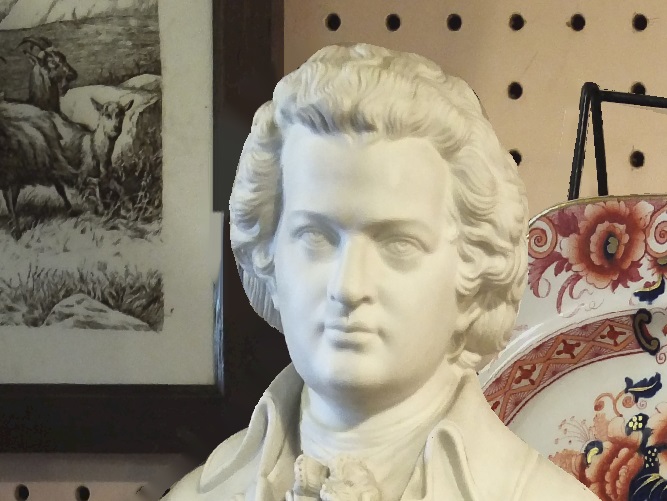}} &
\subfloat[Guidefill (after adjustment).]{\includegraphics[width=.3\linewidth]{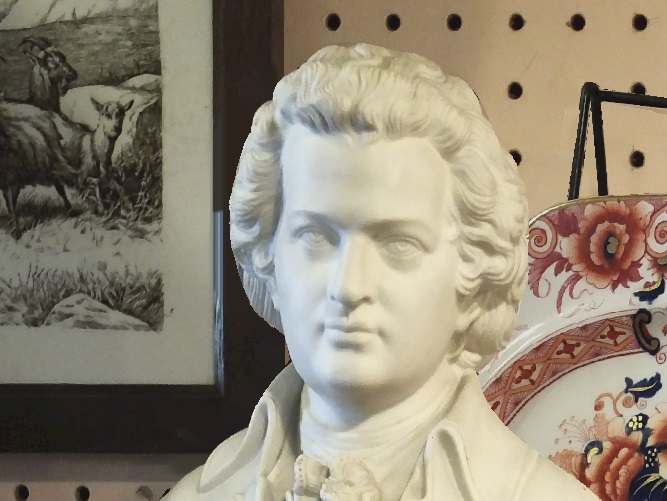}} \\
\subfloat[Detail of ``Pumpkin''.]{\includegraphics[width=.3\linewidth]{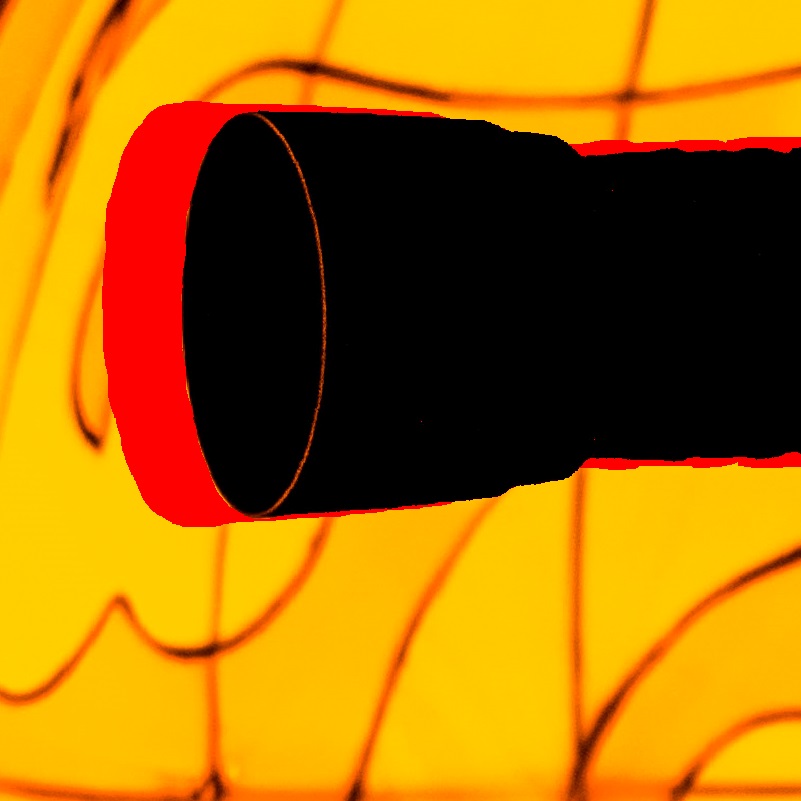}} &
\subfloat[coherence transport.]{\includegraphics[width=.3\linewidth]{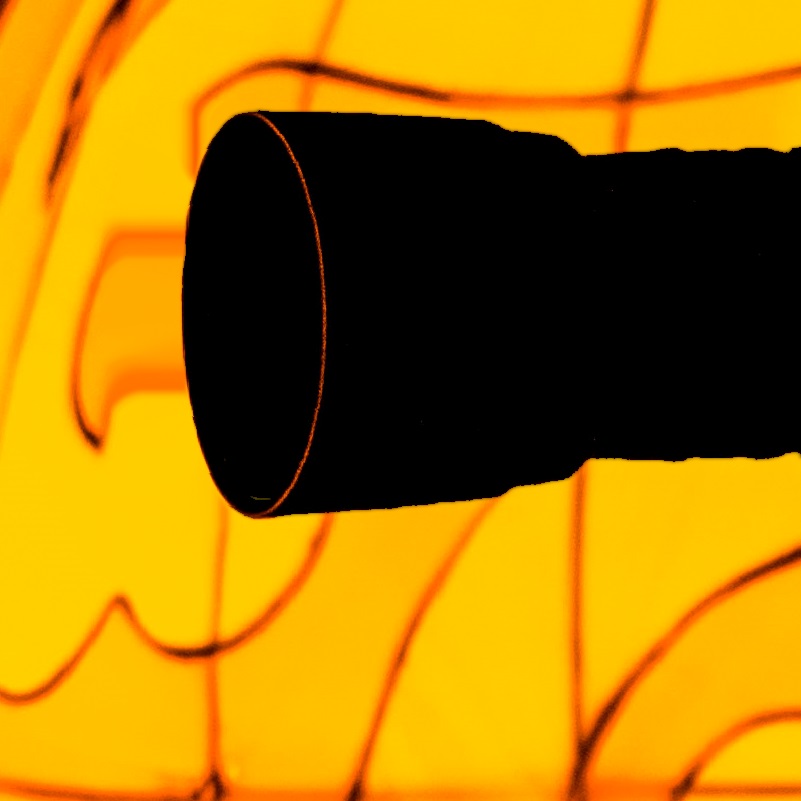}} &
\subfloat[Content-Aware Fill.]{\includegraphics[width=.3\linewidth]{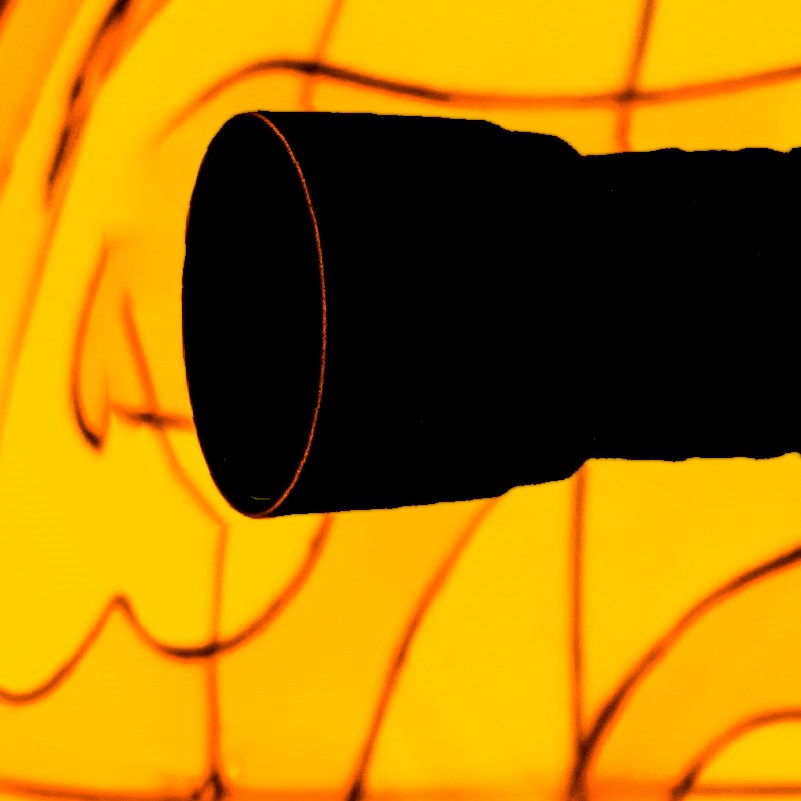}} \\
\subfloat[nl-Poisson.]{\includegraphics[width=.3\linewidth]{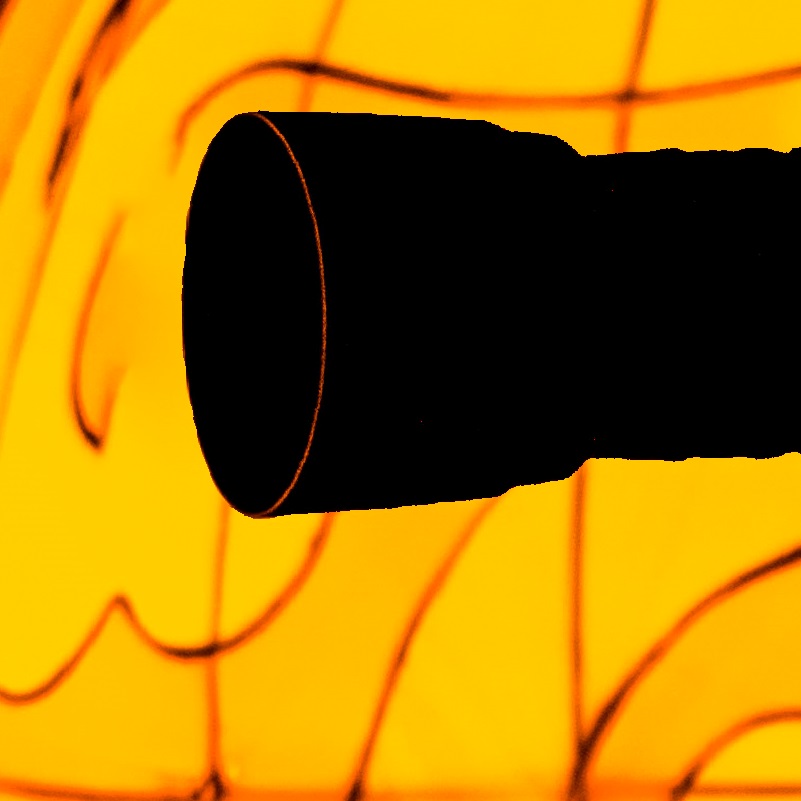}} &
\subfloat[Guidefill (before spline adjustment).]{\includegraphics[width=.3\linewidth]{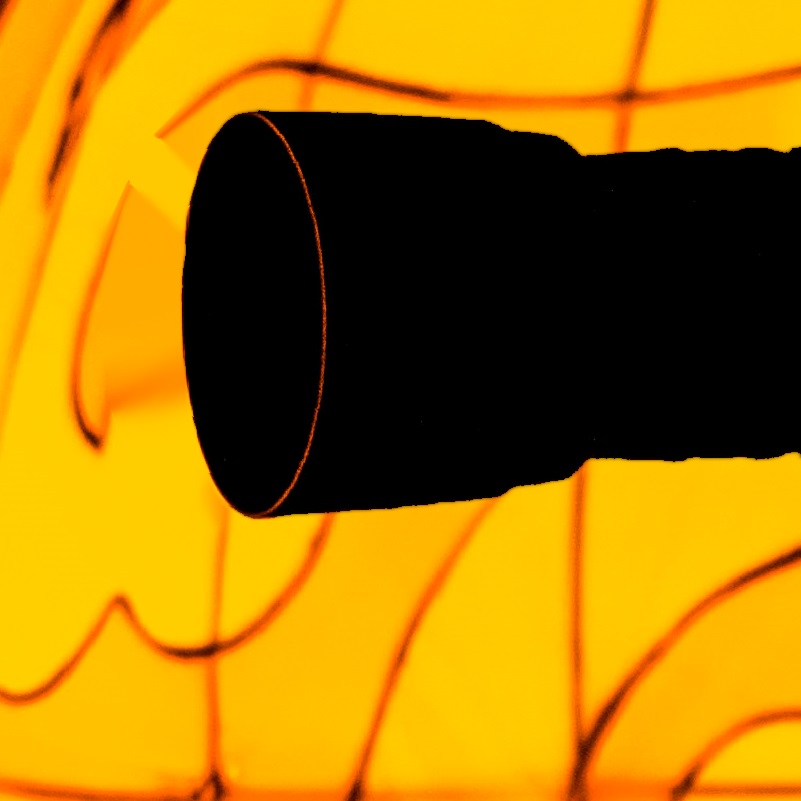}} &
\subfloat[Guidefill (after adjustment).]{\includegraphics[width=.3\linewidth]{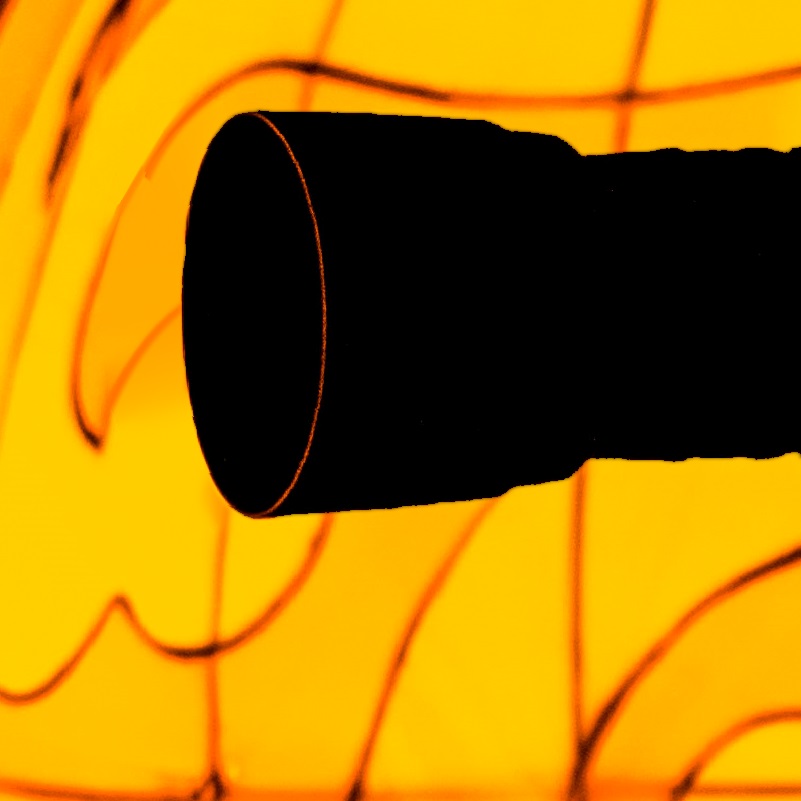}} \\
\end{tabular}
\caption{Comparison of different inpainting methods for the ``Bust'' and ``Pumpkin'' examples.  Note the shock visible in the inpainted picture frame in (f), as discussed in Section \ref{sec:shocks}.}
\label{fig:bust}
\end{figure}

\begin{table}
\caption{Timings of different inpainting algorithms used in the conversion of the three examples in Figure \ref{fig:examples}.  The inpainting domains of ``Wine'', ``Bust'', and ``Pumpkin'' and ``Planet'' contain $15184$px, $111277$px, $423549$px, and $1160899$px respectively.  ``Guidefill n.t.'' refers to Guidefill without boundary tracking, ``B.A.C.'' stands for Bystander-Aware Criminisi and ``C.T.'' refers to coherence transport.} 
\centering 
\begin{tabular}{|c | c | c | c | c | c | c |} 
\hline\hline 
 & C.T. & B.A.C. & nl-means & nl-Poisson & Guidefill n.t. & Guidefill \\ [0.5ex] 
\hline 
Wine & 340ms & 1 min 40s & 41s & 2min11s & {\bf 233ms} & 261ms \\ 
\hline 
Bust & 2.13s & 37min & 23min & 1hr 10min & 1.34s & {\bf 559ms} \\ 
\hline 
Pumpkin & 15.7s & $-$ & $-$ & $-$ & 6.66s & {\bf 1.14s} \\ 
\hline 
Planet & 28.5s & $-$ & $-$ & $-$ & 4.27s & {\bf 923ms} \\ 
\hline 
\end{tabular}
\label{table:times} 
\end{table}

\begin{figure}
\centering
\begin{tabular}{cccc}
\subfloat[]{\includegraphics[width=.21\linewidth]{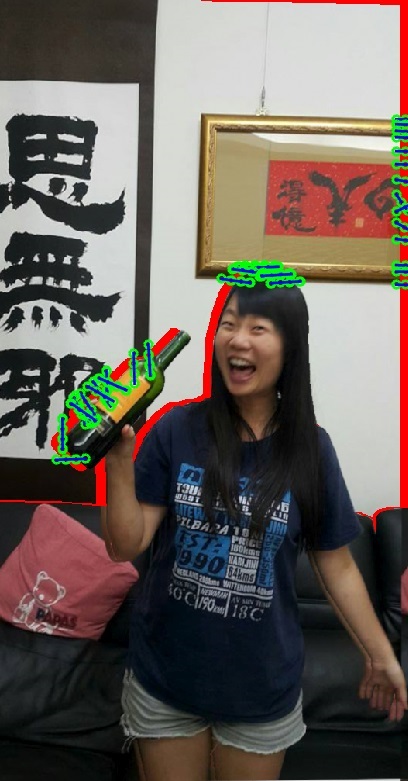}} &
\subfloat[]{\includegraphics[width=.21\linewidth]{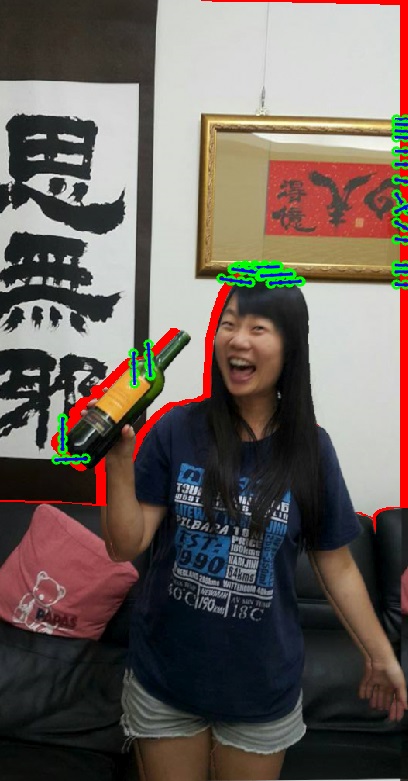}} &
\subfloat[]{\includegraphics[width=.21\linewidth]{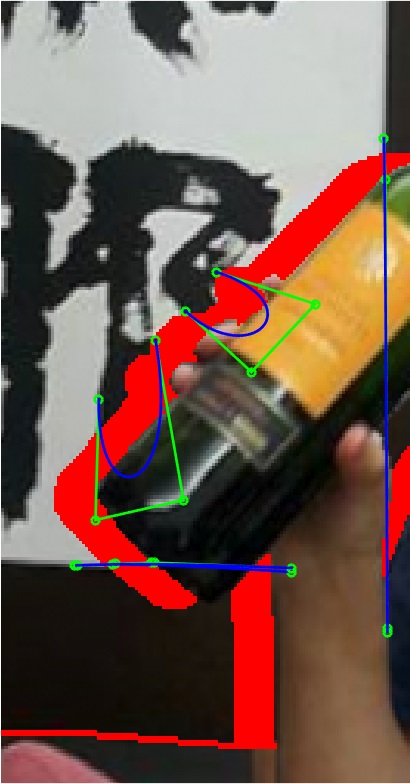}} &
\subfloat[]{\includegraphics[width=.21\linewidth]{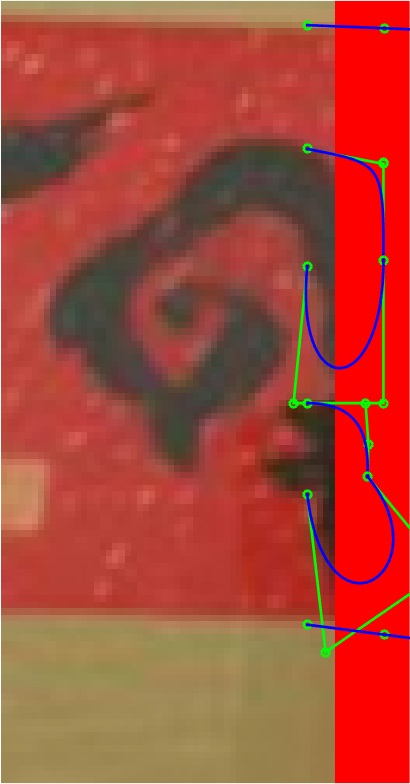}} \\
\end{tabular}
\caption{Stages of spline adjustment for the ``Wine'' example:  (a)  The automatically detected splines for the background object. (b) Undesirable splines are deleted.  (c) Deleted splines are replaced with new splines, drawn by hand, which form a plausible extension of the disoccluded characters.  (d)  Some of the remaining splines on the painting in the upper right corner are edited to form a more believable extension.}
\label{fig:splines}
\end{figure}

Our third example ``Pumpkin'' is a very large $4000 \times 4000$px image.  Timings are reported only for the pumpkin object, which has an inpainting domain containing $423549$px.  Results are shown in Figure \ref{fig:bust}(g)-(l).  We ran nl-Poisson on only the detail shown in Figure \ref{fig:bust}(g), because it did not finish within two hours when run on the image as a whole.  In this case we edited the automatically detected splines as shown in Figure \ref{fig:guidefield}(a)-(b).  In doing so we are able to recover smooth arcs that most fully automatic methods would struggle to produce.  Guidefill in this case is not only the fastest method by far, it also produces the nicest result.  In this example we also see the benefits of our boundary tracking algorithm, where it leads to a speed up of a factor of $2-3$.  The gains of boundary tracking are expected to be greater for very large images where the pixels greatly outnumber the available processors.

\begin{figure}[H]
\centering
\begin{tabular}{cccc}
\subfloat[detail of ``Planet''.]{\includegraphics[width=.21\linewidth]{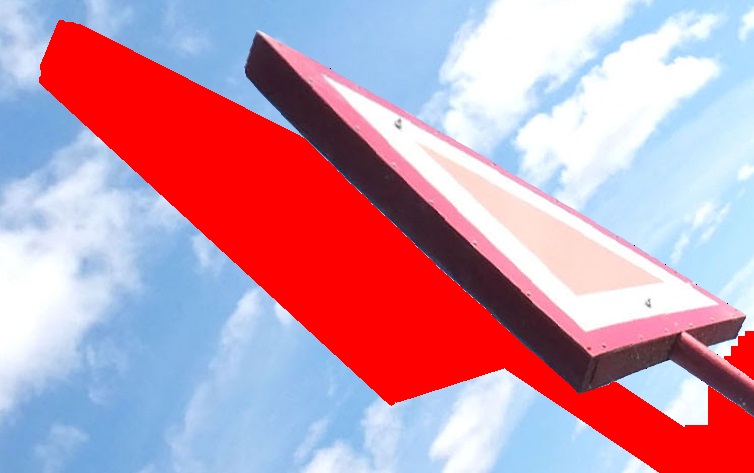}} &
\subfloat[Content-Aware Fill.]{\includegraphics[width=.21\linewidth]{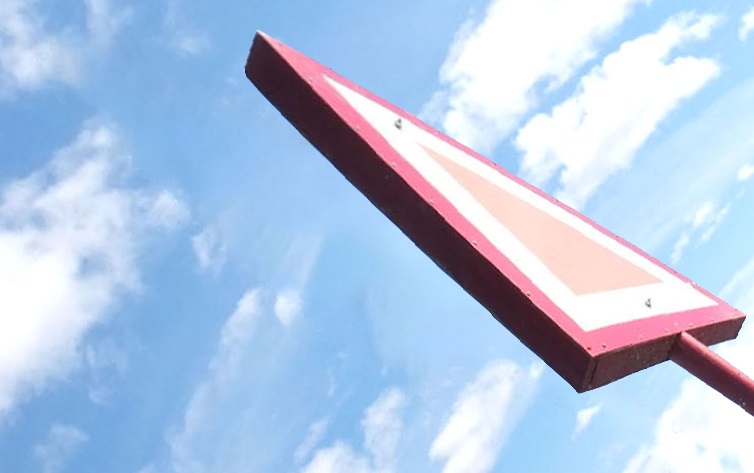}} &
\subfloat[coherence transport.]{\includegraphics[width=.21\linewidth]{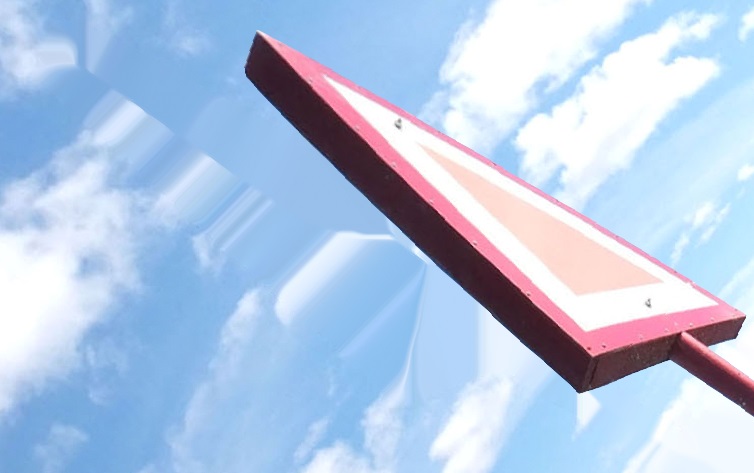}} &
\subfloat[Guidefill (no spline adjustment).]{\includegraphics[width=.21\linewidth]{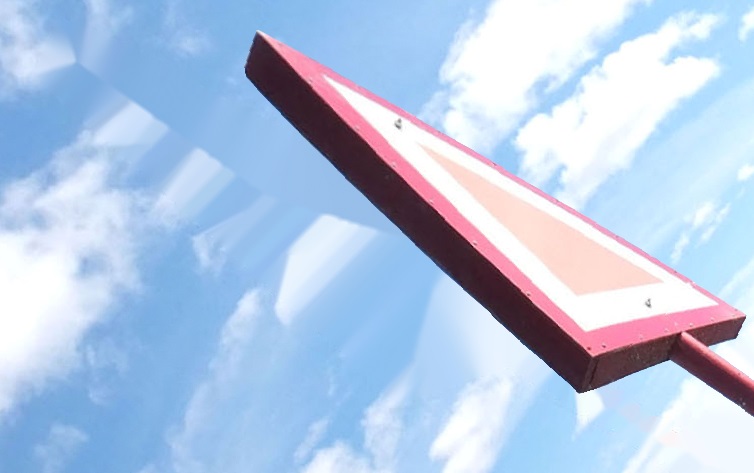}} \\
\end{tabular}
\caption{Comparison of different inpainting methods for the ``Planet'' example.  In this case geometric methods leave noticeable artifacts and exemplar-based methods like Content-Aware Fill are a better choice.}
\label{fig:planet}
\end{figure}

Our final example is a fun example that illustrates how 3D conversion may be used to create ``impossible'' 3D scenes.  In this case the image is a $5000 \times 5000$px ``tiny planet'' panorama generated by stitching together dozens of photographs.  The choice of projection creates the illusion of a planet floating in space - however, a true depth map would appear as an elongated finger, as in reality the center of the sphere is only a few feet from the camera, while its perimeter is at a distance of several kilometers. In order to preserve the illusion we created fake spherical 3D geometry.  See the online supplementary material for the full 3D effect - here we show only a detail in Figure \ref{fig:planet}.  In this example the inpainting domain is relatively wide and the image is dominated by texture.  As a result geometric methods are a bad choice and exemplar-based methods are more suitable.

\subsection{Validation of Complexity Analysis} \label{sec:complexityValidation}
As stated in Section \ref{sec:complexity}, our analysis assumes that Guidefill is implemented on a parallel architecture consisting of $p$ identical processors acting in parallel.  In reality, GPU architecture is more complex than this, but as a rough approximation we assume $p=20480$, the maximum number of resident threads allowed on our particular GPU.  See the online supplementary material for a deeper discussion.

In order to explore experimentally the time and processor complexity of Guidefill, we considered the continuum problem of inpainting the line $ 0.45 \leq y \leq 0.55$ across the inpainting domain $D = [0.4,3.96] \times [0.2, 0.8]$ with image domain $\Omega = [0,4] \times [0,1]$.  This continuum problem was then rendered at a series of resolutions varying from as low as $280 \times 70$px all the way up to $4000 \times 1000$px.  The resulting series of discrete inpainting problems were solved using Guidefill.  For simplicity, smart order was disabled and splines were turned off.  In each case we measured the execution time $T(N)$ of Guidefill as well as the maximum number of requested threads $P(N)$, with and without tracking.  Results are shown in Figure \ref{fig:complexityGraphs} - note the loglog scale.  In Figure \ref{fig:complexityGraphs}(b) we have also indicated the value of $p$ for comparison - note that for Guidefill without tracking we have $P(N) \gg p$ for all but the smallest problems, but for Guidefill with tracking we have $P(N) < p$ up until $N \approx 2 \times 10^5$.

\begin{figure}[H]
\centering
\begin{tabular}{cc}
\subfloat[Time Complexity]{\includegraphics[width=.45\linewidth]{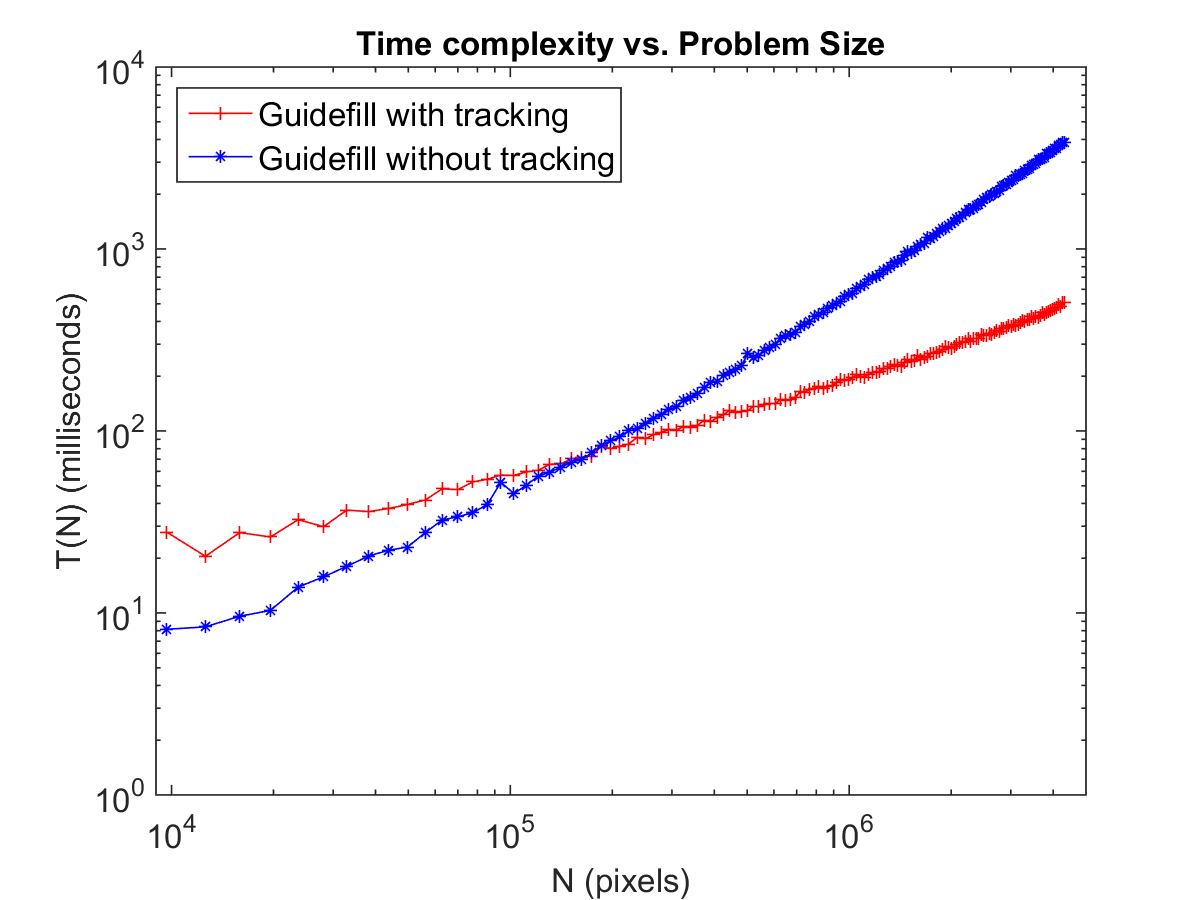}} &
\subfloat[Processor Complexity]{\includegraphics[width=.45\linewidth]{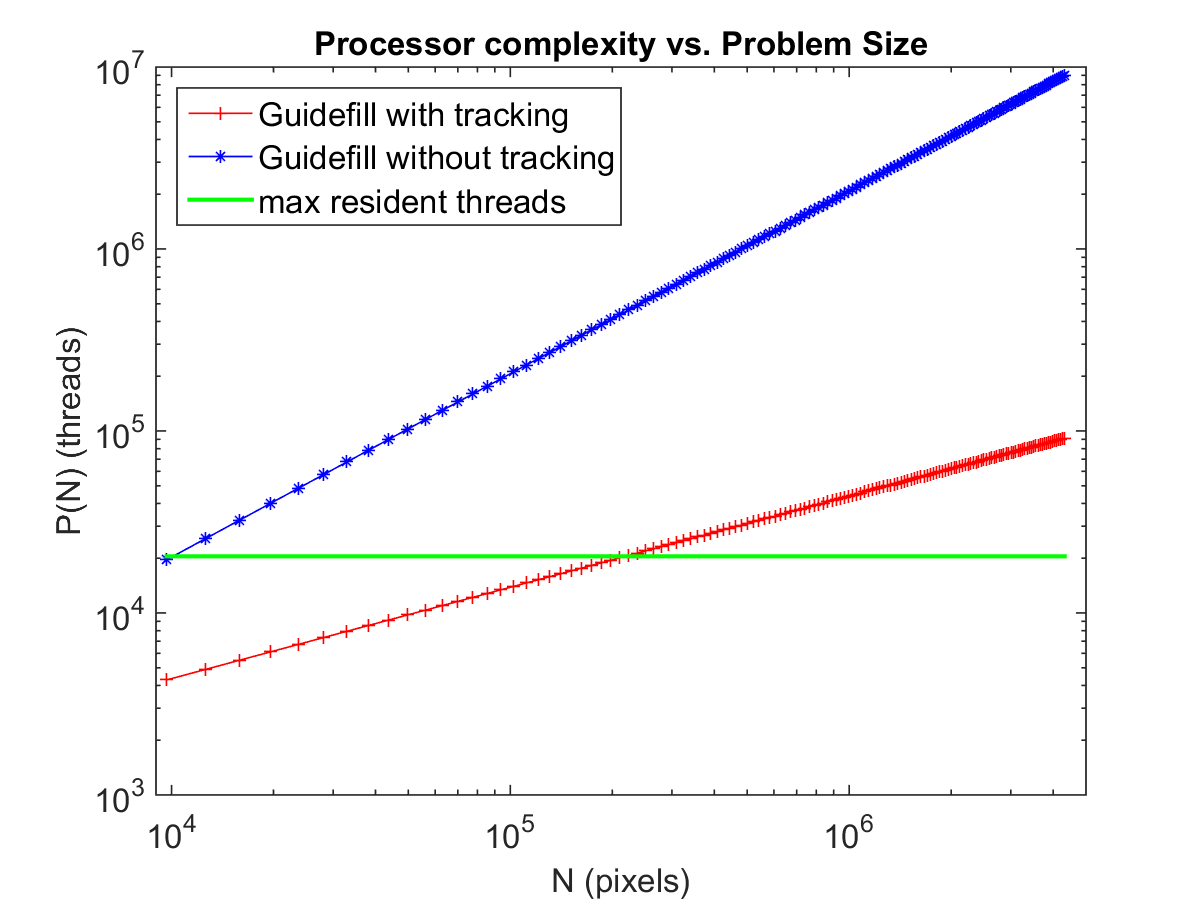}} \\
\end{tabular}
\caption{{\bf Experimental time complexity $T(N)$ and processor complexity $P(N)$ of Guidefill with and without boundary tracking:}  The continuum inpainting problem
$\Omega = [0,4] \times [0,1]$, $D = [0.4, 3.96] \times [0.2, 0.8]$ was discretized at a variety of resolutions leading to inpainting domains with $N:=|D_h|$ varying from
$N\approx 10^4$px up to $N\approx 10^6$px.  Results are given on a loglog scale to emphasize the approximate power law $T(N) \approx AN^{\alpha}$, $P(N) \approx BN^{\beta}$.  A least squares fit gives $\alpha = 1.1$, $\beta = 1.0$ without tracking, and $\alpha = 0.54$, $\beta = 0.5$ with tracking (see Section \ref{sec:GPU} for a review of these terms).  The superior scaling law of Guidefill with tracking kicks in around $N \approx 2\cdot 10^5$. Processor complexity is compared with the maximum number of resident threads (green line).}
\label{fig:complexityGraphs}
\end{figure}

Based on Theorem \ref{thm:complexity} and Theorem \ref{thm:complexity2} for Guidefill without tracking we expect $T(N) \in O(N^{1.5})$ for all $N$, but for Guidefill with tracking we expect \\$T(N) \in O(N^{0.5} \log(N))$ for $N$ up to about $10^5$px (where $P(N) \approx p$), with somewhat worse performance as $N$ grows larger, converging to $O(N\log(N))$ when $P(N) \gg p$.  To test these expectations we assume a power law of $T(N) \approx A N^{\alpha}$ and solve for $\alpha$ using least squares.  The results are $\alpha = 0.54$ and $\alpha = 1.10$ for Guidefill with and without tracking respectively.  Assuming a similar power law $P(N) \approx B N^{\beta}$ gives $\beta = 1.0$, $\beta = 0.5$ for Guidefill without and with tracking respectively.  These results suggest that the analysis in Section \ref{sec:complexity} do a reasonable job of predicting the rough behaviour of our method in practice.  

\section{Conclusions} \label{sec:conclusion}

We have presented a fast inpainting method suitable for use in the hole-filling step of a 3D conversion pipeline used in film, which we call Guidefill. Guidefill is non-texture based, exploiting the fact that the inpainting domains in 3D conversion tend to be in the form of a thin ``crack'' such that texture can often be neglected. Its fast processing time and its setup allowing intuitive, user-guided amendment of the inpainting result render Guidefill into a user-interactive inpainting tool. A version of Guidefill is in use by the stereo artists at the 3D conversion company Gener8, where it has been used in major Hollywood blockbusters such as Mockingjay, Pan, and Maleficent.  In those cases where it is suitable, especially scenes dominated by structure rather than texture and/or thin inpainting domains, Guidefill produces results that are competitive with alternative algorithms in a tiny fraction of the time. In practice, Guidefill was found to be particularly useful for movies with many indoor scenes dominated by structure, and less useful for movies taking place mainly outdoors, where texture dominates. Because of its speed, artists working on a new scene may apply our method first. If the results are unsatisfactory, they can edit the provided splines or switch to a more expensive method.

In addition to its use as an algorithm for 3D conversion, Guidefill belongs to a broader class of fast geometric inpainting algorithms also including Telea's Algorithm \cite{Telea2004} and coherence transport \cite{Marz2007, Marz2011}.  Similarly to these methods, Guidefill is based on the idea of filling the inpainting domain in shells while extrapolating isophotes based on a transport mechanism.  However, Guidefill improves upon these methods in several important respects including the elimination of two forms of kinking of extrapolated isophotes. In one case this is done by summing over a non-axis aligned balls of ``ghost pixels'', which as far as we know have never been done in the literature.  

We have also presented a theoretical analysis of our method and methods like it, by considering a relevant continuum limit.  Our limit, which is different from the one explored in \cite[Theorem 1]{Marz2007}, is able to theoretically explain some of the advantages and disadvantages of both our method and coherence transport.  In particular, our analysis predicts a kinking phenomena observed in coherence transport in practice but not accounted for by the analysis in \cite{Marz2007}.  It is also able to explain how our ghost pixels are able to fix this problem, but also sheds light on a new problem that they introduce - the progressive blurring of an extrapolated signal due to repeated bilinear interpolation operations.  Nonetheless, our analysis predicts that this latter effect becomes less and less significant as the image resolution increases, and our method is designed with HD in mind.  More details of our analytic framework are explored in our forthcoming paper \cite{Forthcoming}

In order to make our method as fast as possible we have implemented it on the GPU where we consider two possible implementations. A naive implementation, suitable for small images, simply assigns one GPU thread per pixel.  For our second implementation, we propose an algorithm to track the inpainting interface as it evolves, facilitating a massive reduction in the number of threads required by our algorithm. This does not lead to speed up by a constant factor - rather, it changes the complexity class of our method, leading to improvements that become arbitrarily large as $N=|D_h|$ increases. In practice we observed a slight decrease in speed (compared with the naive implementation) for small images ($N \lesssim 10^5$px), and gains ranging from a factor of $2-6$ for larger images. 

A current disadvantage of our method is that, in order to keep execution times low, temporal information is ignored. In particular, splines are calculated for each frame separately, and inpainting is done on a frame by frame basis without consideration for temporal coherence. As a result of the former, artists must perform separate spline adjustments for every frame. In practice we find that only a minority of frames require adjustment, however one potential direction for improvement is to design a system that proposes a series of {\em animated} splines to the user, which they may then edit over time by adjusting control points and setting key frames. Secondly, a procedure for enforcing temporal coherence, if it could be implemented without significantly increasing the runtime, would be beneficial. However, these improvements are beyond the scope of the present work.

\section{Acknowledgements}
The authors are grateful to the insightful comments of three anonymous referees, which greatly improved the clarity of the text.  LRH acknowledges support from the Cambridge Commonwealth Trust and the Cambridge Center for Analysis.  CBS acknowledges support from Leverhulme Trust project on Breaking the non-convexity barrier, EPSRC grant Nr. EP/M00483X/1, the EPSRC Centre Nr. EP/N014588/1 and the Cantab Capital Institute for the Mathematics of Information.

\bibliographystyle{siamplain}
\bibliography{SIAMBib2}
\end{document}